\documentclass{article}

\usepackage{lipsum}
\usepackage{titletoc}
\usepackage{microtype}
\usepackage{graphicx}
\usepackage{mathtools}
\usepackage{subcaption}
\usepackage{booktabs} 
\usepackage{float}
\usepackage{hyperref}

 \usepackage[preprint]{icml2026}

\usepackage{amsmath}
\usepackage{amssymb}
\usepackage{mathtools}
\usepackage{amsthm}
\usepackage{wrapfig}

\usepackage[capitalize,noabbrev]{cleveref}

\usepackage[textsize=tiny]{todonotes}

\usepackage{framed}
\colorlet{shadecolor}{orange!15}

\usepackage{enumitem} 

\theoremstyle{plain}
\newtheorem{theorem}{Theorem}[section]
\newtheorem{proposition}[theorem]{Proposition}
\newtheorem{lemma}[theorem]{Lemma}

\theoremstyle{definition}

\newtheorem{assumption}[theorem]{Assumption}
\theoremstyle{remark}
\newtheorem{remark}[theorem]{Remark}

\usepackage[textsize=tiny]{todonotes}
\usepackage{empheq}
\usepackage{xspace}
\newcommand{\ours}{{\sc LITE}\xspace}
\newcommand{\oursmuon}{{\sc Muon-LITE}\xspace}
\newcommand{\ourssoap}{{\sc SOAP-LITE}\xspace}

\newcommand{\ie}{\emph {i.e.}\xspace}
\newcommand{\eg}{\emph {e.g.}\xspace}

\icmltitlerunning{Accelerating LLM Pre-Training through Flat-Direction Dynamics Enhancement}

\begin{document}

\twocolumn[
  \icmltitle{Accelerating LLM Pre-Training through Flat-Direction Dynamics Enhancement}



  \icmlsetsymbol{equal}{*}
  \icmlsetsymbol{lead}{$\S$}
  \icmlsetsymbol{correspond}{$\dag$}

  \begin{icmlauthorlist}
    \icmlauthor{Shuchen Zhu}{equal,pku}
    \icmlauthor{Rizhen Hu}{equal,pku}
    \icmlauthor{Mingze Wang}{lead,pku}
    \icmlauthor{Mou Sun}{zj}
    \icmlauthor{Xue Wang}{mt}
    \icmlauthor{Kun Yuan}{correspond,pku}
    \icmlauthor{Zaiwen Wen}{pku}
  \end{icmlauthorlist}

  \icmlaffiliation{pku}{Peking University}
  \icmlaffiliation{mt}{Meituan, Beijing}
  \icmlaffiliation{zj}{Zhejiang Lab, Zhejiang}

  \icmlcorrespondingauthor{Kun Yuan}{kunyuan@pku.edu.cn}

  \icmlkeywords{Machine Learning, ICML}

  \vskip 0.3in
]



\printAffiliationsAndNotice{\icmlEqualContribution \icmlProjectLead}  

\begin{abstract}
Pre-training Large Language Models requires immense computational resources, making optimizer efficiency essential. The optimization landscape is highly anisotropic, with loss reduction driven predominantly by progress along flat directions. While matrix-based optimizers such as Muon and SOAP leverage fine-grained curvature information to outperform AdamW, their updates tend toward isotropy---relatively conservative along flat directions yet potentially aggressive along sharp ones. To address this limitation, we first establish a unified Riemannian Ordinary Differential Equation (ODE) framework that elucidates how common adaptive algorithms operate synergistically: the preconditioner induces a Riemannian geometry that mitigates ill-conditioning, while momentum serves as a Riemannian damping term that promotes convergence. Guided by these insights, we propose \textbf{LITE}, a generalized acceleration strategy that enhances training dynamics by applying larger Hessian damping coefficients and learning rates along flat trajectories. Extensive experiments demonstrate that LITE significantly accelerates both Muon and SOAP across diverse architectures (Dense, MoE), parameter scales (130M--1.3B), datasets (C4, Pile), and learning-rate schedules (cosine, warmup-stable-decay). Theoretical analysis confirms that LITE facilitates faster convergence along flat directions in anisotropic landscapes, providing a principled approach to efficient LLM pre-training.  The code  is 
available at \href{https://github.com/SHUCHENZHU/LITE}{https://github.com/SHUCHENZHU/LITE}. 
\end{abstract}

\section{Introduction}
Large language models (LLMs) have   revolutionized artificial intelligence with exceptional capabilities, yet the pre-training procedure is computationally intensive due to massive model and data scales. Improving pre-training efficiency   is essential for continued scaling, and the optimizer choice is crucial. Currently, AdamW~\cite{kingma2014adam,loshchilov2017decoupled}  serves as the  standard in most LLM pre-training pipelines, favored for its simplicity and efficiency. However, it uses coordinate-wise scaling, which  acts as a diagonal preconditioner and  ignores parameter correlations within the Hessian, limiting the  ability to navigate complex optimization landscapes.

To address the limitation of the diagonal preconditioning, algorithms like Shampoo~\cite{pmlr-v80-gupta18a,shi2023distributeddataparallelpytorchimplementation} and KFAC~\cite{pmlr-v37-martens15} introduce matrix-based preconditioners to capture richer geometric structures, demonstrating superior convergence over AdamW in traditional deep learning tasks. Advancing this paradigm to the scale of LLM pre-training, a new wave of matrix-based  optimizers, such as Muon \cite{jordan2024muon} and SOAP \cite{vyas2025soap}, has emerged, achieving significant performance gains over AdamW. Notably, Muon has already achieved validated success in industrial-scale training scenarios~\cite{liu2025muon,kimiteam2025kimik2openagentic}.
These methods signify a paradigm shift towards exploiting matrix-level curvature information to accelerate convergence.

Despite the empirical success of these advanced optimizers, significant  limitations remain: 
\begin{enumerate}[leftmargin=1em]
    \item \textbf{Preconditioner-induced isotropic update magnitudes.} Although adaptive methods correct descent directions by effective preconditioners, prior studies ~\cite{pmlr-v97-staib19a,NEURIPS2020_f3f27a32,liu2025muon,lu2025understandingsoapperspectivegradient,wang2025muonoutperformsadamtailend} suggest that the {\em update magnitudes} 
    often tend towards {\em isotropic}, i.e., of comparable scale across different Hessian eigen-directions.
    This behavior is suboptimal in the ill-conditioned landscape, which is  too cautious along flat directions while   potentially aggressive along sharp ones. 
    \item \textbf{Inadequate  momentum mechanisms for non-convexity.} Constrained by the simple linear nature of Exponential Moving Average (EMA),  which inherently imposes an isotropic damping effect,   existing momentum  schemes inadequately exploit second-order information, leaving them susceptible to ill-conditioned curvature. While techniques like Nesterov acceleration implicitly  incorporates Hessian-driven damping  \cite{Shibin2021understanding}, the  design is largely inherited from convex optimization paradigms, rendering them insufficient for effectively accelerating convergence along non-convex directions.
\end{enumerate}  

Addressing these limitations, however, presents a fundamental challenge: the lack of a unified theoretical framework that provides a comprehensive understanding of \emph{how preconditioning and momentum jointly influence training dynamics in anisotropic loss landscapes}. Such a framework remains elusive, as current analyses typically treat these components in isolation, failing to elucidate their synergistic mechanisms in non-convex optimization. Bridging this theoretical gap is essential for designing superior optimizers that can simultaneously rectify update magnitudes and leverage curvature-aware momentum. These issues motivate the following questions:

\begin{itemize}[leftmargin=1em]
    \item Can we establish a unified  theoretical understanding of the  synergistic roles  of  the momentum and preconditioner  in adaptive optimization algorithms?
    \item Leveraging  this  foundation, how can we   rectify the inadequate isotropic update and refine the momentum mechanism to achieve superior efficiency  in pre-training LLMs?
\end{itemize}

{\bf Our contributions} are as follows:
\begin{itemize}[leftmargin=1em]
    \item  We propose a {\bf unified Riemannian ODE framework} that encapsulates prevalent adaptive optimizers for  LLM pre-training  (\eg  AdamW, Lion, Muon, SOAP) as well as their Nesterov-accelerated counterparts. This framework elucidates the synergistic mechanism between preconditioning and momentum from a continuous time  manifold optimization perspective: the preconditioner induces a Riemannian geometry that mitigates the landscape's ill-conditioning, while momentum functions as a \textit{Riemannian damping} to foster  optimization    within this   metric. 
    
    \item Building on this framework, we propose  \textbf{LITE}, a generalized   strategy  for acce\underline{\textbf{L}}erating adapt\underline{\textbf{I}}ve op\underline{\textbf{T}}imizers in  LLM pr\underline{\textbf{E}}-training. Leveraging our ODE analysis, \ours applies {\em larger Hessian damping coefficients and learning rates in the flat directions}, which enhances momentum accumulation and amplifies update magnitudes therein.   
    This approach is designed to \textbf{accelerate  training dynamics along the flat directions} that dominate loss reduction within the ill-conditioned landscape, thereby  enhancing the  pre-training performance.

    \item We conduct \textbf{extensive empirical evaluations} on the acceleration performance of \ours across diverse LLM pre-training settings, spanning different base optimizers (Muon and SOAP), model architectures (LLaMA and  QwenMoE), datasets (C4 and Pile), and learning rate schedules (cosine and warmup-stable-decay), with model sizes ranging from 0.13B to 1.3B parameters. Our results demonstrate that \ours-accelerated optimizers achieve remarkable loss reductions  and exhibit more favorable scaling laws  over standard baselines, notably attaining a 2$\times$ speedup in long-horizon training (\cref{fig:scaling_law}). This indicates the potential of \ours for superior scalability to larger models and extended token budgets. 

\begin{figure}[!ht]
    \centering
    \includegraphics[width=0.485\linewidth]{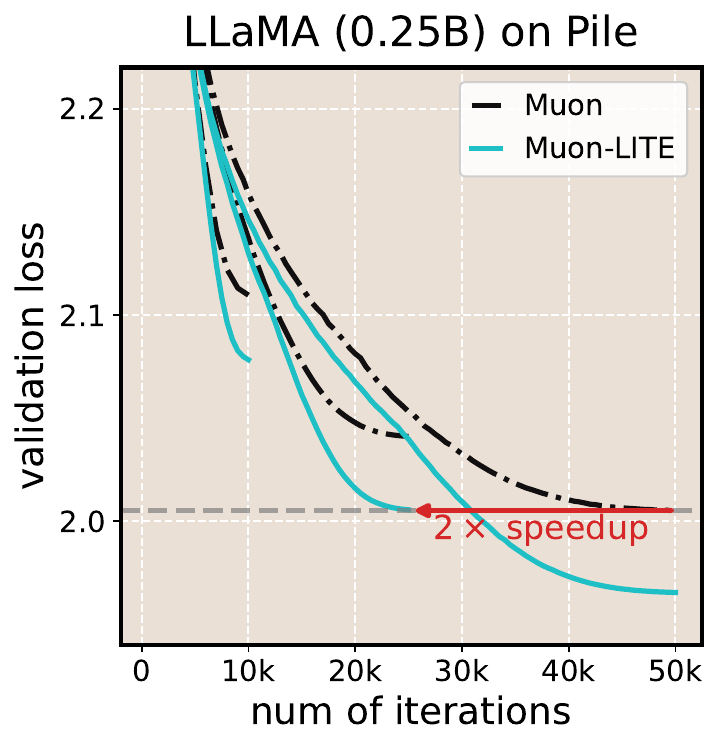}
    \includegraphics[width=0.505\linewidth]{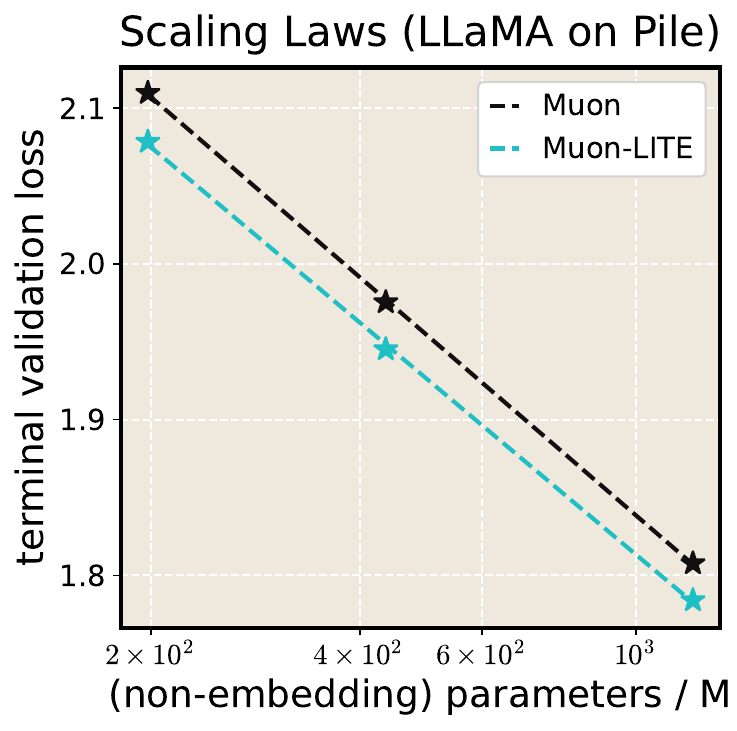}
    \caption{\oursmuon exhibits superior scaling behavior across varying token budgets (left) and model sizes (right)   compared to Muon.}
    \label{fig:scaling_law}
\end{figure}

    \item  We provide a {\bf theoretical analysis} of the training dynamics of \ours within anisotropic landscapes, showing that it boosts dynamics in   flat directions and facilitates optimization, thereby supporting our design intuition. 
\end{itemize}

\section{Related Work}

\paragraph{Accelerated Optimizers for LLM pre-training.}

Adaptive optimizers serve as the cornerstone of modern LLM pre-training. One stream of research focuses on refining the preconditioner: moving beyond simple element-wise scaling of AdamW~\cite{loshchilov2017decoupled}, methods like ~\cite{liu2024sophia,wang2025the} incorporate explicit curvature estimation, while matrix-based approaches~\cite{jordan2024muon,vyas2025soap,pethick2025training,liu2025cosmoshybridadaptiveoptimizer,lau2026polargradclassmatrixgradientoptimizers} leverage richer geometric structures to better approximate the curvature. Complementing these geometric improvements are refinements to the momentum mechanism, spanning multi-timescale momentum~\cite{pagliardini2025the}, Nesterov-type acceleration~\cite{xie2023adan,yuan2025mars}, and various momentum correction schemes~\cite{huang2025stablespamtrain4bitstably,liang2025cautiousoptimizersimprovingtraining}.

\paragraph{Training Dynamics and Loss Landscape in Deep Learning.}\label{sec:eos}
Prior studies \cite{pmlr-v97-ghorbani19b,yao2020pyhessian,zhang2024why,su2025isotropiccurvaturemodelunderstanding} elucidate the \emph{ill-conditioned and anisotropic} nature of the deep learning loss landscape. Specifically, the Hessian spectrum is dominated by a massive bulk of near-zero and negative eigenvalues (referred to as \emph{flat directions}), while the large positive eigenvalues are significantly greater in magnitude but sparse in number (referred to as \emph{sharp directions}),   as schematically illustrated in \cref{fig:Schematic_illustration} (left). 
Recent investigations~\citep{song2025does, cohen2025understanding, wen2025understanding, wang2025the} into the training dynamics of gradient-based algorithms on this landscape have led to the consensus  
that the  anisotropic landscape induces a distinct {\em time-scale separation} in the training dynamics (\cref{fig:Schematic_illustration} (right)): \textbf{1) Fast dynamics:} Along \emph{sharp} directions, the dynamics exhibit rapid but non-divergent oscillations, which dictate training stability yet contribute minimally to loss reduction. \textbf{2) Slow  dynamics:} Along \emph{flat} directions, the dynamics evolves steadily but  slowly, dominating total loss reduction.

Leveraging these insights, our \ours approach is
designed to accelerate training by boosting  the slow training dynamics along   flat  directions (see schematic in \cref{fig:Schematic_illustration} (right)).
More related works are deferred to Appendix \ref{app:related_works}.

\begin{figure}[!htb]
    \centering
\includegraphics[width=0.4\linewidth]{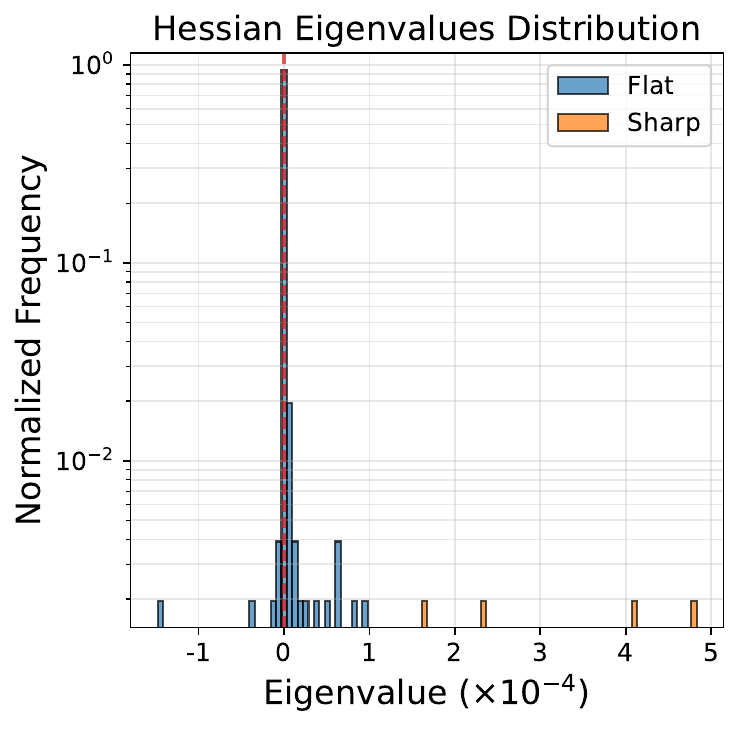} 
\includegraphics[width=0.58\linewidth]{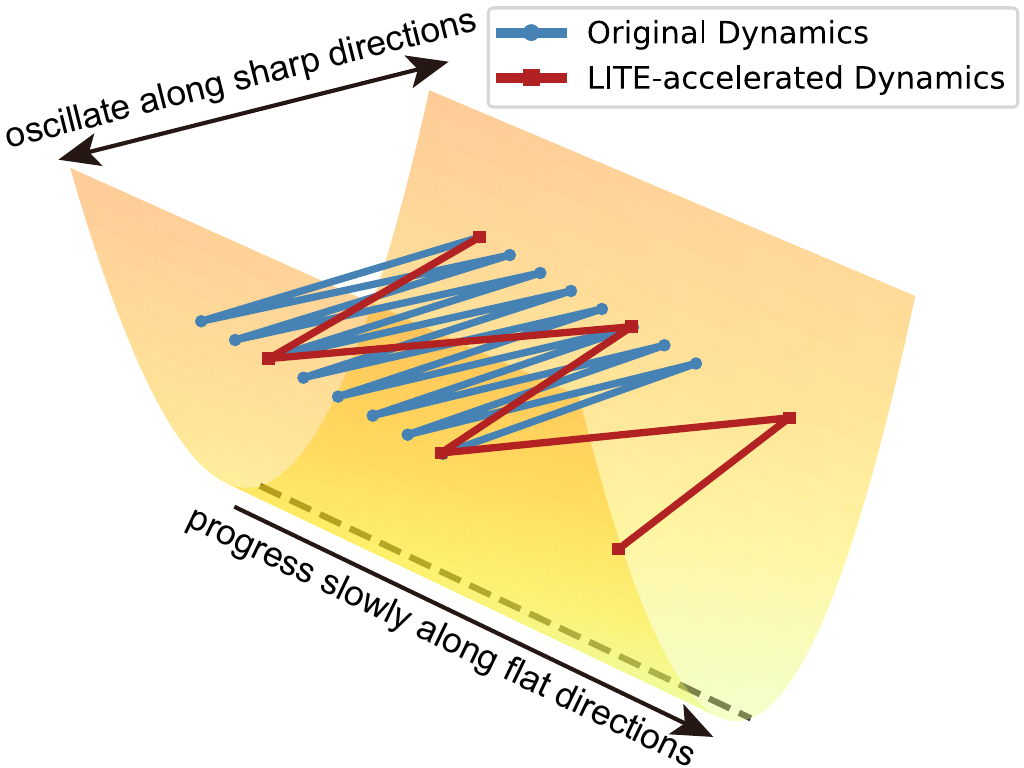}
    \vspace{-.1cm}
    \caption{(left)  Hessian eigenvalue  distribution  of an  \texttt{up\_proj} FFN block in a toy LLaMA model; (right) Schematic illustration of the time-scale separation property and the acceleration mechanism of the \ours approach. }
\label{fig:Schematic_illustration}
    \vspace{-.3cm}
\end{figure}
 
\section{Preliminary}
\paragraph{Notations.} 
Let $v_k$ denote the $k$-th component of a vector $v$. For a positive semi-definite matrix $F$, we define the induced inner product and norm as $\langle u,v\rangle_F=u^\top F v$ and $\|u\|_F=\sqrt{u^\top F u}$ respectively, omitting the subscript when $F=I$. Given a matrix-valued function $F: \mathbb{R}^p\to \mathbb{R}^{p\times p}$, its directional derivative is written as $\nabla F(w)[v]=\lim_{\epsilon \to 0} (F(w+\epsilon v)-F(w))/\epsilon$. We  distinguish between Riemannian operators ($\operatorname{grad}, \operatorname{Hess}$) and their Euclidean counterparts ($\nabla, \nabla^2$). The symbol $\odot$ represents the element-wise product, and $\otimes$ denotes the Kronecker product. A subscript or superscript $F$ is occasionally attached to geometric operators (\eg $\nabla^F$) to emphasize  they are induced by the Riemannian metric $F$.  
\subsection{Manifold Optimization}
Manifold optimization deals with variables constrained to a Riemannian manifold. In this work, we consider a special case: the variable (parameter) space $\mathbb{R}^p$ not as a standard Euclidean space, but as a Riemannian manifold $\mathcal{M} = (\mathbb{R}^p, F)$ equipped with metric $F(w) \in \mathbb{R}^{p \times p}$ at each  $w\in \mathbb{R}^p$.  This geometric perspective allows us to analyze the optimization landscape through the lens of local curvature defined by $F$.
Detailed discussions are provided in Appendix \ref{app:riemannian-geometry}.
\paragraph{Tangent and Cotangent Spaces.} At any point $w \in \mathcal{M}$, the tangent space $T_w\mathcal{M}\cong \mathbb{R}^p$ represents the vector space of all possible directional velocities (or parameter perturbations).   The Riemannian metric induces a norm $\| \cdot\|_{F(w)}$ on the tangent space.   The cotangent space $T_w^*\mathcal{M}$ is defined as the dual space to $T_w\mathcal{M}$, which  is equipped with the  norm $\|\cdot\|_{F(w)^{-1}}$ and is also isomorphism to $\mathbb{R}^p$. The metric   induces   linear isomorphisms   $F(w)^{-1}:T_w^*\mathcal{M}\to T_w\mathcal{M}$ and  $F(w):T_w\mathcal{M}\to T_w^*\mathcal{M}$. While the Euclidean gradient $\nabla f(w)$ naturally resides in  $T_w^*\mathcal{M}$, the Riemannian gradient is obtained by mapping it back to the tangent space via $\operatorname{grad} f(w) = F(w)^{-1} \nabla f(w)$.   It  gives the  steepest descent steps in the local geometry:
\vspace{-2mm}
\begin{equation}\nonumber
-F(w)^{-1} \nabla f(w)=\lim_{\epsilon\to 0} \operatorname*{argmin}_{\|h\|_{F(w)}\le \epsilon}  \frac{f(w+h)-f(w)}{\epsilon}. 
\vspace{-1mm}
\end{equation}

\paragraph{Levi-Civita Connection.} 
Levi-Civita connection $\nabla_{(\cdot)}(\cdot)$ provides a rigorous tool for differentiating vector fields on Riemannian manifolds, generalizing the concept of directional derivatives from Euclidean space to curved geometries.   For  vector fields $u, v$,  $\nabla_u v$ yields a vector field that  quantifies how $v$ twists or deviates from a parallel field as it moves along $u$. Levi-Civita connection is essential for defining the Riemannian Hessian, which is given by $\text{Hess} f(w)[u] = \nabla_u \operatorname{grad} f(w)$ for any vector field $u$.

\subsection{ODE Perspective of Momentum-based Algorithms}\label{sec:ode-momentum}
Continuous-time modeling via ODEs  constitutes a rigorous and principled paradigm for analyzing momentum-based algorithms, offering \textit{deep theoretical insights and clear  intuitions} into their underlying dynamics.    \cite{JMLR:v17:15-084} proposed a second-order  inertial system  that characterizes the continuous-time limit to Nesterov accelerated gradient (momentum) method. Further, \cite{Shibin2021understanding,Attouch2022}    introduced an inertial system  with Hessian damping  (ISHD) that can depict the momentum-based  algorithms in higher resolution, which takes the form  
\begin{equation}\label{agf-2-order}
    \ddot{w}_t+\alpha_t \dot{w}_t + \beta_t   \nabla^2 f(w_t)\dot{w}_t + \gamma_t \nabla f(w_t) = 0,
\end{equation}
where $\alpha_t,\beta_t,\gamma_t\ge0$   denote the coefficients for momentum decay, Hessian damping, and the gradient driving force, respectively.  
Discretizing this system recovers various momentum methods: specifically, setting $\beta_t=0$ yields Heavy Ball momentum, while $\beta_t>0$ corresponds to Nesterov-type momentum.  Crucially, the Hessian damping term $\nabla^2f(w_t) \dot{w}_t$ and its discretized counterpart $\nabla f(w_k)-\nabla f(w_{k-1})$ serves to mitigate oscillations, thereby enhancing the stability of standard Heavy Ball dynamics.
Further details on discretization, equivalent formulations, and the role of Hessian damping are provided in \cref{nes-discussion}.

\section{A Unified Riemannian ODE Framework for Understanding Adaptive  Algorithms}\label{sec:ode_Framework}
Adaptive optimizers synergize momentum with preconditioners, leveraging momentum-induced acceleration while simultaneously adapting step sizes anisotropically for efficient convergence.  In this section, we interpret the mechanics of these algorithms by modeling them as  inertial systems (ODEs) with Hessian  damping on a Riemannian manifold (RISHD) equipped with preconditioner-induced   metrics.

\subsection{A Unified Discrete Formulation}\label{sec:uni-dis-form} 
Formally, the update rules of most adaptive optimizers can be encapsulated within a single unified   formulation: 
\begin{align}\label{2-ode0-d}
\begin{dcases*}
     m_{k}=(1- \alpha)m_{k-1}+ \nabla f(w_k) ,
    \\ w_{k+1} =w_k-  \eta_k F(w_k)^{-1} (m_{k}+\beta\nabla f(w_k)),
\end{dcases*}
\end{align}
where $w_k$ denotes the parameters, $m_k$ is the   momentum \footnote{ The missing coefficient $\alpha$ of $\nabla f$ can be absorbed in $\eta_k$.},   $F(w_k)^{-1}$ serves as the   preconditioner, and  the term $\beta\nabla f(w_k)$  yields a Nesterov-type momentum.  

In the context of neural network training, the {\bf design of the preconditioner} $F^{-1}$ is typically guided by: { \em 1) Block-Diagonal Structure:} To manage computational complexity, $F$ is generally treated as a block-diagonal matrix, where each block corresponds to the parameter tensor of a specific layer;  {\em 2) EMA Estimation:} Instead of computing exact curvature, EMA is commonly used to estimate  $F(w)$  to suppress stochastic noise from mini-batch gradients and stabilize the estimation of the local curvature;  {\em 3) Fisher-type Approximation:} Fundamentally, most adaptive algorithms aim to approximate the Fisher-type metric $\hat{F}(w) = (\mathbb{E}[gg^\top])^{\frac{1}{2}}$, where $g$ is the stochastic gradient of a specific block.

By varying the structure of $F$ and the momentum coefficient $\beta$, {\bf the formulation \eqref{2-ode0-d} subsumes a broad family of adaptive methods}, including N-AdamW, Muon, and Soap. See \cref{tab:precond_form} and Appendix \ref{app:adap-opt-example} for further details.
{\renewcommand{\arraystretch}{1.5}  
 \begin{table*}[!htbp]
\centering
\caption{Update forms (matrix and vector) and their corresponding approximations of the metric $F$ (up to a constant scaling factor) for common adaptive optimizers. $m$ denotes the EMA-type momentum,  $g$ represents the stochastic gradient, and $M_\beta=M+\beta G$. Capital letters denote the matrix forms corresponding to their vector counterparts (lowercase). For AdamW, we present its Nesterov-accelerated variant (N-AdamW). See Appendix \ref{app:adap-opt-example} for further discussions and details.}
\vspace{-1.0mm}
\begin{tabular}{|c|c|c|c|}
\hline\small
Algorithms & \small Matrix Form  & \small Vector Form & \small $F$ (up to a constant factor)\\
\hline
\small N-AdamW & \small  $\frac{M+\beta g}{\sqrt{V}}$ & \small  $\operatorname{diag}v^{-\frac{1}{2}}(m+\beta g)$ & $(\operatorname{diag}\mathbb{E}[g   g^\top])^{\frac{1}{2}}$ \\
\hline
\small Muon & \small $M_\beta(M_\beta^\top M_\beta)^{-\frac{1}{2}}$ & \small $( (M_\beta^\top M_\beta) \otimes I)^{-\frac{1}{2}}(m+\beta g)$ & \small $( \mathbb{E}[G^\top G])^{\frac{1}{2}} \otimes I$ \\
\hline
\small SOAP & \small $Q_l \frac{Q_l^\top MQ_r}{\sqrt{V_{\text{rot}}} }  Q_r^\top $ & \small $(Q_r\otimes Q_l) (\operatorname{diag}  v_{\text{rot}}  )^{-\frac{1}{2}}(Q_r^\top\otimes Q_l^\top) m$ & \small $(Q_r\otimes Q_l) (\operatorname{diag}\mathbb{E}[g_{\text{rot}}g_{\text{rot}}^\top])^{\frac{1}{2}} (Q_r^\top\otimes Q_l^\top)$ \\
\hline
\end{tabular}
\label{tab:precond_form}
\vspace{-2mm}
\end{table*}
}

\paragraph{Optimizers using Hessian Damping}
Several recent optimizers, including Lion, MARS \cite{yuan2025mars}, and Muon, incorporate Nesterov-style momentum  to surpass the performance of vanilla Heavy Ball momentum. The key step is   preconditioning  $  m_k+\beta  g_k$ as in \eqref{2-ode0-d} rather than   the standard momentum $m_k$. As discussed in Appendix~\ref{nes-discussion}, the $\beta g_k$ term induces an implicit Hessian damping effect by $\nabla f(w_k)-\nabla f(w_{k-1}) \approx \nabla^2 f(w_k)(w_k-w_{k-1})$, crucial for reducing oscillations on ill-conditioned landscapes.

\subsection{The Riemannian ODE Framework} \label{section:rishd} 

\paragraph{Continuous-time Viewpoint.}
Now we interpret the generalized form \eqref{2-ode0-d} as a approximated  discretization of a Riemannian ODE. 
Leveraging Levi-Civita connection $\nabla_{(\cdot)}^F(\cdot)$ to generalize the Euclidean derivatives to the covariant derivatives, we can  extend the Euclidean ISHD \eqref{agf-2-order} to the   Riemannian ISHD (RISHD) on $(\mathcal{M},F)$:
\vspace*{-.45em}
\begin{snugshade}
\vspace*{-1.2em}
\begin{empheq}{align} \label{R_acc:flow}
\!\!\!\nabla_{\dot{w}_t}^F \! \dot{w}_t \!+\! \alpha_t \dot{w}_t \!+\! \beta_t \operatorname{Hess}_F(w_t)\dot{w}_t \!+\! \gamma_t \operatorname{grad}_F \!  f(w_t) \!=\! 0.\!
\end{empheq}
\end{snugshade} 
\vspace*{-1.em}
where $\operatorname{Hess}_F$ and $\operatorname{grad}_F$ denote $F$-induced Riemannian Hessian and gradient respectively.
\paragraph{First-order Tangent-Cotangent Formulation.}
 We can introduce the momentum variable $m_t$  tracking the velocity $\dot{w}_t$ to decouple \eqref{R_acc:flow} into a  first order system that evolves in the tangent space \eqref{eq:tangent} and cotangent space \eqref{eq:cotangent} respectively: 

\begin{proposition}[Adapted from  Proposition \ref{transform-2nd-to-1st-system}]
\mbox{}
If $\alpha_t=\alpha-  {\dot{\eta}_t}/{\eta_t}$, $\beta_t=\beta\eta_t$, $\gamma_t=\eta_t(\alpha\beta+1)$,   \eqref{R_acc:flow} is equivalent to 
\vspace*{-.4em}

\begin{snugshade}
\vspace*{-.4em}
\begin{empheq}[left=\empheqlbrace]{align}
     \dot{w}_t &=- \eta_t F(w_t)^{-1} (m_t+\beta\nabla f(w_t)) , \tag{\textsf{T}} \label{eq:tangent} \\
     \dot{m}_t &=-\alpha m_t +\nabla f(w_t)+ R_t, \tag{\textsf{C}} \label{eq:cotangent}
\end{empheq}
\end{snugshade}
\vspace*{-1.em}
where  the Riemannian correction term $R_t$ is defined in \eqref{Rt-expression-append}.
\end{proposition}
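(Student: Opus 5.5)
The plan is a direct change of variables. We define $m_t$ through the tangent equation, differentiate it in time, and use \eqref{R_acc:flow} to eliminate the second derivative. All terms coming from the non-constancy of $F$ are collected into $R_t$.

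\textbf{Step 1: coordinate expressions.} On $\mathcal{M}=(\mathbb{R}^p,F)$, write the Levi-Civita connection through its Christoffel map $\Gamma_w(u,v)$. It is determined by the Koszul formula
\[
F(w)\Gamma_w(u,v)=\tfrac12\bigl(\nabla F(w)[u]v+\nabla F(w)[v]u-\nabla_w\langle u,v\rangle_{F(\cdot)}\bigr),
\]
where the last term is the Euclidean gradient of $w\mapsto u^\top F(w)v$. With this notation:
\begin{itemize}
\item the acceleration is $\nabla^F_{\dot w}\dot w=\ddot w+\Gamma(\dot w,\dot w)$;
\item the gradient is $\operatorname{grad}_F f=F^{-1}\nabla f$;
\item the Hessian is $\operatorname{Hess}_F f[u]=F^{-1}\nabla^2 f\,u+D(F^{-1})[u]\nabla f+\Gamma(u,F^{-1}\nabla f)$.
\end{itemize}
These are the standard formulas from Appendix \ref{app:riemannian-geometry}.

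\textbf{Step 2: (RISHD) $\Rightarrow$ (T)--(C).} Given a solution $w_t$ of \eqref{R_acc:flow}, set
\[
m_t:=-\eta_t^{-1}F(w_t)\dot w_t-\beta\nabla f(w_t),
\]
so that \eqref{eq:tangent} holds by construction. Differentiating gives
\[
\dot m_t=-\eta_t^{-1}\bigl(F\ddot w+\nabla F[\dot w]\dot w\bigr)+\dot\eta_t\eta_t^{-2}F\dot w-\beta\nabla^2 f\,\dot w .
\]
Multiply \eqref{R_acc:flow} by $F$ to express $F\ddot w$, and insert the coefficient choices.

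\begin{itemize}
\item \emph{Momentum-decay term.} Since $\alpha_t=\alpha-\dot\eta_t/\eta_t$, the $\alpha_t$ term contributes $\alpha F\dot w/\eta_t$ together with a piece that exactly cancels $\dot\eta_t\eta_t^{-2}F\dot w$. By the definition of $m_t$, the remaining part equals $-\alpha(m_t+\beta\nabla f)$.
\item \emph{Gradient term.} Since $\gamma_t=\eta_t(\alpha\beta+1)$, it contributes $(\alpha\beta+1)\nabla f$. Combined with the previous item this leaves $-\alpha m_t+\nabla f$.
\item \emph{Euclidean Hessian term.} Since $\beta_t=\beta\eta_t$, the Euclidean Hessian part $\beta\nabla^2 f\,\dot w$ cancels the last term of $\dot m_t$.
\end{itemize}

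What survives is $\dot m_t=-\alpha m_t+\nabla f(w_t)+R_t$ with
\[
R_t=\eta_t^{-1}\bigl(F\Gamma(\dot w,\dot w)-\nabla F[\dot w]\dot w\bigr)+\beta\bigl(F\,D(F^{-1})[\dot w]\nabla f+F\Gamma(\dot w,F^{-1}\nabla f)\bigr).
\]
This should coincide with \eqref{Rt-expression-append} after substituting $\dot w$ from \eqref{eq:tangent}, and one checks that it vanishes when $F$ is constant.

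\textbf{Step 3: converse.} Given a solution of (T)--(C), run the same algebra backwards. Eliminate $m_t$ using (T), differentiate (T), substitute (C), and regroup the terms. By the definition of $R_t$, this yields \eqref{R_acc:flow}. Each step is reversible because $F$ is invertible and $\eta_t>0$.

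\textbf{Main obstacle.} The main obstacle is bookkeeping in Step 2. The Riemannian Hessian term contributes both a $D(F^{-1})$ piece and a Christoffel piece, and these must be separated cleanly from the Euclidean $\nabla^2 f$ piece so that the cancellation with $-\beta\nabla^2 f\,\dot w$ is exact. I would fix the Hessian formula from Step 1 first and check it against the definition $\nabla_u\operatorname{grad}f$ before doing the substitution.
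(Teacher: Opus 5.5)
Your proposal is correct and is essentially the paper's own proof: the paper likewise sets $m_t=-F(w_t)\dot w_t/\eta_t-\beta\nabla f(w_t)$, expands $\nabla_{\dot w_t}\dot w_t$ and $\nabla_{\dot w_t}\operatorname{grad}f(w_t)$ through the Koszul expression of Lemma~\ref{levi-civita-expression}, and obtains the same three cancellations you list.

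The one step you only asserted (``should coincide'') needs care, because your $R_t$ is correct but the printed \eqref{Rt-expression-append} is not. Write $v_t=F(w_t)^{-1}u_t$ and differentiate only $F(w)$, with $v_t$ frozen. Then the first and last terms of your $R_t$ are $-\tfrac{\eta_t}{2}\nabla_w\bigl(v_t^\top F(w)v_t\bigr)\big|_{w=w_t}$ and $\tfrac{\beta\eta_t}{2}\nabla_w\bigl(v_t^\top F(w)F(w_t)^{-1}\nabla f(w_t)\bigr)\big|_{w=w_t}$. The printed formula instead has $\nabla_w(u_t^\top F(w_t)u_t)$ and $\nabla F(w_t)[v_t]F(w_t)^{-1}\nabla f(w_t)$ in these places, and these are different vectors in general. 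The last-term discrepancy comes from the final line of the paper's proof, which replaces the Koszul term $\nabla_w\bigl(\dot w_t^\top F(w)F(w_t)^{-1}\nabla f(w_t)\bigr)$ (present in its own intermediate Hessian formula) by $\nabla F(w_t)[\dot w_t]F(w_t)^{-1}\nabla f(w_t)$. You should therefore state the corrected $R_t$ rather than claim agreement with \eqref{Rt-expression-append}.
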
 
The term $R_t$ mainly depends on $\eta_t$ and $\nabla F(w_t)$. It is typically negligible compared to $\nabla f(w_t)$ and $m_t$ due to the small step size $\eta_t$ and the stable preconditioner (implying a small $\nabla F(w_t)$). Thus we can ignore $R_t$ for brevity.

The decoupled formulation in \eqref{eq:tangent} and \eqref{eq:cotangent} elucidate the nature of RISHD in \eqref{R_acc:flow}: 
Momentum evolves in the cotangent space, while the preconditioner $F^{-1}$ maps it   to the tangent space to drive  the parameter update. This form also naturally gives  the specific form in which modern adaptive methods incorporate momentum: the preconditioner is applied to the momentum accumulated from raw gradients, rather than accumulating preconditioned gradients by EMA. Furthermore, this Riemannian ODE framework is readily extensible to higher-order cases, enabling the characterization of complex momentum schemes such as the multi-timescale momentum in AdEMAMix~\cite{pagliardini2025the} (Appendix \ref{section-ademamix}).

\paragraph{Joint effect of the preconditioner and momentum.}  The continuous-time ODE framework \eqref{R_acc:flow} is instrumental and intuitive in elucidating the joint mechanism of preconditioning and momentum, which remains opaque in discrete formulations. Assuming $F(w)$ is a slowly varying Hessian approximation, the induced Riemannian metric effectively mitigates ill-conditioning, as it yields a Riemannian Hessian $\operatorname{Hess}_F f(w)\approx F^{-1}(w) \nabla^2 f(w)$ which exhibits a much better condition number than the Euclidean Hessian $\nabla^2 f(w)$. Leveraging this improved geometry, the damping term   $-(\alpha_t +\beta_t \operatorname{Hess} f(w))\dot{w}_t$, which encapsulates the momentum effect in the continuous dynamics,  then acts to further accelerate convergence.

\paragraph{Connection with the Discrete Formulation.} Compared to the original second-order system \eqref{R_acc:flow}, the equivalent first-order coupled system offers a more direct path for numerical implementation. Applying a semi-implicit Euler discretization scheme to \eqref{eq:tangent} and \eqref{eq:cotangent} with step size $h=1$ and $R_t\approx0$ yields the discrete formulation \eqref{2-ode0-d}.

\section{The \ours Approach}
In this section, we introduce  \ours, an   acceleration strategy grounded in the characteristics of the loss landscape and the theoretical insights regarding the synergistic role of momentum and preconditioning established in Section \ref{section:rishd}.

As discussed in   \cref{sec:eos},  the trajectory along sharp directions exhibits significant fluctuations, and the efficiency of traversal along flat directions becomes the critical determinant of the final loss. Motivated by this,  our strategy aims to \textbf{selectively accelerate optimization along flat directions while  preserving stability in sharp directions.} To implement this, we adopt a decoupled tuning scheme: we anchor the hyper-parameters governing the sharp directions   in \eqref{2-ode0-d}  to maintain stability, while adaptively adjusting those corresponding to the flat directions. In the following discussion, we derive a heuristic adjustment scheme for the flat-direction hyper-parameters from the perspective of continuous-time dynamics in \cref{sec:ode_Framework}.
\subsection{Acceleration Methods from the ODE Perspective}
 We begin by analyzing the role of momentum through the lens of the continuous-time limit  in \eqref{agf-2-order},  focusing initially on the Euclidean case of \eqref{R_acc:flow} for clarity. 
Intuitively, the system models the dynamics of a particle moving within a potential field defined by $f$. The particle at $w_t$ is subject to the combined effects of a driving force $-\gamma_t \nabla f(w_t)$ and a curvature-adaptive damping (drag force) $-(\alpha_t I + \beta_t \nabla^2 f(w_t)) \dot{w}_t$ opposing its velocity $  \dot{w}_t$, with its acceleration $\ddot{w}_t$ governed by Newton's Second Law. 
 Along flat directions, we can enhance the accumulation of   velocity by increasing the driving force coefficient $\gamma_t$ while minimizing the effective damping (\ie  reducing the eigenvalues of $\alpha_t I + \beta_t \nabla^2 f(w_t)$). Since flat directions are often characterized by   non-convexity, our  strategy is 
\textbf{increasing  $\gamma_t$ and $\beta_t$ in the flat directions.} \footnote{Regarding damping, we do not tune $\alpha_t$ for the sake of simplicity. While $\alpha_t$ offers a similar mechanism for reducing damping, $\beta_t$ is preferred as it provides curvature adaptivity.}  

The above discussion, framed in terms of damping and motion in Euclidean space, can be naturally extended to Riemannian manifolds (curved spaces). The corresponding physical background shifts from Newtonian mechanics to Lagrangian mechanics \cite{arnold1989mathematical}, yet the same conclusions hold. 
For sharp directions, the coefficients remain unchanged to maintain stability conditions. Given the relationships $\gamma_t=(1+\alpha\beta)\eta_t$ and $\beta_t=\beta \eta_t$ in \eqref{R_acc:flow}, we can amplify $\gamma_t$ and $\beta_t$ by increasing $\beta$ and $\eta_t$.
Finally, assuming well-aligned top eigenspaces between $F$ and the Hessian, the accelerated dynamics is given by:
\begin{align}\label{2-ode01}
\begin{dcases*}
     \dot{w}_t = 
     \begin{aligned}[t]
        &-\eta_t F(w_t)^{-1} P_t(m_t + \beta_1 \nabla f(w_t)) \\
        &-\chi\eta_t F(w_t)^{-1}    Q_t (m_t + \beta_2 \nabla f(w_t))  ,
     \end{aligned} \\
     \dot{m}_t = -\alpha m_t + \nabla f(w_t),
\end{dcases*}
\end{align}
where $P_t$ is the projection to the  sharp subspace, $Q_t=I-P_t$,  and $\chi\ge1$, $\beta_2\ge\beta_1\ge0$ are hyper-parameters. Discretizing  \eqref{2-ode01} like \eqref{2-ode0-d}    yields the   accelerated   algorithms framework \ours (\cref{alg:LITE}), where we simply notations   for brevity. This formulation naturally yields amplified update magnitudes along flat directions,   facilitating faster traversal through these slow-progressing regimes. An illustrative theoretical analysis of the \ours acceleration mechanism on a quadratic function is provided in Appendix \ref{app:Illustrative_analysis}.
\begin{algorithm}[tb]
  \caption{\ours Strategy}
  \label{alg:LITE}
  \begin{algorithmic}[1]
    \STATE {\bfseries Input:}  $w_0, m_0,\{\eta_k\},\alpha,\chi\ge1,\beta_2\ge\beta_1\ge0$.
    \FOR{$k=0$ {\bfseries to} $K$}
    \STATE Update momentum $m_k=(1-\alpha)m_{k-1}+\nabla f(w_k)$.
    \STATE Estimate projection $Q_k$ onto the flat direction.
    \STATE Estimate the update direction
    \begin{snugshade}
    \vspace{-.4cm}
    \begin{equation}
    \begin{aligned}
u_k=&{\color{violet}(I-Q_k)}F(w_k)^{-1}(m_k+{\color{violet}\beta_1} \nabla f(w_k))
     \\&+ {\color{violet}\chi Q_k} F(w_k)^{-1}(m_k+{\color{violet}\beta_2} \nabla f(w_k)) .
        \end{aligned}\nonumber
    \end{equation} 
    \end{snugshade}
    \vspace{-.1em}
    \STATE Update parameters $w_{k+1}=w_k-\eta_k u_k$.
    \ENDFOR
  \end{algorithmic}
\end{algorithm}

\subsection{Practical Implementation of \ours}\label{practical-acc}
In this section, we present practical execution  strategies for implementing \ours (\cref{alg:LITE}) in LLM-pretraining. Our approach specifically targets the anisotropic landscape inherent to each parameter block. The core objective is to efficiently identify the flat (or  sharp) directions within each parameter block.  
\paragraph{Efficiently approximating the  flat directions} Prior research \cite{JMLR:v21:17-678,morwani2025a} demonstrates that the Fisher matrix and Shampoo-like preconditioners effectively approximates the Gauss-Newton component, the dominant term in the Hessian. Since Muon and SOAP adopt Shampoo-like methods to approximate the square root of the Fisher matrix via Kronecker factorization, we hypothesize that  \emph{the top eigenspaces of their preconditioners and the Hessian are highly aligned.  } 

\begin{figure}
    \centering
\includegraphics[width=1\linewidth]{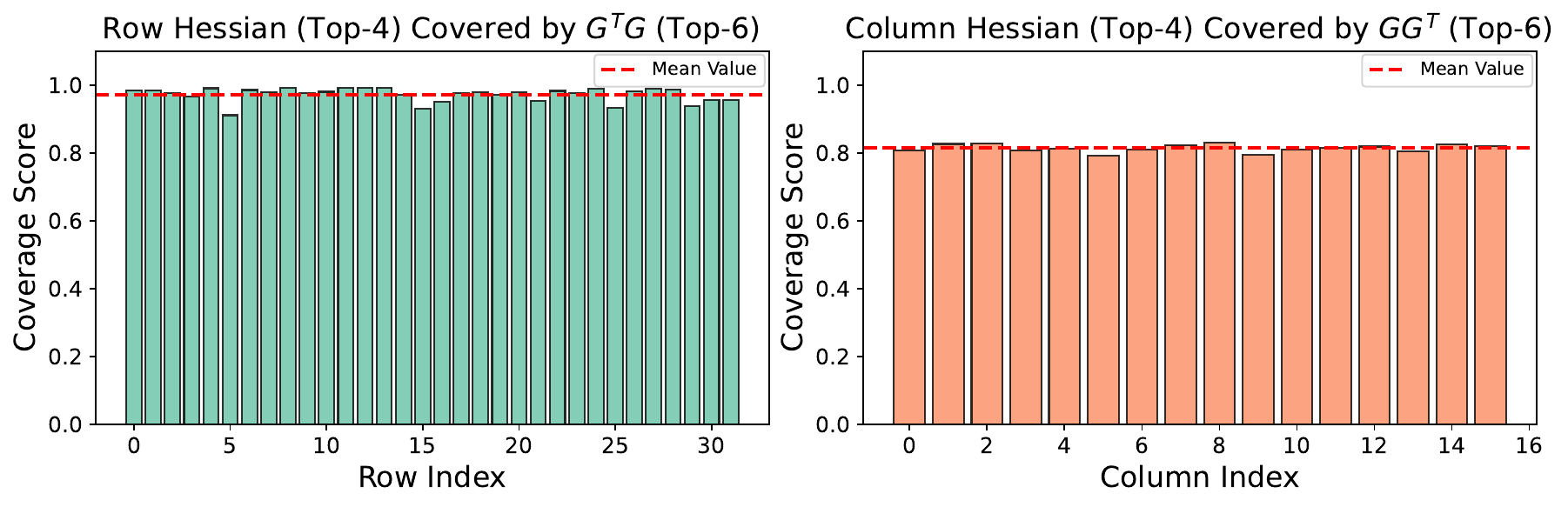}
    \caption{Coverage of the top eigenspaces of row (column) Hessians by those of $G^\top G$ ($G G^\top$) for the \texttt{up\_proj} block of an FFN layer. A higher score indicates  a greater degree of containment. }
    \label{fig:top-align}
\end{figure}
To verify this hypothesis, we conducted experiments on a toy LLaMA-2 model (see experimental details in Appendix~\ref{appen:exp-details-align}). \cref{fig:Schematic_illustration} (left) confirms that the landscape within each block is   ill-conditioned, with sharp directions accounting for only a small proportion of the spectrum. Besides, Figure~\ref{fig:top-align} indicate that for a small dimension $d_s=4$ ($4/32 $ and $4/16$), the top-$d_s$ eigenspaces of the Hessian (for rows/columns) can be effectively covered by a subspace of $G^\top G$ ($G G^\top$) of dimension only slightly larger than $d_s$, where $G$ denotes the stochastic gradient matrix. Consequently, by taking the orthogonal complement, we can obtain a reliable approximation of the flat directions. Since preconditioners of Muon and SOAP  are constructed from $G^\top G$ ($G G^\top$), we propose the following acceleration schemes  tailored for them.
\paragraph{Accelerating Muon ({\color{violet}\oursmuon})}
At iteration $k$, let $G_k \in \mathbb{R}^{m\times n}$ ($m \ge n$) represent the stochastic gradient for a given parameter block. The update process begins with the momentum accumulation $M_{k} = (1-\alpha) M_{k-1} + G_k$, followed by the Nesterov correction $\widetilde{M}_{k} = M_{k} + G_k/(1-\alpha)$~\footnote{We adopt the Nesterov formulation consistent with~\cite{jordan2024muon}.}. Recognizing that $\widetilde{M}_{k}^\top \widetilde{M}_{k}$ acts as a robust proxy for the expected covariance $\mathbb{E}[G_k^\top G_k]$ (Appendix \ref{app:r-exa-pre}), we utilize it to characterize the landscape geometry. Specifically, we construct the projection matrix $P_k \in \mathbb{R}^{n\times n}$ onto the top-$d_s$ eigenspace of $\widetilde{M}_{k}^\top \widetilde{M}_{k}$, where $d_s$ specifies the dimension of the sharp subspace. A numerically efficient, optimizer state-free method for estimating $P_k$ is detailed in Appendix~\ref{appen:details-oursmuon}. Consequently, the update direction is formulated as:
\begin{equation}
\begin{aligned}
U_k=&\widetilde{M}_{k}(\widetilde{M}_{k}^\top \widetilde{M}_{k})^{-\frac{1}{2}}(P_k+ \chi (I_n-P_k))
\\&+  G_k (G_k^\top G_k)^{-\frac{1}{2}}(\beta_1P_k+\chi\beta_2 (I_n-P_k)),
\end{aligned}
\end{equation}
where $\beta_2 \ge \beta_1$ are the Hessian damping coefficients, and $\chi \ge 1$ denotes the amplification ratio of the learning rate in flat directions relative to sharp ones. Setting $\chi=1$ and $\beta_{1,2}=0$ recovers the original Muon baseline. Here we use the precondition $ (G_k^\top G_k)^{-\frac{1}{2}}$ instead of $ (\widetilde{M}_{k}^\top \widetilde{M}_{k})^{-\frac{1}{2}}$ to the gradient term, since it has  superior numerical stability and efficiency of applying NS iterations, and $G_k^\top G_k$ itself serves as a valid instantaneous estimator of $\mathbb{E}[G_k^\top G_k]$.

\paragraph{Accelerating SOAP  ({\color{violet}\ourssoap})} 
We retain the notation established above. As detailed in \cref{tab:precond_form}, the SOAP preconditioner explicitly provides the eigenvalues ($V$) and eigenvectors ($Q_r \otimes Q_l$) of the curvature estimation. Consequently, we can identify the sharp directions directly in the ($Q_r \otimes Q_l$)-rotated eigenspace by selecting the top-$d_s$ ($d_s\le mn$) elements of $V$.
Let $P_k \in \mathbb{R}^{m\times n}$ denote the mask matrix indicating the sharp directions at iteration $k$, and let $Q_k = 1_m 1_n^\top - P_k$ represent the flat directions. The update direction for accelerating SOAP is given by:
\begin{equation}
\begin{aligned}
U_k = & Q_l\left(V_k^{-\frac{1}{2}} \odot (\beta_1 P_k + \chi \beta_2 Q_k) \odot (Q_l^\top G_k Q_r)\right) Q_r^\top 
\\&+ Q_l\left(V_k^{-\frac{1}{2}} \odot (P_k + \chi Q_k) \odot (Q_l^\top M_k Q_r)\right) Q_r^\top,
\end{aligned}\nonumber
\end{equation}
where, similar to \oursmuon, the SOAP baseline is recovered when $\chi=1$ and $\beta_{1,2}=0$.
In practice, to enhance training stability, we implement $P_k$ using a soft transition scheme (smoothly changing from 0 to 1) rather than a hard binary cutoff. See Appendix~\ref{appen:details-ourssoap} for implementation details.

\section{Experiments}\label{experiments}

We evaluate \ours in accelerating Muon and SOAP on a wide range of  LLM pre-training scenarios. Experimental details are provided in  Appendix~\ref{appen:alg-details} and \ref{appen:exp-details}. 

\subsection{Results on Dense Models}\label{result:dense}

\begin{figure*}[!htb]
    \centering
    \includegraphics[width=0.24\linewidth]{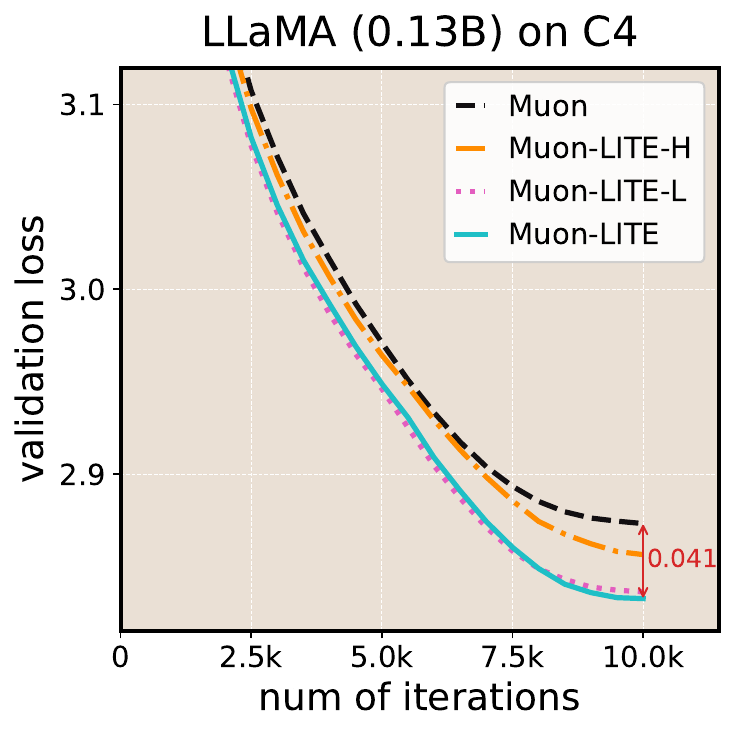}
    \includegraphics[width=0.24\linewidth]{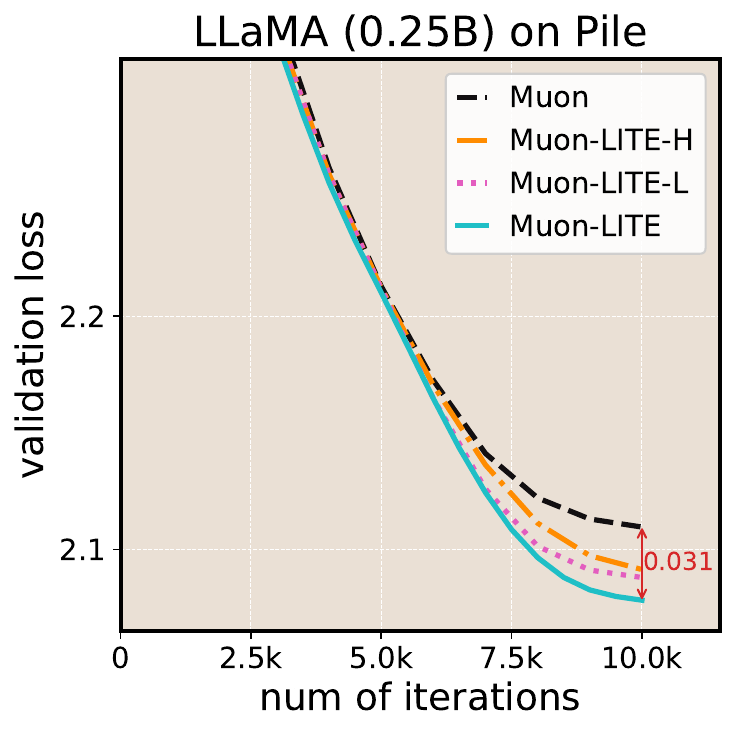}
    \includegraphics[width=0.24\linewidth]{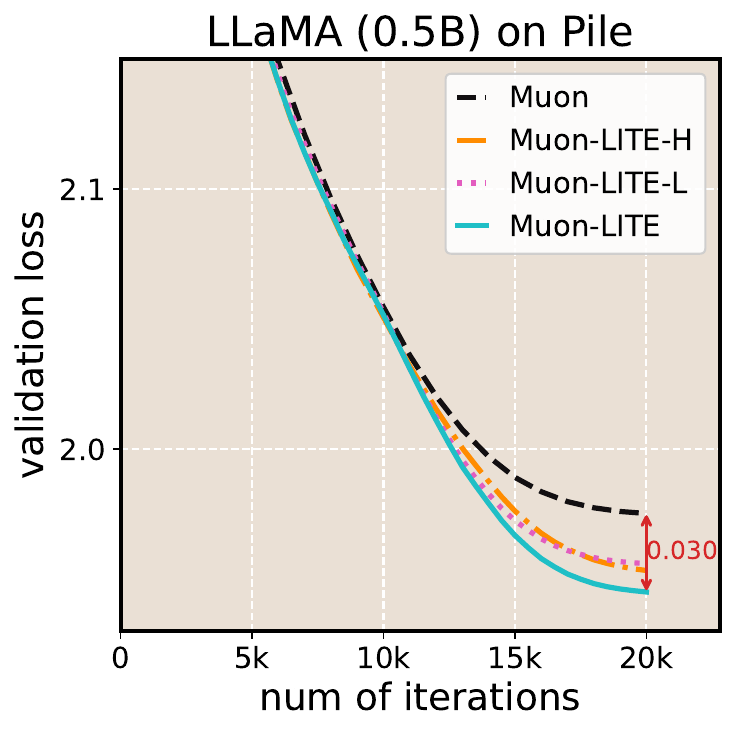}
    \includegraphics[width=0.24\linewidth]{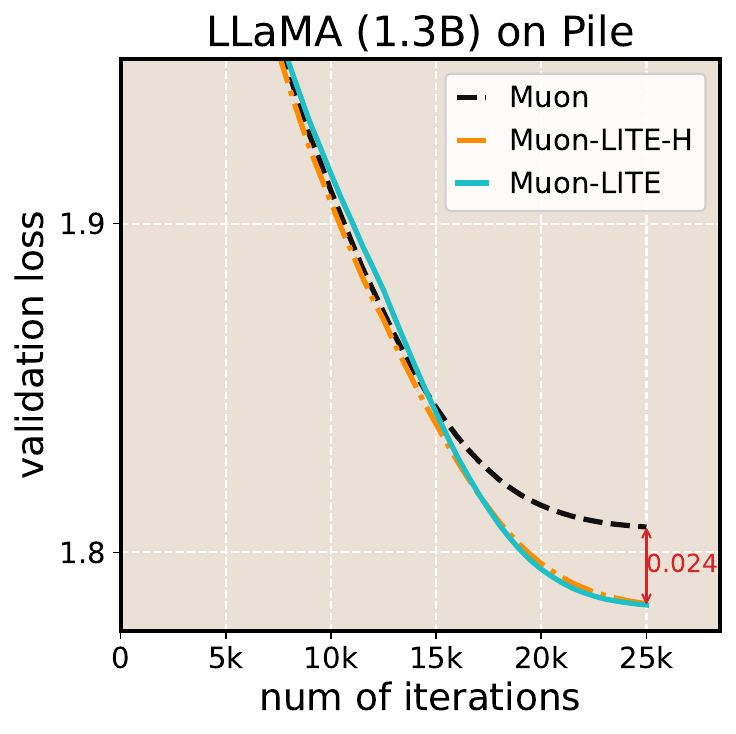}

    \vspace{-.1cm}
    \caption{Performance comparison of Muon   and \oursmuon on LLM pre-training tasks. We evaluate LLaMA2 models across various sizes and datasets using a \texttt{cos} learning rate schedule. The suffixes L and H denote two ablation variants of LITE:  in flat directions, L increases only the learning rate ratio $\chi\ge1$, while H increases only the Hessian damping coefficient $\beta_2$. The token batch size is approximately 2M for the 1.3B model, and 1M for others.}
    \label{fig:cosine-llama-muon}
    \vspace{-.0cm}
\end{figure*}

\paragraph{Main Results.} 
\cref{fig:cosine-llama-muon,fig:cosine-llama-soap} compare the performance of \oursmuon and \ourssoap against vanilla Muon and SOAP across a  range of LLaMA model sizes on both the C4 and Pile datasets. The values of  $\{\chi,\beta_{1,2}\}$ in \ours  were determined via a search on the  0.25B model experiments and were then used uniformly for  other experiments. Across all experimental settings, \oursmuon and \ourssoap consistently achieve lower terminal losses than their respective well-tuned baselines. Besides, \ours outperforms both \ours-L and \ours-H, validating the effectiveness of simultaneously increasing $\chi$ and $\beta_2$. To further assess scalability, we visualize the scaling laws of \oursmuon versus Muon in \cref{fig:scaling_law} (right). The performance gains yielded by \oursmuon remain consistent across a wide range  of model scales, indicating its potential  scalability. Further downstream analyses     are provided in Appendix \ref{app:add-experiment}.

We subsequently investigate the acceleration efficacy of \ours over extended training horizons, increasing the training token budget from   $40\times$ (10k iterations) the parameter count to $100\times$ (25k) and $200\times$ (50k). \cref{fig:scaling_law} (left) indicates that the loss reduction gains yielded by \oursmuon persist across these extended regimes. Notably, at the $100\times$ setting, \oursmuon achieves an approximate $2\times$ speedup over Muon. This demonstrates the sustained scalability of \oursmuon with respect to the number of training tokens.

\begin{figure} [H]
\vspace{-2mm}
    \centering
    \includegraphics[width=0.48\linewidth]{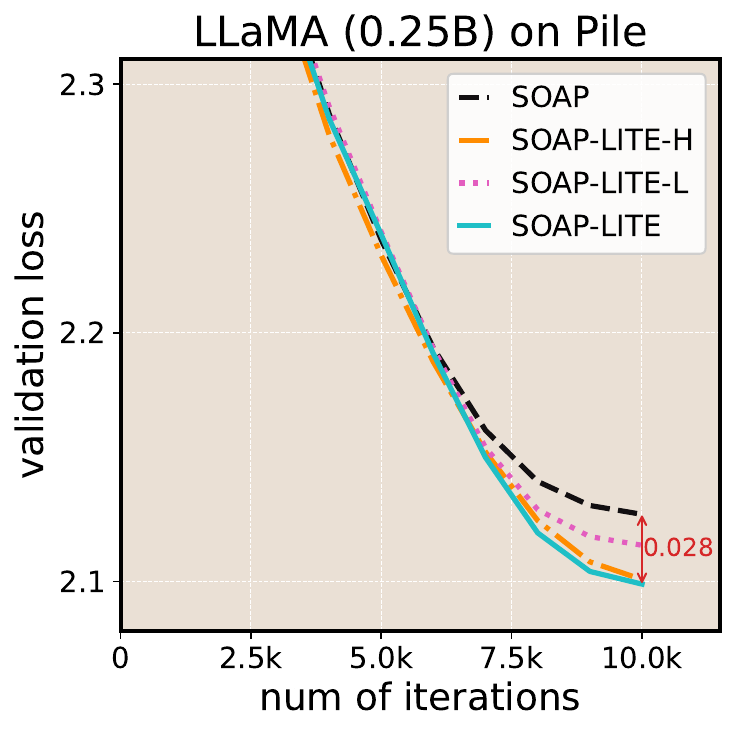}
    \includegraphics[width=0.48\linewidth]{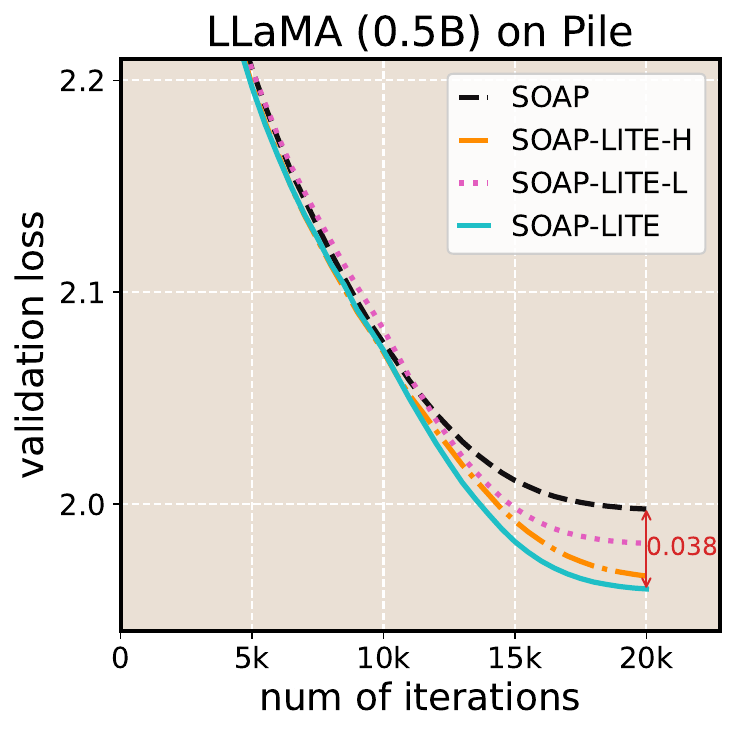}
    \caption{Performance comparison of SOAP and \ourssoap on LLM pre-training tasks. Same experimental setup and notations are employed as in \cref{fig:cosine-llama-muon}.}
    \label{fig:cosine-llama-soap}
  \vspace{-3mm}
\end{figure}
\paragraph{Ablation Studies.} We demonstrate that the acceleration strategy for flat directions is detrimental when applied to sharp directions. Under the experimental setting for the 0.25B model in \cref{fig:cosine-llama-muon}, we tested applying uniform damping coefficients $\beta_{1,2}= 0.5$ and $\beta_{1,2}= 1.0$. Despite these values being   smaller than the coefficient used for flat directions in \oursmuon-H ($\beta_2=2.0$), they yielded terminal losses of $2.113$ and $2.129$  respectively, both even  inferior to the vanilla Muon baseline ($2.110$). The results validate our design choice of maintaining hyper-parameters along sharp directions when selectively accelerating the flat ones.

\subsection{Results on MoE Models}
Mixture-of-experts (MoE) architectures have emerged as a critical  paradigm for modern  LLMs. By selectively activating   a subset of experts for each token, MoE models achieve superior scaling efficiency over their dense counterparts. 

\begin{wrapfigure}{l}{0.235\textwidth}
\vspace{-3mm}
    \centering
\includegraphics[width=\linewidth]{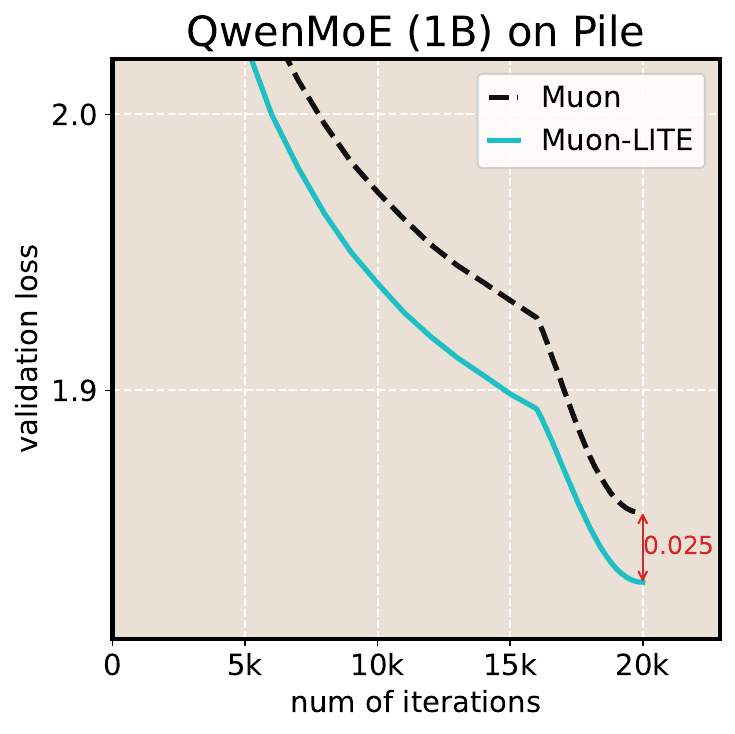}
    \caption{\oursmuon outperforms Muon in QwenMoE pre-training tasks.}
    \label{fig:qwen2-1b}
    \vspace*{-9pt} 
\end{wrapfigure}

We  evaluate \oursmuon against the Muon baseline on a 1B-parameter QwenMoE model, utilizing a \texttt{wsd} learning rate schedule. The results demonstrate that \oursmuon consistently outperforms Muon throughout the entire training trajectory, ultimately achieving a significantly lower terminal loss (\cref{fig:qwen2-1b}). Furthermore, we observe that the performance gap between \oursmuon and Muon progressively widens as iterations proceed during the stable phase. This trend suggests that the efficiency gains offered by \oursmuon are likely to persist or even amplify at larger data scales.

\section{Theoretical  Analysis to Training Dynamics}

In this section, we provide a theoretical characterization of the acceleration mechanism of \ours. We adopt and extend the \emph{River-Valley} landscape framework recently proposed by \cite{wen2025understanding}, which models the pre-training loss landscape as a deep valley with a flat   river  at its bottom. We {\em generalize this framework} to high-dimensional settings, and {\em analyze the   dynamics} \eqref{2-ode01} corresponding to  \ours (\cref{alg:LITE}). Our analysis demonstrates that \ours effectively accelerates the training dynamics along the River, which  dictates the terminal loss. In the following, we introduce the  assumptions required for our theoretical derivation.

\begin{assumption}[River Structure]\label{ass:river}
Fix an integer $d_s \in \mathbb{Z}_+$ and an open set $U \subset \mathbb{R}^p$. Let $P_s(w)$ denote the Euclidean projection onto the top $d_s$ eigenspace of the Hessian $\nabla^2 f(w)$. We assume that $P_s(w)$ varies smoothly with respect to $w$, and that the set (referred to as  \emph{River})
\begin{equation}
\mathcal{R} = \{ w \in U : P_s(w) \nabla f(w) = 0 
\}
\end{equation}
constitutes a $(p-d_s)$-dimensional manifold. Furthermore, we assume that within the sharp subspace $\text{Range}(P_s(w))$, the Hessian $\nabla^2 f(w)$ and the preconditioner $F(w)$ share a common eigen-basis. Finally,  the ODE
\begin{equation}
\frac{d}{dt} \psi_t(w) = -P_s(\psi_t(w)) F^{-1}(\psi_t(w)) \nabla f(\psi_t(w)),
\end{equation}
initialized at any $\psi_0(w) = w \in U$ is assumed to remain and converge   within $U$ as $t \to \infty$. 
\end{assumption}
Under Assumption \ref{ass:river}, we define the ``projection'' map to River $\Phi: U \to \mathcal{R}$ via the limit of the flow: $\Phi(w) \coloneqq \lim_{t\to\infty}\psi_t(w)$. This ensures $\Phi(w) \in \mathcal{R}$ since the stationarity of the flow implies $P_s \nabla f = 0$.

\begin{assumption}[Regularity, Smoothness and Conservation] \label{ass:rsc}
Given the domain $U$ in Assumption \ref{ass:river}, we assume:
\begin{enumerate}
[leftmargin=1em]
\vspace{-3mm}
\item \label{ass:smoothness}$f$ is analytic in $U$. Let  $(\cdot)|_s$ denote the restriction of an operator to the sharp subspace $\operatorname{Range}(P_s(w))$. For all $w \in U$, the eigenvalues of $\nabla^2 f(w)$, $F(w)$, $F(w)|_s$,  $(F^{-\frac{1}{2}}(w)\nabla^2 f(w) F^{-\frac{1}{2}}(w))|_s$  lie   in $[-L, L]$,     $[\lambda_{F}, \rho]$, $[\lambda_{F_s}, \rho]$ and $[\lambda_{H_F,s}, L_{H_F,s}]$, respectively.

\item  \label{ass:spectral}
There exist  constants $\delta, \delta_F,\varepsilon \ge 0$ with $\max\{\delta, \delta_F\} \le \varepsilon$, such that for any $w \in U$, $\|\nabla f(w)\|_{F(w)^{-1}}\le G$, $\|\nabla P_s(w)\|_{\mathrm{op}}  \leq  {\delta}/{G}$ and $\|\nabla F(w)\|_{\mathrm{op}}  \leq \delta_F/G$. 
See \eqref{nabla-Ps} for detailed definition of the operator norm $\|\cdot\|_{\mathrm{op}}$.

\item \label{ass:Conservation}
    There exists a time horizon $T_{\text{max}} > 0$, dependent on the hyper-parameters $(\alpha, \beta_1, \{\eta_t\}_{t\ge0}, \chi, \beta_2)$, such that for any initialization $w_0 \in U$ with initial momentum $m_0=0$, the trajectory $w_t$ generated by the dynamics \eqref{2-ode01} remains strictly within $U$ for all $t \in [0, T_{\text{max}}]$.
    
\end{enumerate}

\end{assumption}

\begin{theorem}\label{thm:stage-1}
Suppose Assumptions \ref{ass:river} and \ref{ass:rsc} hold. Define the decay rate  $\iota_t = \min\{\alpha/2, \lambda_{H_{F,s}}\beta_1\eta_t\}$ and   the forgetting kernel  $K(t) = \exp\left( - \int_0^t \iota_s ds \right)$.
Assuming $\dot{\eta}_t \le 0$, there exists a constant $\varepsilon_0 > 0$ (dependent on $f$ and the hyper-parameters in \eqref{2-ode01}) such that for all $\varepsilon \le \varepsilon_0$, the trajectory $w_t$ governed by \eqref{2-ode01} satisfies the attraction bound:
\begin{equation}
    \|w_t-\Phi(w_t)\|_2^2\lesssim K(t)+\varepsilon,
\end{equation}
and the ``projected" trajectory $z_t = \Phi(w_t)$ follows the coupled dynamics:
\begin{align}\label{dyn-z}
\begin{dcases*}
     \!\dot{z}_t \!=\!- \eta_t \!\!\left(\chi P_\mathcal{R}(z_t)F(z_t)^{-1} (s_t\!+\!\beta_2\!\nabla f(z_t))\!+\!  \epsilon_{z,t}\right) \!\!, \\
     \!\dot{s}_t\!=\!-\alpha s_t \!+\!\nabla f(z_t), 
\end{dcases*}
\end{align}
with   $s_0=0$, where $\epsilon_{z,t}$ denotes a perturbation term. Finally, defining $b_t = \| P_\mathcal{R}(z_t)F(z_t)^{-1}\nabla f(z_t)\|_{F(z_t)}^2 + \| P_\mathcal{R}(z_t)F(z_t)^{-1}s_t\|_{F(z_t)}^2$, the perturbation error satisfies:
\begin{equation}
\begin{aligned}
\|\epsilon_{z,t}\|_{F(z_t)}^2\lesssim& K(t) +\varepsilon^2 b_t
     + \varepsilon  \int_0^t K(\tau-t)b_\tau d\tau  .
     \end{aligned}
\end{equation} 
\end{theorem}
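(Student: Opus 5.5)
The plan is to split the trajectory of \eqref{2-ode01} into a sharp component, measured by the distance to the River, and a River component $z_t=\Phi(w_t)$, and to control the first by a Lyapunov argument before pushing the dynamics forward through $\Phi$.

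\textbf{Step 1: sharp-direction Lyapunov function.} Near the River, $P_s(w)\nabla f(w)\approx H_s(w)\,(w-\Phi(w))$ with $H_s$ the restriction of $\nabla^2 f$ to the sharp subspace; this uses analyticity and Taylor expansion around $\Phi(w)$, and the errors are $O(\varepsilon)$ because $\nabla P_s$ and $\nabla F$ are $O(\varepsilon/G)$. By the common eigenbasis assumption, $F^{-1}$ commutes with $H_s$ on this subspace. Projecting \eqref{2-ode01} with $P_t=P_s(w_t)$ gives a linear second-order system in $x_t=P_s(w_t-\Phi(w_t))$ and $y_t=P_s m_t$:
\[
\dot x_t=-\eta_t F^{-1}(y_t+\beta_1 H_s x_t)+e_{1,t},\qquad
\dot y_t=-\alpha y_t+H_s x_t+e_{2,t}.
\]
The perturbations $e_{i,t}$ gather $\nabla P_s$, $\nabla F$ and curvature-variation terms, each $\lesssim \varepsilon\,(\text{flat velocity}+\|x_t\|)$. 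In the $F$-whitened eigenbasis this system decouples into scalar damped oscillators with stiffness $\lambda\in[\lambda_{H_F,s},L_{H_F,s}]$. For each oscillator I would build the standard quadratic Lyapunov function for ISHD, namely a kinetic part plus a potential part plus a small cross term $\langle x,y\rangle$. Since $\dot\eta_t\le 0$ keeps the time-varying weights from contributing positive terms, this gives $\dot V_t\le -2\iota_t V_t+(\text{error})$ with $\iota_t=\min\{\alpha/2,\lambda_{H_F,s}\beta_1\eta_t\}$. The $\alpha/2$ term comes from the heavy-ball damping, and $\lambda\beta_1\eta_t$ comes from the Hessian damping. Grönwall then yields $V_t\lesssim K(t)V_0+\int_0^t K(t-\tau)\,\mathrm{err}_\tau\,d\tau$. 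Since $\|\nabla f\|_{F^{-1}}\le G$ bounds the error by $O(\varepsilon)$, we get $\|w_t-\Phi(w_t)\|^2\lesssim K(t)+\varepsilon$. To compare $\|x_t\|$ with $\|w_t-\Phi(w_t)\|$, I will show that $\Phi$ moves points only along $\mathrm{Range}(P_s)$ up to $O(\varepsilon)$ tilt. Assumption~\ref{ass:rsc}.\ref{ass:Conservation} keeps the trajectory in $U$ throughout, and $\varepsilon_0$ is chosen so that the $\varepsilon$ feedback terms are absorbed into the decay.

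\textbf{Step 2: dynamics of $z_t$.} Differentiate $z_t=\Phi(w_t)$ to get $\dot z_t=\partial\Phi(w_t)\dot w_t$. Since $\Phi$ is invariant under the flow $\psi$, differentiating $\Phi(\psi_t(w))=\Phi(w)$ shows that $\partial\Phi(z)$ annihilates $P_sF^{-1}\nabla f$-type directions on the River. The standard argument of Li et al.\ for limiting maps of gradient-flow-like dynamics then identifies $\partial\Phi(z)$ as a projection onto $T_z\mathcal R$ whose kernel is the sharp subspace, that is $P_{\mathcal R}(z)$, up to corrections of size $O(\|\nabla P_s\|+\|\nabla F\|)=O(\varepsilon/G)$.

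Applying this to the flat part $-\chi\eta_tF^{-1}Q_t(m_t+\beta_2\nabla f)$ gives the main term in \eqref{dyn-z}, once we replace $w_t$ by $z_t$ and $Q_tm_t$ by the River momentum $s_t$. Here $s_t$ solves $\dot s=-\alpha s+\nabla f(z_t)$ with $s_0=0$. The error $Q_t(m_t-s_t)$ is controlled by integrating the momentum equation: its flat part is driven by $\nabla f(w_\tau)-\nabla f(z_\tau)$, whose flat component is $O(\varepsilon\|x_\tau\|+\|x_\tau\|^2)$. The sharp part of $\dot w_t$ is killed by $\partial\Phi$, up to an $O(\varepsilon)$ leak proportional to the flat speed. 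Collecting terms, $\epsilon_{z,t}$ receives three contributions:
\begin{itemize}
\item a transient $\propto\|x_t\|^2+\|y_t\|^2\lesssim K(t)$;
\item an instantaneous $\varepsilon^2 b_t$ from the projection tilt;
\item a memory term $\varepsilon\int_0^tK(t-\tau)\,b_\tau\,d\tau$, which arises when the Step~1 bound is refined by feeding the flat speed $b_\tau$ into the error term, in place of the crude bound $G$.
\end{itemize}

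\textbf{Main obstacle.} The hardest part is making the Step~1 Lyapunov estimate robust to the time-varying basis $P_s(w_t)$ and $F(w_t)$ while getting exactly the rate $\iota_t$. The derivatives of the basis inject terms bilinear in the sharp and flat velocities. I would first attempt a uniform energy $V=\|y\|_{F^{-1}}^2+\langle x,H_sx\rangle+c\,\langle x,y\rangle_{F^{-1}}$ with $c=\alpha/2$, and check the negativity of its derivative through a $2\times2$ matrix inequality per eigenvalue. If that fails for large $\beta_1\eta_t\lambda$, I would fall back on separate cases according to which of the two terms in the minimum defining $\iota_t$ is active.
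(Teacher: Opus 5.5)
Your overall architecture matches the paper's Lemmas~\ref{app:thm1-0} and~\ref{app:thm1-1}: a sharp-direction Lyapunov estimate, then pushing the dynamics through $\Phi$, comparing $m_t$ with the River momentum $s_t$, and feeding the flat speed back to produce the memory term. \textbf{But Step~1 does not go through as designed, and the obstacle is not the one you flag.} The rotation of $\operatorname{Range}(P_s(w_t))$ and the variation of $F(w_t)$ are the benign part, since Assumption~\ref{ass:rsc} makes exactly these $O(\varepsilon)$. What it does not control is the sharp stiffness itself. The spectrum of $(F^{-1/2}\nabla^2 f F^{-1/2})|_s$ is only confined to $[\lambda_{H_F,s},L_{H_F,s}]$ and may vary at $O(1)$ rates along the trajectory. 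So your claim that the curvature-variation terms in $e_{i,t}$ are $\lesssim\varepsilon(\cdot)$ is unjustified, and there is no fixed-stiffness decoupling into scalar oscillators. Concretely, any quadratic energy weighted by $H_s(w_t)$ has a derivative containing $\nabla^3 f(w_t)[\dot w_t,x_t,x_t]$. Since $\|\dot w_t\|_{F(w_t)}\lesssim\eta_t$, this is of the same order $\eta_t\|x_t\|^2$ as the dissipation $\lambda_{H_F,s}\beta_1\eta_t\|x_t\|^2$. Its constant depends on $f$ and carries no factor of $\varepsilon$, so no choice of $\varepsilon_0$ absorbs it and the rate $\iota_t$ is lost; freezing $H_s$ just turns it into an $O(1)$ stiffness mismatch. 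Moreover, the Taylor remainder in $P_s\nabla f(w)\approx H_s(w-\Phi(w))$ is $O(\|x\|^2)$, so the linearization is only local, whereas the theorem starts from an arbitrary $w_0\in U$. Your explicit candidate also has two fixable defects. Without the weight $\eta_t$ on the kinetic part, a coupling $2(1-\eta_t)\,y_t^\top F^{-1}H_sx_t$ survives. And the cross term $+c\langle x,y\rangle_{F^{-1}}$ with $c>0$ has the wrong sign, since its derivative contains $+c\,x^\top F^{-1}H_sx$.

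The missing idea is to avoid linearization, as the paper does in Lemma~\ref{app:thm1-0}, by using the nonlinear energy $V_t=f(w_t)-f(\Phi(w_t))+\tfrac{\eta_t}{2}\|m_t\|^2_{F_s^\dagger(w_t)}$. Along \eqref{2-ode01} the couplings $\mp\eta_t\langle\nabla f(w_t),F_s^\dagger m_t\rangle$ cancel exactly. What remains is $-\alpha\eta_t\|m_t\|^2_{F_s^\dagger}-\beta_1\eta_t\|\nabla f(w_t)\|^2_{F_s^\dagger}$, plus $\tfrac{\dot\eta_t}{2}\|m_t\|^2_{F_s^\dagger}\le0$ and $O(\varepsilon)$ terms from $\nabla F_s^\dagger$ and $\nabla\Phi-P_f$. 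No cross term is needed, because the Hessian damping already dissipates the potential. The estimate is closed by the PL-type sandwich of Lemma~\ref{pl-condition-lemma}, $\tfrac{\gamma\lambda_{F_s}}{2}\|w-\Phi(w)\|_2^2\le f(w)-f(\Phi(w))\le\tfrac{1}{2\gamma}\|P_sF^{-1}\nabla f(w)\|_F^2$ with $\gamma=\lambda_{H_F,s}-\delta_{F_s}/2$. This comes from integrating the contraction estimate of Lemma~\ref{exp-decay-proj-to-river} along the sharp flow $\psi_t$. Only the uniform spectral bounds and $\delta_{F_s}$ enter, so no third derivatives appear, the bound holds from any $w_0\in U$, and the rate $\iota_t$ comes out directly.

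Step~2 has a smaller hole of the same kind. $\dot z_t=\nabla\Phi(w_t)\dot w_t$ needs $\nabla\Phi$ at the off-River point $w_t$. The Li et al.\ identification at $z_t$ does not give this without a uniform bound on $\nabla^2\Phi$, which the assumptions do not provide. The paper instead proves the uniform estimate \eqref{nabla-Phi-P-f-diff} for every $w\in U$ via the variational equation of $\psi_t$. It then passes $P_f(w_t)\to P_f(z_t)\to P_{\mathcal R}(z_t)$ using \eqref{Pf-lip} and Lemma~\ref{Pm-Pf-diff}. Also, $\nabla\Phi(w)$ annihilates $F_s^\dagger\nabla f(w)$ exactly by \eqref{nabla-phi-Fs-nabla-0}. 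So the sharp leak is proportional to the sharp momentum $\|m_t\|_{F_s^\dagger(w_t)}$ and belongs to the transient, not to the flat-speed terms.
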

The proof is in Lemmas \ref{app:thm1-0} and \ref{app:thm1-1}. We treat $\varepsilon$ as a small constant  to enforce a slowly varying  $\mathcal{R}$.  
Inspired by \cite{Attouch2022}, we can take $\eta_t\equiv\eta$, $\alpha=2\lambda_{F_{H,s}}^{1/2}\eta $, $\beta_1=\lambda_{F_{H,s}}^{-1/2}$ to  achieve a fast decay $K(t)=\exp( -\eta \lambda_{F_{H,s}}^{1/2}t)$ (\ie fast convergence on sharp directions). 
In this case, Theorem \ref{thm:stage-1} indicates that the dynamics \eqref{2-ode01} first converge to the $\mathcal{O}(\varepsilon^{1/2})$ neighborhood of $\mathcal{R}$ after a $\Theta(\eta^{-1}\lambda_{F_{H,s}}^{-1/2}\log(\varepsilon^{-1}))$ interval, then faithfully track the decoupled acceleration dynamics on  $\mathcal{R}$  closely,  where the tracking error exhibits a fading memory property, driven predominantly by the immediate history of the underlying dynamics. Next we clarify  how $\chi$ and $\beta_2$ influence the convergence of the decoupled dynamics in $\mathcal{R}$, where the detailed proof is in Lemma \ref{app:thm2-0}.  
\begin{theorem}\label{thm:river-rate}
 Suppose Assumptions \ref{ass:river},  \ref{ass:rsc} hold and $\dot{\eta}_t\le0$. Let  $\Delta_f=f(z_0)-\inf f$. For the unperturbed dynamics  in \eqref{dyn-z} with $\epsilon_{z,t}\equiv0$, there exists a constant $\varepsilon_1$ (depending on $f$ and hyper-parameters in \eqref{2-ode01}), such that for any  $\varepsilon\le \varepsilon_1 $ and any $T\le T_{\mathrm{max}}$,  the trajectory $z_t $ satisfies 
 \begin{align*} 
    &\int_0^T\eta_t \|P_\mathcal{R}(z_t) F(z_t)^{-1}\nabla f(z_t)\|_{F(z_t)}^2 dt\le \frac{2\Delta_f}{\chi\beta_2},
    \\&\int_0^T\eta_t \|s_t\|_{F(z_t)}^2 dt \le \frac{2\Delta_f}{\chi\alpha}.
     \end{align*} 
\end{theorem}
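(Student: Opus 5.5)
The plan is a Lyapunov (energy) argument for the unperturbed river dynamics \eqref{dyn-z} with $\epsilon_{z,t}\equiv 0$. Write $P=P_\mathcal{R}(z_t)$ and $F=F(z_t)$. I would use the time-dependent energy
\begin{equation*}
E_t = f(z_t) + \tfrac{\chi\eta_t}{2}\,\|P F^{-1} s_t\|_{F}^2 ,
\end{equation*}
and show that, up to $O(\varepsilon)$ corrections, $\dot E_t \le -\chi\eta_t\bigl(\beta_2\|PF^{-1}\nabla f(z_t)\|_F^2+\alpha\|PF^{-1}s_t\|_F^2\bigr)$. Since $s_0=0$, we have $E_0=f(z_0)$, and $E_T\ge \inf f$. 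Integrating over $[0,T]$ and absorbing the error terms into half of the dissipation then gives both bounds, which accounts for the factor $2$ in the statement.

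\textbf{Step 1 (geometry of $P$).} First I would show that, on $\mathcal{R}$, the projection $P_\mathcal{R}$ onto the flat/tangent directions is self-adjoint with respect to $\langle\cdot,\cdot\rangle_F$ and commutes with $F^{-1}$. This uses the common eigenbasis of $\nabla^2 f$ and $F$ on $\operatorname{Range}(P_s)$ from Assumption~\ref{ass:river}: the sharp subspace is $F$-invariant, so its complement is too. As a consequence, $\langle \nabla f, PF^{-1}v\rangle = \langle PF^{-1}\nabla f, PF^{-1}v\rangle_F$ for any $v$.

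\textbf{Step 2 (differentiate).} Using Step 1,
\begin{equation*}
\tfrac{d}{dt}f(z_t) = -\chi\eta_t\beta_2\|PF^{-1}\nabla f\|_F^2-\chi\eta_t\langle PF^{-1}\nabla f,PF^{-1}s\rangle_F .
\end{equation*}
Differentiating the kinetic term and using $\dot s=-\alpha s+\nabla f$ gives three contributions. The first is $\tfrac{\chi\dot\eta_t}{2}\|PF^{-1}s\|_F^2\le 0$, using $\dot\eta_t\le 0$. The second is $\chi\eta_t\langle PF^{-1}s, PF^{-1}(-\alpha s+\nabla f)\rangle_F$. Its cross term exactly cancels the cross term in $\tfrac{d}{dt}f$, and this cancellation is precisely why the weight $\chi\eta_t/2$ was chosen. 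The third collects the terms from $\tfrac{d}{dt}P_\mathcal{R}(z_t)$ and $\tfrac{d}{dt}F(z_t)$ along $\dot z_t$.

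\textbf{Step 3 (error control).} By Assumption~\ref{ass:rsc}(\ref{ass:spectral}), $\|\nabla P_s\|_{\mathrm{op}}\le\delta/G$ and $\|\nabla F\|_{\mathrm{op}}\le\delta_F/G$. Together with $\|\dot z_t\|\lesssim \chi\eta_t\bigl(\|PF^{-1}s\|_F+\beta_2\|PF^{-1}\nabla f\|_F\bigr)$ and the spectral bounds $[\lambda_F,\rho]$, these give a bound on the Step~2 remainder of the form $C\varepsilon\chi\eta_t\bigl(\|PF^{-1}s\|_F^2+\|PF^{-1}\nabla f\|_F^2\bigr)$. Here $\|s\|$ is bounded because the growth of $s$ is controlled by the gradient bound $G$ over the window $T\le T_{\max}$. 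Choosing $\varepsilon_1$ so that $C\varepsilon\le\tfrac12\min\{\alpha,\beta_2\}$ leaves at least half of each dissipation term.

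\textbf{Step 4 (from $\|PF^{-1}s\|_F$ to $\|s\|_F$).} The remaining gap is that the second claimed bound involves $\|s_t\|_F$ rather than its projected part. I would handle this by showing that the sharp component $(I-P)s_t$ is only $O(\varepsilon)$ relative to the flat part, so it can be absorbed in the same way. The reason is that $s_t$ is an exponential average of $\nabla f(z_\tau)$, and on $\mathcal{R}$ we have $P_s\nabla f=0$, so the sharp part of $s_t$ is generated only by the rotation of $P_s$ along the path, which is of size $\delta$.

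I expect Step~4, together with bookkeeping the $O(\varepsilon)$ commutator terms in Step~3 cleanly enough to keep the constants exactly $2/(\chi\beta_2)$ and $2/(\chi\alpha)$, to be the main obstacle. The fallback is a Grönwall estimate for $(I-P)F^{-1}s_t$ driven by $\delta\|\dot z\|$.
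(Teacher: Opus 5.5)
Your energy $E_t=f(z_t)+\tfrac{\chi\eta_t}{2}\|P_\mathcal{R}F^{-1}s_t\|_F^2$ has a genuine gap: it cancels the cross terms exactly, but only by forcing you to differentiate $P_\mathcal{R}(z_t)$ along the trajectory, and the assumptions do not control that derivative.

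\textbf{Step 1 conflates $P_\mathcal{R}$ with the flat projection.} The common-eigenbasis assumption makes $\operatorname{Range}P_s(z)$ and $\operatorname{Range}P_f(z)$ $F$-invariant. But $T_z\mathcal{R}=\{u:\ P_s\nabla^2 f(z)u=-P_s\nabla P_s(z)[u]\nabla f(z)\}$ is only a tilted copy of $\operatorname{Range}P_f(z)$. The two agree only up to $O(\delta_{F_s}/\lambda_{H_{F,s}})$, by Lemma~\ref{Pm-Pf-diff}. So $P_\mathcal{R}$ is $F$-self-adjoint, but in general it does not commute with $F^{-1}$. The identity you use in Step 2 needs only $F$-self-adjointness, so that part survives.

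\textbf{Step 3 is where the argument breaks.} $P_\mathcal{R}(z)$ is built from $\nabla P_s(z)$ and $\nabla^2 f(z)$. Hence $\tfrac{d}{dt}P_\mathcal{R}(z_t)=\nabla P_\mathcal{R}(z_t)[\dot z_t]$ involves $\nabla^2 P_s$, paired with $\nabla f$ of size up to $G$; this is the curvature (second fundamental form) of the river. Assumption~\ref{ass:rsc} bounds only $\nabla P_s$ and $\nabla F$, and smallness of $\nabla P_s$ does not make $\nabla^2 P_s$ small. So the $C\varepsilon$ bound you claim for the remainder is not available under the stated hypotheses.

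A second problem: for a projection, $\dot P=(I-P)\dot P P+P\dot P(I-P)$. The remainder therefore pairs $PF^{-1}s_t$ with the sharp part $(I-P)F^{-1}s_t$. It cannot be bounded by flat quantities alone, as Step 3 asserts, so your Step 4 would already be needed inside Step 3. Step 4 itself holds only in an integrated, memory-kernel form, via $P_s(z_t)s_t=\int_0^t e^{-\alpha(t-\tau)}(P_s(z_t)-P_s(z_\tau))\nabla f(z_\tau)\,d\tau$, not pointwise.

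\textbf{The missing idea is not to project the kinetic term.} The paper uses $W_t=f(z_t)-\inf f+\tfrac12\chi\eta_t\|s_t\|^2_{F(z_t)^{-1}}$. The only geometric object that gets differentiated is then $F$. With the uniform bound $\|\dot z_t\|_F\le\eta_t C_w G$ one gets $\chi\eta_t s_t^\top\nabla F^{-1}(z_t)[\dot z_t]s_t\le\chi\eta_t^2\delta_{F_s}C_w\|s_t\|^2_{F^{-1}}$. This is absorbed by the dissipation $-\alpha\chi\eta_t\|s_t\|^2_{F^{-1}}$, which acts on the full momentum, so no analogue of your Step 4 is needed.

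The price is that the cross terms no longer cancel exactly. They leave $A=-\chi\eta_t\langle (P_\mathcal{R}-I)F^{-1}\nabla f(z_t),s_t\rangle$. On $\mathcal{R}$, however, $F^{-1}\nabla f=P_fF^{-1}\nabla f$. Lemma~\ref{Pm-Pf-diff} then gives the static estimate $\|(P_\mathcal{R}-I)F^{-1}\nabla f\|_F\le 2\tfrac{\delta_{F_s}}{\lambda_{H_{F,s}}}\|P_\mathcal{R}F^{-1}\nabla f\|_F$, which involves no derivative of $P_\mathcal{R}$. AM--GM then absorbs $A$ into half of each dissipation term, giving exactly $2\Delta_f/(\chi\beta_2)$ and $2\Delta_f/(\chi\alpha)$.

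Note that what this argument controls is $\|s_t\|_{F(z_t)^{-1}}=\|F(z_t)^{-1}s_t\|_{F(z_t)}$. That is also how your Step 4 implicitly reads the $\|s_t\|_F$ in the statement.
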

   \vspace{-2mm}
 \cref{thm:river-rate} demonstrates that increasing both $\chi \ge 1$ and $\beta_2$ yields a tighter integral bound for the Riemannian gradient on  $\mathcal{R}$ (\ie $P_\mathcal{R}(z_t) F(z_t)^{-1}\nabla f(z_t)$) and the momentum $s_t$. This theoretical result validates the effectiveness of the strategy employed in \ours.

\section{Conclusion}
We propose \ours, a generalized strategy   to accelerate LLM pre-training by enhancing training dynamics along flat directions. Experimentally, \ours significantly enhances state-of-the-art optimizers such as Muon and SOAP. Our results indicate that \ours consistently achieves lower terminal loss and exhibits superior scaling behavior compared to corresponding baselines across various LLM pre-training tasks. For future work, we plan to explore adaptive mechanisms for dynamic hyper-parameter tuning within \ours, and extending its integration with other emerging optimizers.



\section*{Impact Statement}

This paper presents work whose goal is to advance the field of Machine
Learning, with a focus on understanding and improving the pre-training of LLMs. There are many potential societal consequences of our work, none
which we feel must be specifically highlighted here.


\bibliography{ref}
\bibliographystyle{icml2026}


\newpage
\appendix
\onecolumn



\begin{center}
    \noindent\rule{\textwidth}{1.0pt} 
    \vspace{-0.25cm}
    \LARGE \textbf{Appendix} 
    \noindent\rule{\textwidth}{1.0pt}
\end{center}

\startcontents[sections]
\printcontents[sections]{l}{1}{\setcounter{tocdepth}{2}}

\vspace{5mm}

\section{More Related Works}\label{app:related_works}

\paragraph{Comparison with methods enhancing dynamics along flat directions.}
Recent strategies accelerate LLM pre-training by assigning larger learning rates to flat components. \cite{wang2024improving} adjust learning rates based on an element-wise diagonal Fisher approximation, while \cite{wang2025the} apply block-wise learning rates to blocks identified as flat via averaged sharpness. \cite{zhou2025bsfaleveragingsubspacedichotomy} further employs  a matrix-level precision subspace  detected via a lazy PCA-based estimator.
Compared to these methods, \ours identifies flat subspaces with matrix-level precision using highly efficient numerical techniques that introduce no extra optimizer states.
Crucially, prior studies focus exclusively on learning rate adjustment, a strategy \ours subsumes by simply setting $\chi \ge 1$. In contrast, we \textbf{also address the dynamics of momentum} in anisotropic landscapes. We elucidate how momentum and preconditioners jointly influence training dynamics and \textbf{enhance momentum evolution} in flat directions by explicitly increasing Hessian damping.

\paragraph{Comparison with algorithmic frameworks for adaptive optimizers.}
Recent works~\cite{pethick2025training,bernstein2024oldoptimizernewnorm,xie2026controlledllmtrainingspectral} interpret the update step of certain adaptive optimizers as the solution to a constrained optimization problem governed by operator norms:
\begin{align}
U^* = \operatorname*{argmin}_{\|U\|_{\operatorname{op}} \le 1} \langle U, M \rangle,
\end{align}
where $M$ denotes the momentum at the current iterate, $\langle \cdot, \cdot \rangle$ is the Frobenius inner product, and $\|\cdot\|_{\operatorname{op}}$ represents a specific operator norm. This formulation formally unifies methods like Muon and Lion, offering insights into their stability (e.g., by controlling the magnitude of activations during propagation). However, a  limitation is that the induced geometry is typically independent of the local curvature at the current iterate, making it difficult to encompass methods that decouple momentum and precondition updates like AdamW and SOAP, which rely on iterates-dependent preconditions. Furthermore, this perspective fails to justify the algorithmic necessity of accumulating momentum before preconditioning. In contrast, our proposed Riemannian ODE framework unifies a broader class of adaptive optimizers. It elucidates the joint role of momentum and preconditioners in shaping training dynamics and extends naturally to complex momentum schemes (see Appendix~\ref{section-ademamix}).

\section{Algorithm Details}\label{appen:alg-details}

\subsection{Details of \oursmuon}\label{appen:details-oursmuon}

For Muon and \oursmuon, we adopt the Newton-Schulz (NS) iteration  termed Polar Express in \cite{amsel2025polarexpressoptimalmatrix}, and set   
Newton–Schulz iterations to 6.    This approach yields faster convergence to the theoretical optimum than the method in \cite{jordan2024muon}.

For the \texttt{embedding, norm, output} blocks, we use the same methods as in \ourssoap with $Q_l=I,Q_r=I$ to estimate the projection $P_k$ to sharp directions. See more details in \cref{appen:details-ourssoap}. Next we  introduce how to $P_k$ for other blocks using \oursmuon.

\paragraph{Estimating $P_k$ via Efficient Composite Newton-Schulz.} 
To efficiently estimate the sharp subspace projection $P_k$ at iteration $k$, we employ a lightweight composite Newton-Schulz (NS) scheme. The process involves two main steps: constructing a filtering operator and dynamically adjusting the threshold. For any matrix $A$ with  Singular Value Decomposition (SVD) $A=U \Sigma V^\top$, We define $\operatorname{NS}(A)=A(A^\top A)^{-\frac{1}{2}}=UV^\top$, where the inverse   denotes the Moore-Penrose pseudo-inverse.

\textbf{1. Construction of the Filtering Operator.}
We first define a dynamic threshold $\tau_k = l_k\|\widetilde{M}_{k}\|_F$, where $l_k$ is a scaling factor. This threshold $\tau_k$ serves to delineate the sharp subspace: specifically, we select the eigenspace of $\widetilde{M}_{k}^\top \widetilde{M}_{k}$ associated with eigenvalues larger than $\tau_k$. The adaptive update mechanism for $l_k$ (and thus $\tau_k$) will be detailed subsequently.  
We then compute the filtering operator $T_k \in \mathbb{R}^{m\times n}$ as:
\begin{equation}
    T_k = \frac{1}{2}\operatorname{NS}(\widetilde{M}_{k}) + \frac{1}{2}\operatorname{NS}\left( \frac{\widetilde{M}_{k}}{\tau_k} - \operatorname{NS}(\widetilde{M}_{k}) \right).
\end{equation}

The projection matrix onto the sharp subspace is subsequently obtained by $P_k = T_k^\top T_k$,  the rationale for which is detailed in the subsequent parts (\eqref{ns-res} and \eqref{ns-res-t}).  It is worth noting that this method for estimating $P_k$ necessitates relatively high precision from the  NS  function. Consequently, we employ the Polar Express method \cite{amsel2025polarexpressoptimalmatrix} rather than the vanilla NS iterations used in \cite{jordan2024muon}.

\textbf{2. Dynamic Sharp Subspace Dimension   Adjustment.}
To ensure the rank of the projection $P_k$ approximates the target dimension $d_s$, we update the scaling factor $l_k$ based on the effective rank (measured by $\|P_k\|_F^2$):
\begin{align}
    l_{k+1} = 
    \begin{cases}
        1.05 \cdot l_{k}, & \text{if } \|P_k\|_F \ge \sqrt{d_s}, \\
        0.95 \cdot l_{k}, & \text{if } \|P_k\|_F < \sqrt{d_s}.
    \end{cases}
\end{align}
This feedback loop effectively maintains $\tau_k$ near the $d_s$-th largest singular value of $\widetilde{M}_{k}$.

\textbf{Theoretical Intuition.}
To understand why $T_k$ acts as a subspace filter, consider the SVD of $\widetilde{M}_{k} = U\Sigma V^\top$. Let the singular values be ordered as $\sigma_1 \ge \dots \ge \sigma_j > \tau_k > \sigma_{j+1} \ge \dots \ge 0$.
Since the Newton-Schulz iteration converges to the matrix sign function, we have $\operatorname{NS}(\widetilde{M}_{k}) = UV^\top$. Consequently, the input to the second NS term becomes:
\begin{equation}\label{ns-res}
\frac{\widetilde{M}_{k}}{\tau_k} - \operatorname{NS}(\widetilde{M}_{k})
= U \operatorname{diag}\left( \frac{\sigma_1}{\tau_k}-1, \dots, \frac{\sigma_r}{\tau_k}-1 \right) V^\top.
\end{equation}
Applying $\operatorname{NS}(\cdot)$ to \eqref{ns-res} maps positive entries (where $\sigma_i > \tau_k$) to $+1$ and negative entries (where $\sigma_i < \tau_k$) to $-1$. Therefore, we have \begin{equation}\label{ns-res-t}
    T_k = \frac{1}{2}\operatorname{NS}(\widetilde{M}_{k}) + \frac{1}{2}\operatorname{NS}\left( \frac{\widetilde{M}_{k}}{\tau_k} - \operatorname{NS}(\widetilde{M}_{k}) \right) = U \operatorname{diag}(1, \dots, 1, 0, \dots, 0) V^\top .
\end{equation}

\textbf{Computational Efficiency of \oursmuon.} Overall, \oursmuon requires only two additional NS functions per step compared to vanilla Muon, which can be executed in parallel.  Since NS iterations constitute a nearly negligible fraction of the computational cost in large-batch pre-training (dominated by gradient computation), this slight overhead is virtually imperceptible in practice. 
For example, training a LLaMA-1.3B model on the Pile dataset (global batch size 8192, micro batch size 16, sequence length 1024, on $8\times$ A800-80GB GPUs) yields a throughput of 100.4k tokens/s for \oursmuon versus 101.5k tokens/s for Muon. This reflects a throughput drop of only $\approx 1\%$. Furthermore, efficiency can be enhanced via system-level optimizations, such as kernel fusion for the NS iterations.


\subsection{Details of \ourssoap}\label{appen:details-ourssoap}

In \ourssoap, we employ a dynamic thresholding strategy  generating a smoothed projection onto the sharp directions to enhance training stability. To avoid the computational latency and memory overhead associated with exact \texttt{top\_k} operations in PyTorch, especially for large matrices, we approximate the top-$k$ thresholds via adaptive scalar variables, similar to the approach in \oursmuon.
We define $d_s$ as the target dimension of the sharp subspace, and $d_{\text{smooth}}$ as the dimension of the transitional subspace bridging the sharp and flat directions. We maintain two dynamic scalars, $l_k^s$ (controlling the sharp boundary) and $l_k^{\text{smooth}}$ (controlling the smooth boundary), initialized as $l_0^s=1.0$ and $l_0^{\text{smooth}}=0.5$.

At the $k$-th iteration, we estimate the thresholds for the top-$d_s$ and top-$(d_s+d_{\text{smooth}})$ elements based on the mean magnitude of the tensor $V_k$: $\tau_k^s = l_k^s \operatorname{mean}(V_k)$ and $\tau_k^{\text{smooth}} = l_k^{\text{smooth}} \operatorname{mean}(V_k)$. The smoothed projection matrix $P_k \in \mathbb{R}^{m\times n}$ is then constructed element-wise as:
\begin{equation}
    (P_k)_{i,j} =
    \begin{cases}
        1, & \text{if } (V_k)_{i,j} \ge \tau_k^s, \\[6pt]
        \displaystyle \frac{(V_k)_{i,j} - \tau_k^{\text{smooth}}}{\tau_k^{s} - \tau_k^{\text{smooth}}}, & \text{if } \tau_k^{\text{smooth}} \le (V_k)_{i,j} < \tau_k^s, \\[6pt]
        0, & \text{if } (V_k)_{i,j} \le \tau_k^{\text{smooth}}.
    \end{cases}
    \label{eq:threshold_smooth}
\end{equation}
Then we dynamically adjust the coefficients $l_k^s$ and $l_k^{\text{smooth}}$ based on the current sparsity levels:

\begin{align}
    l_{k+1}^s = 
    \begin{cases}
        1.05 \cdot l_k^s, & |\{V_{i,j} : V_{ij} > \tau_k^s\}| \ge d_s, \\ 
        0.95 \cdot l_k^s, & |\{V_{i,j} : V_{i,j} > \tau_k^s\}| < d_s,
    \end{cases}
\end{align}
and
\begin{align}
    l_{k+1}^{\text{smooth}} = 
    \begin{cases}
        1.05 \cdot l_k^{\text{smooth}}, & |\{V_{i,j} : V_{ij} > \tau_k^{\text{smooth}}\}| \ge d_s+d_{\text{smooth}}, \\ 
        0.95 \cdot l_k^{\text{smooth}}, & |\{V_{i,j} : V_{i,j} > \tau_k^{\text{smooth}}\}| < d_s+d_{\text{smooth}},
    \end{cases}
\end{align}
where $|\cdot|$ denotes the cardinality of the set. Finally, we enforce the constraint $l_{k+1}^{\text{smooth}} \leftarrow \min\{0.95 l_{k+1}^s, l_{k+1}^{\text{smooth}}\}$ to ensure that the smoothing threshold remains strictly lower than the sharp threshold ($l_k^{\text{smooth}} < l_k^s$). Our approach eliminates the use of inefficient top-$k$ functions, thereby incurring almost no additional computational or temporal burden.

\section{Experimental Details}\label{appen:exp-details}

{\bf Models.} We utilize two popular classes of LLM models for our pre-training experiments:
\begin{itemize}[leftmargin=2em]    
    \item {\bf LLaMA.} LLaMA~\citep{touvron2023llamaopenefficientfoundation} is a widely adopted dense decoder-only Transformer architecture. It employs Rotary Positional Embeddings (RoPE)~\citep{su2024roformer}, Swish-Gated Linear Units (SwiGLU), and Root Mean Square Layer Normalization (RMSNorm). In this work, we pre-train LLaMA models with sizes ranging from 130M to 1.3B parameters. Detailed model configurations are provided in Table~\ref{table: dense model config and max lrs}.
    \item {\bf QwenMoE.} Qwen2MoE~\citep{yang2024qwen2technicalreport} is a  prominent open-source Mixture-of-Experts (MoE) decoder-only Transformer.  In contrast to LLaMA, Qwen2MoE integrates a hybrid attention mechanism (combining sliding window and full attention) alongside its MoE architecture. For our experiments, we disable sliding window attention given the relatively short context length. We configure the model to activate 4 experts per token.  Auxiliary losses including  $z$-loss with coefficient 0.001 and load balancing loss with coefficient 0.01 are used to make sure stable training.  Refer to Table~\ref{table: moe model config and max lrs} for comprehensive configuration details.
 
\end{itemize}

\begin{table}[!ht]
		\centering
        \renewcommand{\arraystretch}{1.25}
		\caption{\small Dense model configurations and optimally-tuned peak learning rates for Muon and SOAP.}
		\label{table: dense model config and max lrs}
		\begin{small}
		\begin{tabular}{l|c|c|c|c|c|c|c}
		\hline 
		Acronym & Size & $d_{\mathrm{model}}$ & $d_{\mathrm{FF}}$ & n$\_$head & depth &  \texttt{lr\_max} of Muon  &  \texttt{lr\_max} of SOAP \\\hline\hline 
	LLaMA (0.13B) & 134M & 768  & 2048 & 12 & 12 & 5e-3 (on C4) & - \\
	LLaMA (0.25B) & 247M & 768 & 2560 & 16 & 24 & 3e-3 (on Pile) &  3e-3 (on Pile)\\
	LLaMA (0.5B) & 518M & 1280 & 3456 & 20 & 22 & 2e-3 (on Pile)&  2e-3 (on Pile) \\
	LLaMA (1.3B) & 1339M & 2048 & 5461 & 32 & 24 & 1e-3 (on Pile) & -\\ 
	 \hline 
	\end{tabular}
	\end{small}
\end{table}

\begin{table}[!ht]
		\centering
        \renewcommand{\arraystretch}{1.25}
		\caption{\small MoE model configurations and optimally-tuned peak learning rates for Muon on Pile.}
		\label{table: moe model config and max lrs}
		\begin{small}
		\begin{tabular}{l|c|c|c|c|c|c|c|c}
		\hline 
		Acronym & Size & Activated Size & $d_{\mathrm{model}}$ & $d_{\mathrm{FF}}$ & n$\_$head & depth & n$\_$experts & \texttt{lr\_max} \\
		\hline\hline 
	QwenMoE (1B) & 1040M & 297M & 768 & 3072 & 12 & 15 & 32 & 2e-3 \\
	 \hline 
		\end{tabular}
		\end{small}
\end{table}

\paragraph{Datasets.} We conduct pre-training on the following datasets: \begin{itemize}[leftmargin=2em] \item \textbf{C4}~\citep{raffel2020exploring}. The Colossal Clean Crawled Corpus (C4) dataset is employed for our small-to-medium scale pre-training experiments. We use the T5 tokenizer with a vocabulary size of 32,100.

\item \textbf{The Pile}~\citep{gao2020pile}. We utilize The Pile for large-scale (or larger-context) pre-training tasks. For this dataset, we adopt the LLaMA-2 tokenizer~\citep{touvron2023llama} with a vocabulary size of 32,000. 
\end{itemize}

\paragraph{Sequence Packing/Batching.}
For the Pile dataset, we employ a standard \emph{sequence packing} strategy: documents are tokenized, concatenated into a continuous stream, and then segmented into fixed-length sequences of size $L$ (depending on the dataset). This approach minimizes padding overhead and ensures a near-constant number of effective tokens per batch. Training progress is measured in terms of total tokens, with evaluation performed on the official validation splits.

{\bf LR schedulers.}
We evaluate two popular learning-rate (LR) scheduling strategies with 1,000 warm-up steps in all pre-training experiments:
\begin{itemize}[leftmargin=2em]
\item \texttt{cos} (cosine scheduler)~\citep{touvron2023llama}:
a linear warm-up to peak \texttt{lr\_max}, followed by cosine decay to a terminal LR \texttt{lr\_min}.
Following~\citet{hoffmann2022training}, we set $\texttt{lr\_min}=0.1\times \texttt{lr\_max}$.

\item \texttt{wsd} (warmup-stable-decay scheduler)~\citep{hu2024minicpm,hagele2024scaling}:
a linear warm-up to \texttt{lr\_max}, followed by a stable phase keeping LR at \texttt{lr\_max} (up to 80\% of total steps), and then a linear decay to 0 over the final 20\% steps.
\end{itemize}




\subsection{Experimental Details in Sections~\ref{practical-acc} }\label{appen:exp-details-align}

In this subsection, we demonstrate that gradients (first-order information) can effectively approximate the top eigenspaces of the Hessian (second-order information) for matrix block parameters in language models. Similar alignment phenomena have been extensively investigated and verified in simplified models, both theoretically and empirically \cite{wang2023the,wu2022the}.  To validate this in our context, we conduct a experiment on a small LLAMA2 model with $d_{\operatorname{model}}=16,d_{\operatorname{model}}=32,\text{n\_head}=4,\text{depth}=4,\text{vocabulary size}=8$, $\#\text{params}=10,640$. The model is trained using AdamW for 100 steps with a  token batch size $8\times8=64$ and $\text{lr}=0.001$. For Q,K,V,O, FFN (up, gate, down) blocks, we compute the Hessian eigenvalue distribution within each block (\cref{fig:app-block-hessian,fig:Schematic_illustration}), test the alignment degree of each row's Hessian with $G^\top G$, and each column's Hessian with  $G G^\top$ (\cref{fig:app-align,fig:Schematic_illustration}).

\begin{figure*}[!ht]
    \centering

    \parbox{0.46\linewidth}{\centering
        \includegraphics[width=\linewidth]{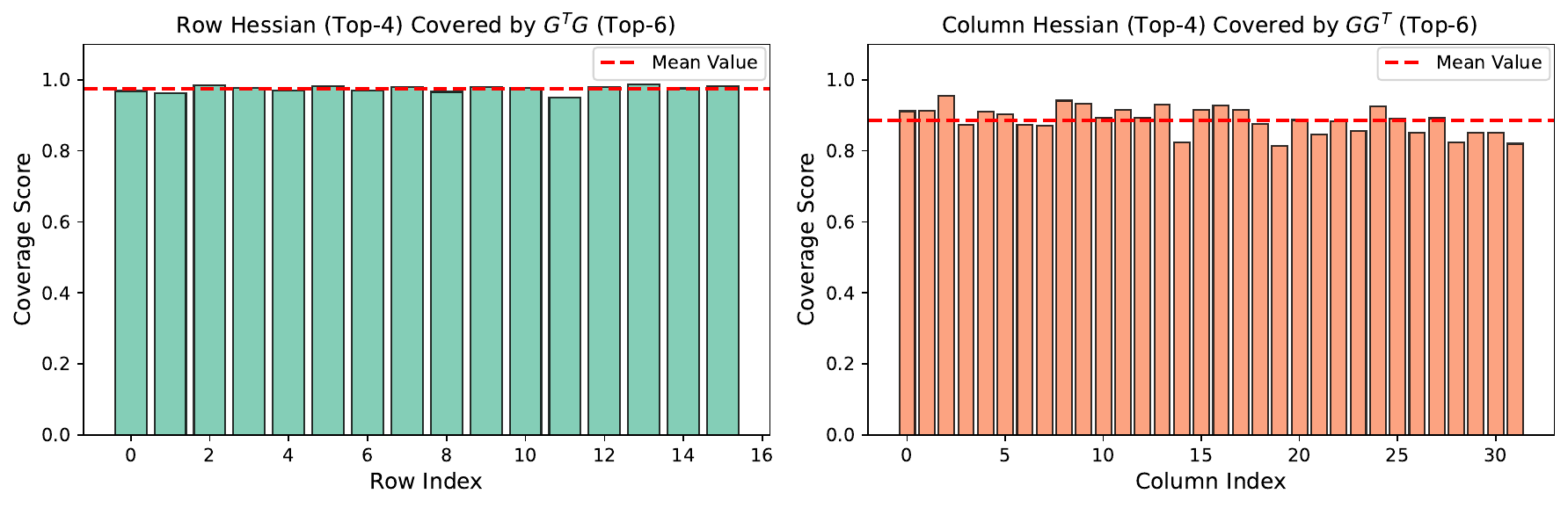}\\
        \small (a) \texttt{down\_proj}
    }
    \hfill
    \parbox{0.46\linewidth}{\centering
        \includegraphics[width=\linewidth]{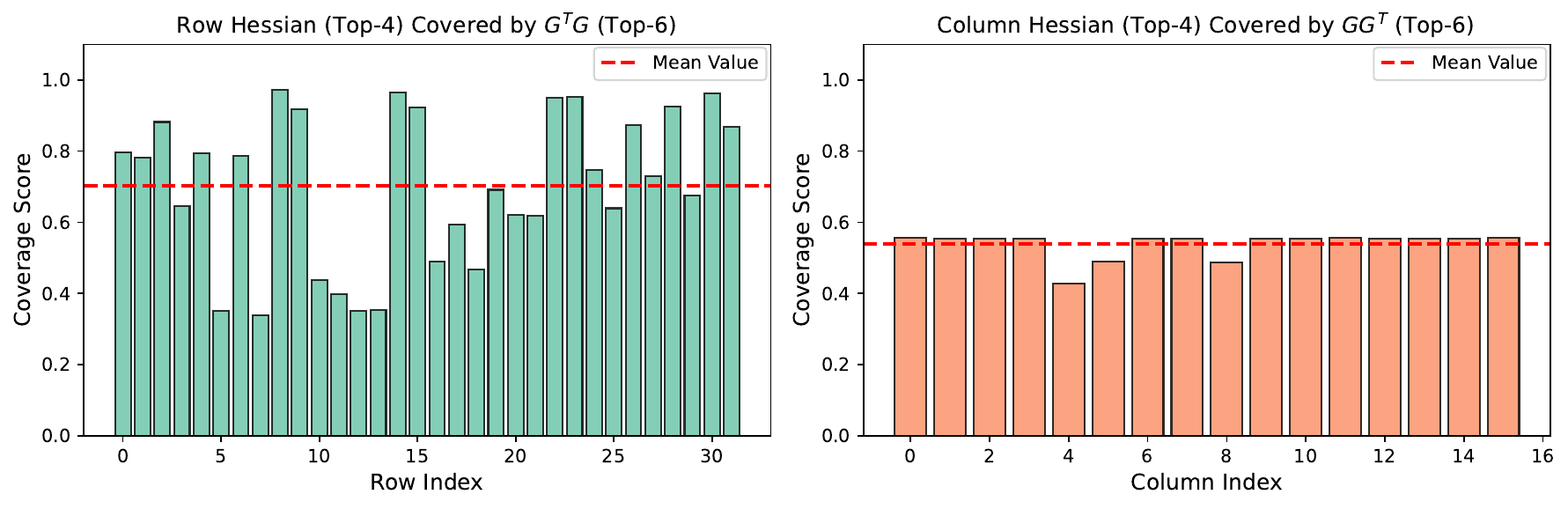}\\
        \small (b) \texttt{gate\_proj}
    }
    
    \vspace{0.5cm}

    \parbox{0.46\linewidth}{\centering
        \includegraphics[width=\linewidth]{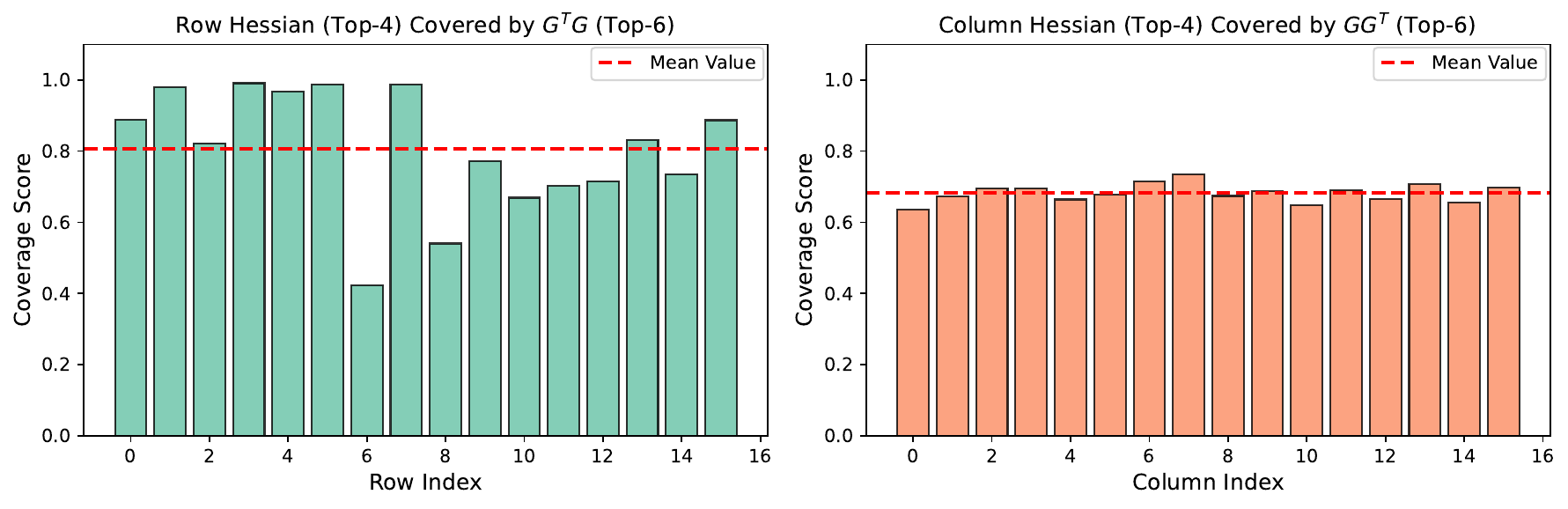}\\
        \small (c) \texttt{q\_proj}
    }
    \hfill
    \parbox{0.46\linewidth}{\centering
        \includegraphics[width=\linewidth]{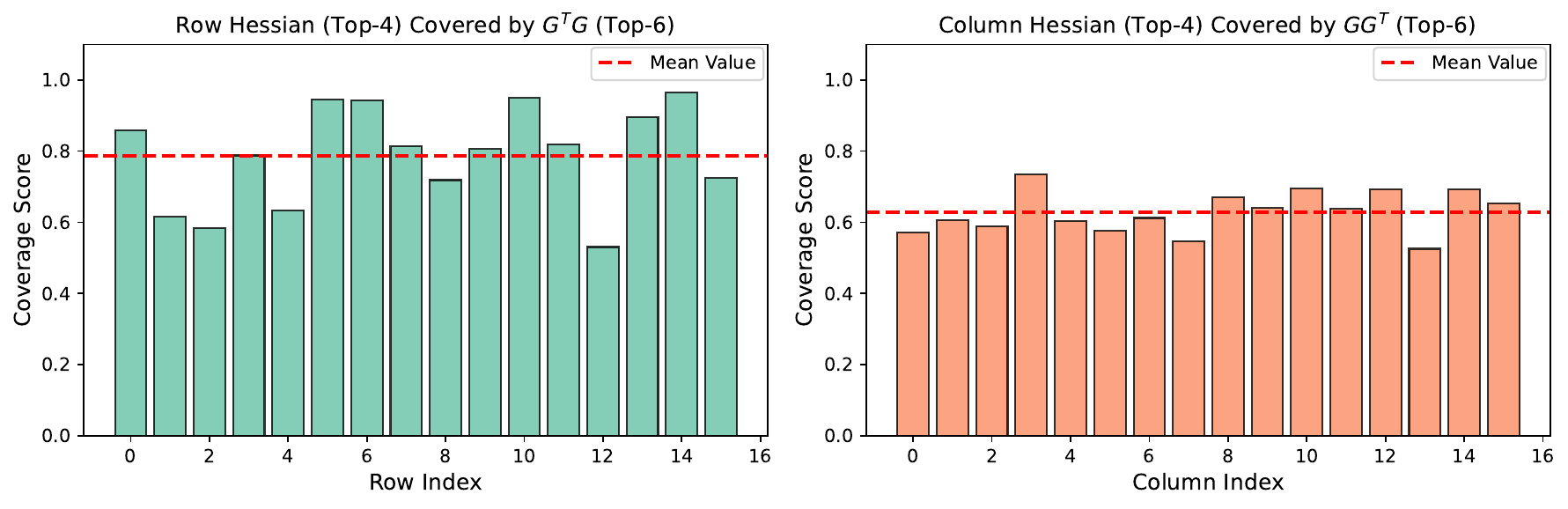}\\
        \small (d) \texttt{k\_proj}
    }
    
    \vspace{0.5cm}

    \parbox{0.46\linewidth}{\centering
        \includegraphics[width=\linewidth]{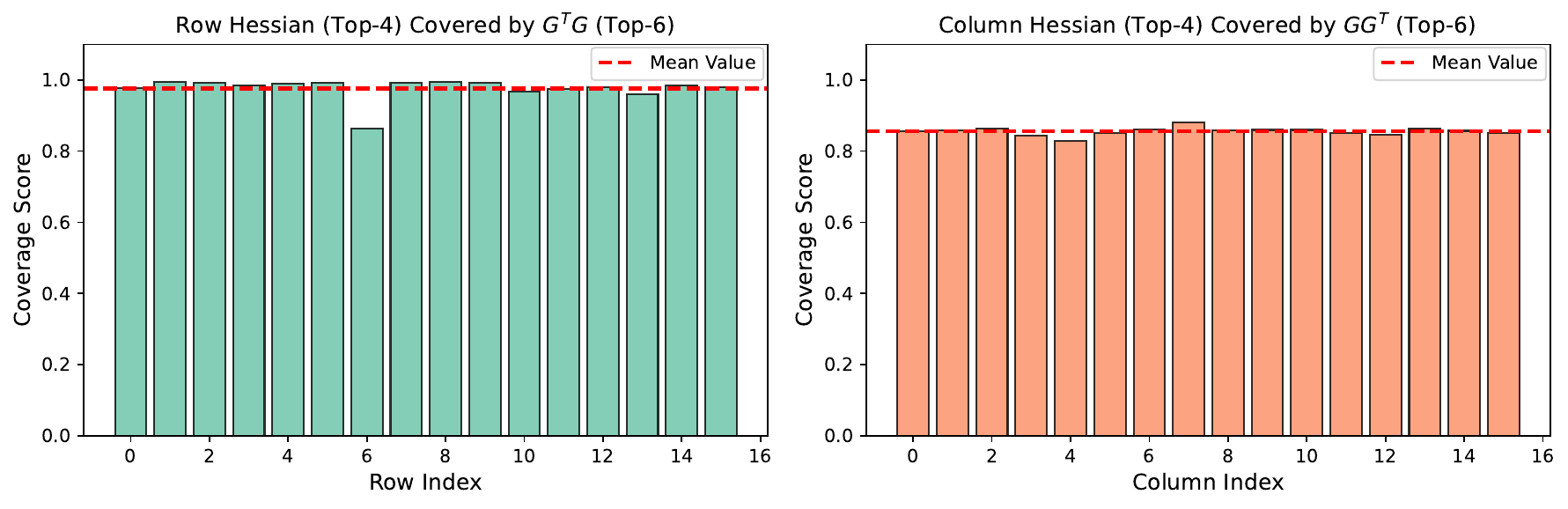}\\
        \small (e) \texttt{v\_proj}
    }
    \hfill
    \parbox{0.46\linewidth}{\centering
        \includegraphics[width=\linewidth]{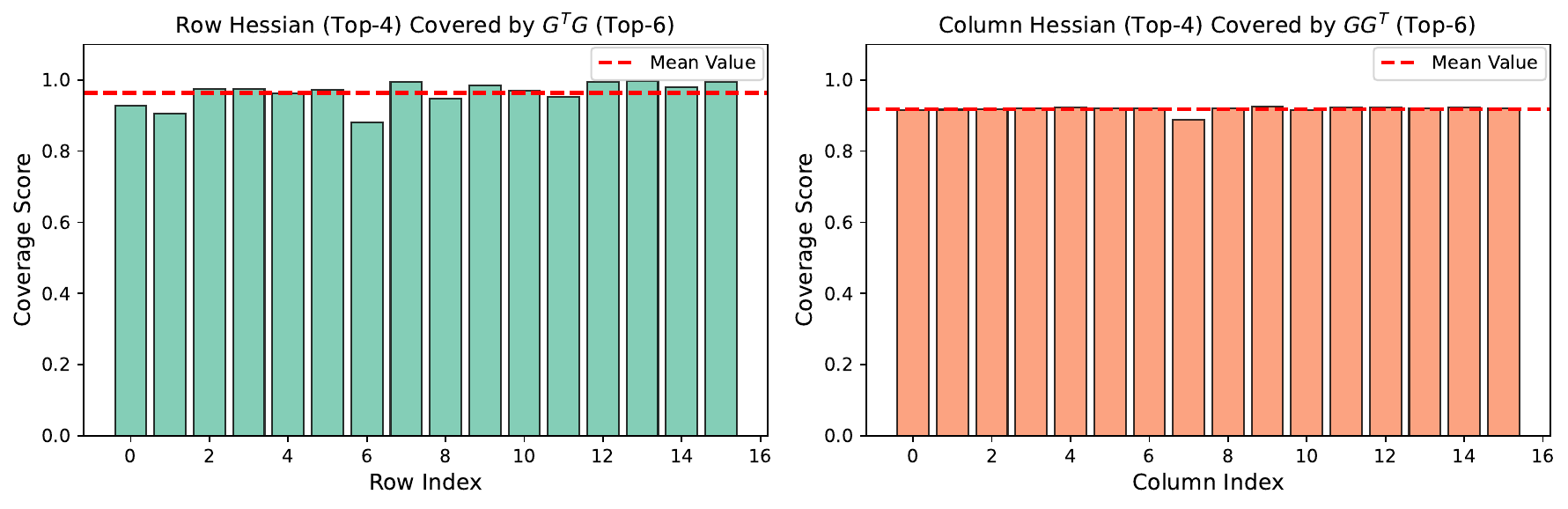}\\
        \small (f) \texttt{o\_proj}
    }
    
    \caption{Hessian eigenvalue distribution of different blocks in the toy LLaMA2 model. The results of the \texttt{up\_proj} are in \cref{fig:Schematic_illustration}. }
   
    \label{fig:app-align}
\end{figure*}

\begin{figure*}[!ht]
    \centering

    \parbox{0.3\linewidth}{\centering
        \includegraphics[width=\linewidth]{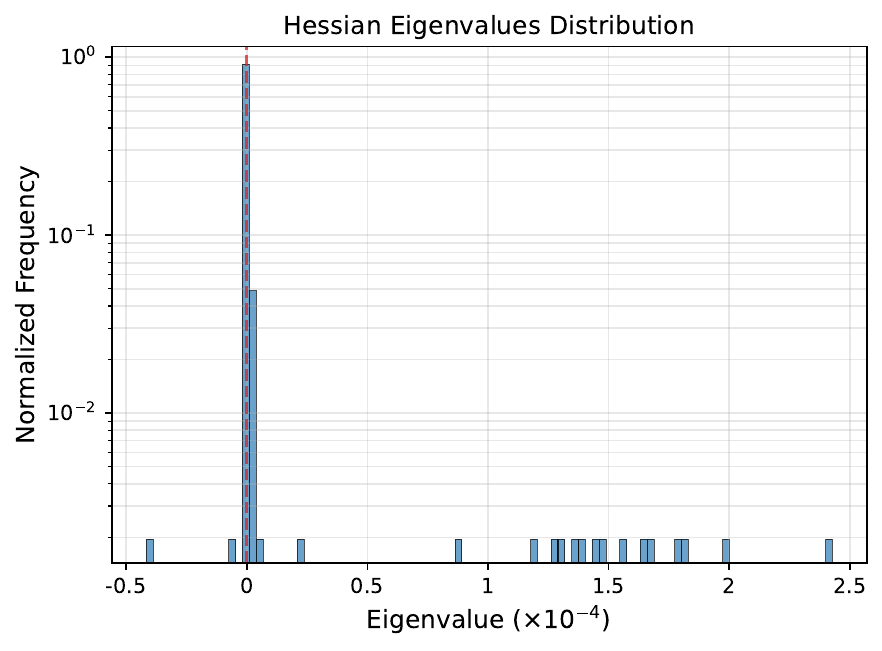}\\
        \small (a) \texttt{down\_proj}
    }
    \hfill
    \parbox{0.3\linewidth}{\centering
        \includegraphics[width=\linewidth]{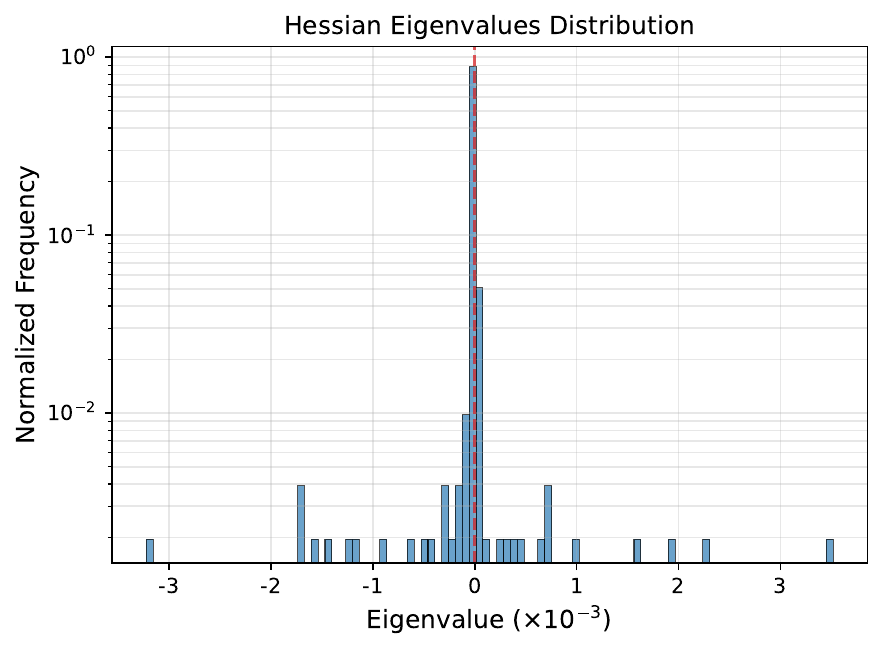}\\
        \small (b) \texttt{gate\_proj}
    }
    \hfill
    \parbox{0.3\linewidth}{\centering
        \includegraphics[width=\linewidth]{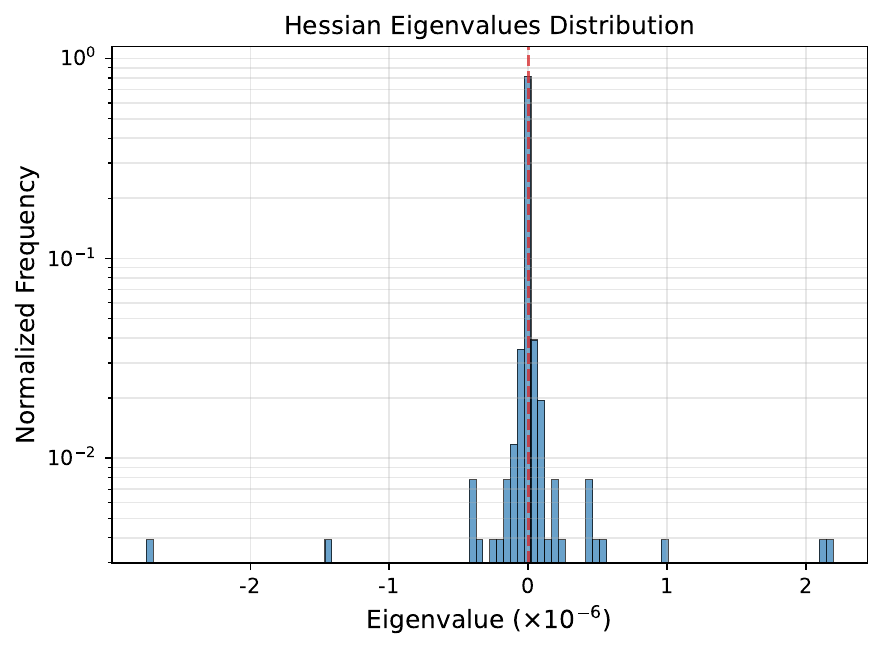}\\
        \small (c) \texttt{q\_proj}
    }

    \vspace{0.5cm}

    \parbox{0.3\linewidth}{\centering
        \includegraphics[width=\linewidth]{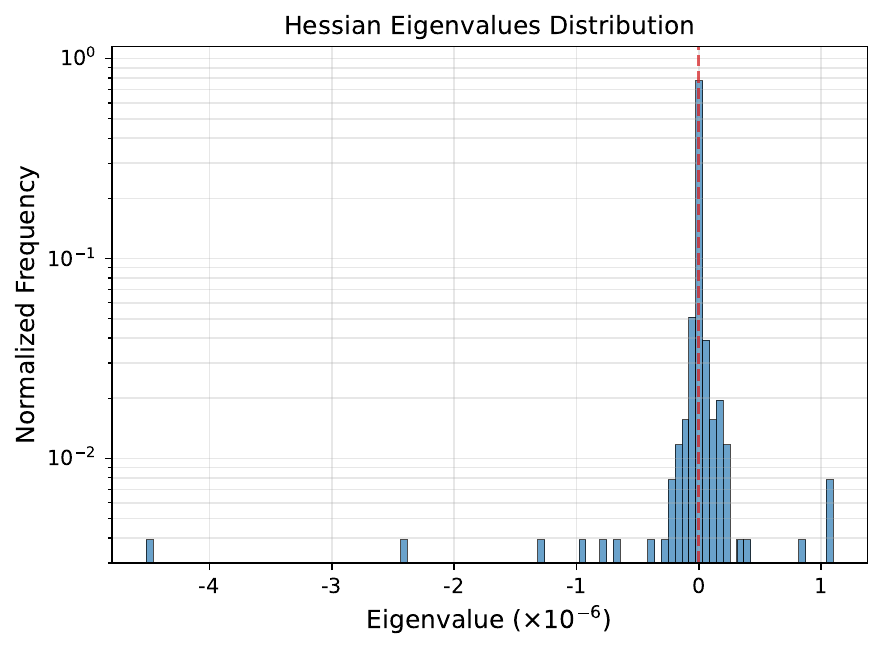}\\
        \small (d) \texttt{k\_proj}
    }
    \hfill
    \parbox{0.3\linewidth}{\centering
        \includegraphics[width=\linewidth]{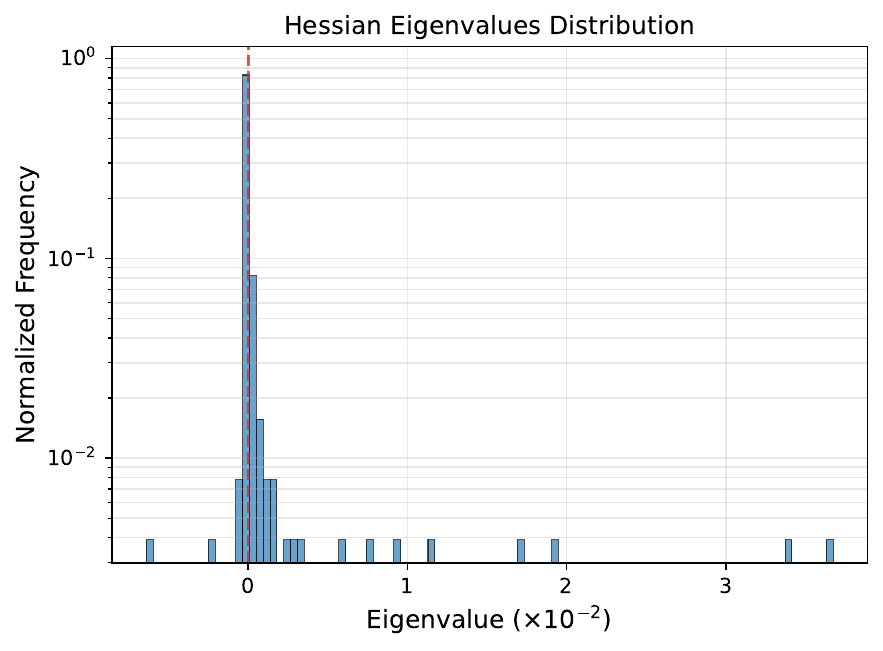}\\
        \small (e) \texttt{v\_proj}
    }
    \hfill
    \parbox{0.3\linewidth}{\centering
        \includegraphics[width=\linewidth]{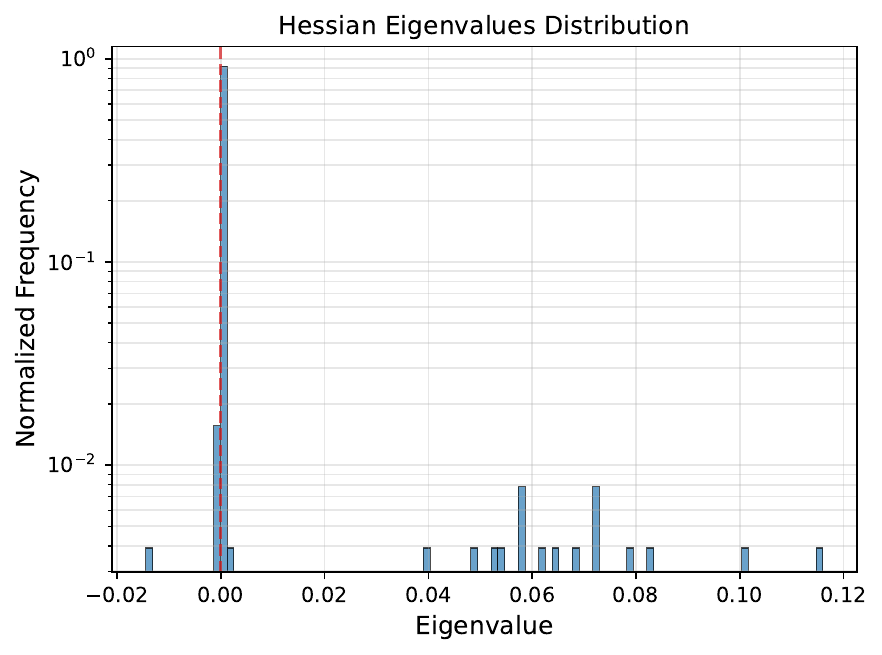}\\
        \small (f) \texttt{o\_proj}
    }

    \caption{Hessian eigenvalue distribution of different blocks in the toy LLaMA2 model. The results of the \texttt{up\_proj} are in \cref{fig:Schematic_illustration}. }
    \label{fig:app-block-hessian}
\end{figure*}

\paragraph{Measure Coverage Degree between Two Subspaces.} Let $A$ and $B$ be two linear subspaces of $\mathbb{R}^d$ with $\dim(A) \le \dim(B)$. Let $P_A=[u_1,...,u_{k_1}] \in \mathbb{R}^{d \times k_A}$ and $P_B=[v_1,...,v_{k_B}] \in \mathbb{R}^{d \times k_B}$ be the matrices consisting of the orthonormal bases of $A$ and $B$, respectively. We define the Coverage Score to quantify the extent to which $A$ is contained within $B$:
\begin{equation}
    \text{Coverage Score}(A,B)=\frac{1}{k_A}\|P_A ^\top P_B\|_{*}=\frac{1}{k_A}\sum_{i=1}^{k_A}\cos \theta_i,
\end{equation}
where $\|\cdot\|_*$ denotes the nuclear norm, and $\theta_i$ represent 
   the $i$-th principal angle between subspaces $A$ and $B$. 
By definition, the Coverage Score lies in the interval $[0,1]$. It represents the mean squared norm of the projection of $A$'s basis vectors onto the subspace $B$. A score of $1$ indicates that $A \subseteq B$, while a score of $0$ implies that $A$ is orthogonal to $B$ ($A \perp B$). A value closer to $1$ suggests a higher degree of containment of $A$ within $B$.

\subsection{Experimental Details in Sections~\ref{experiments} }
\paragraph{Muon Baselines.}
We adopt the standard Muon configuration described in \cite{liu2025muon}. Specifically, we apply   Muon specifically to 2D matrix blocks within transformer layers, while using AdamW for all other parameters (RMSNorm (\texttt{norm}),  Embedding (\texttt{emb}) and the Output (\texttt{out})). We set the learning rate scaling factor to $c=0.2\max\{m,n\}$ for  blocks that use    Muon with shape $\mathbb{R}^{m\times n}$, ensuring the update RMS norm aligns with that of AdamW. The baseline is configured with Nesterov momentum using a decay coefficient $\theta_{\text{muon}}=0.95$, weight decay $\lambda=0.1$, and a   gradient clipping threshold of $1.0$.

\paragraph{SOAP Baselines.} We maintain the same weight decay and gradient clipping settings as in Muon. Following \cite{vyas2025soap}, we set the hyper-parameters to $\theta=0.95$, $\beta_{v}=0.99$, and $k=10$. regarding the shampoo momentum, although \cite{vyas2025soap} recommends $\beta_{\operatorname{shampoo}}=0.99$, we compared $\beta_{\operatorname{shampoo}} \in \{0.99, 0.95\}$ and selected $0.95$, as it yielded lower validation losses in our experiments.

\paragraph{Learning Rate Schedules.} For the \texttt{cos} schedule, following~\citet{hoffmann2022training}, the minimum learning rate \texttt{lr\_min} is set to $0.1 \times \texttt{lr\_max}$. For the  \texttt{wsd}  schedule, we allocate the warmup and stable phases to the first $80\%$ of total iterations, followed by a linear decay to $0$ over the remaining $20\%$. We set the warm-up steps to $1000$ for all pre-training experiments.

Additionally, 
\begin{itemize}[leftmargin=2em]
    \item {\bf C4 pre-training.}  We use a sequence length of 256 and a global batch size of 3,600. The total number of training tokens is set to approximately 80 times the number of model parameters, corresponding to $4\times$ the Chinchilla scaling law~\citep{hoffmann2022training}. The training process includes 1,000 warm-up steps. We perform a grid search for the maximum learning rate \texttt{lr\_max} over the set $\{\texttt{1e-3}, \texttt{2e-3}, \texttt{3e-3}, \texttt{5e-3}, \texttt{7e-3}\}$. The resulting optimal learning rates for each model are detailed in Table~\ref{table: dense model config and max lrs}.

    \item {\bf Pile pre-training.} We set the sequence length to 1,024. The batch size is set to 1,024 for the 0.25B and 0.5B models, and 2,048 for the 1.3B model. The total number of training tokens is approximately 40 times the number of model parameters (corresponding to $4\times$ the Chinchilla scaling law~\citep{hoffmann2022training}), which includes 1,000 warm-up steps. A grid search for \texttt{lr\_max} is performed over the set $\{\texttt{7e-4}, \texttt{1e-3}, \texttt{2e-3}, \texttt{3e-3}, \texttt{5e-3}, \texttt{7e-3}\}$. The optimal learning rates for each model are detailed in   \cref{table: dense model config and max lrs,table: moe model config and max lrs}.
 
\end{itemize}

 \paragraph{Hyper-parameter tuning for \oursmuon.} For LLaMA models, we identified the optimal hyper-parameters $\{\chi, \beta_1, \beta_2\}$ via grid search on the 250M model (Pile) and applied them uniformly to the 130M (C4), 500M (Pile), and 1.3B (Pile) experiments. Specifically, fixing the base learning rate to Muon's optimal value, we searched the sharp subspace dimension ration $r_s\in \{0.1, 0.2\}$, and set $d_s =r_s \min\{m,n\}$ for \texttt{q,k,v,o,ffn} blocks of size $\mathbb{R}^{m\times n}$, and $d_s =r_smn$ for \texttt{emb, norm} blocks, with all of them  using the same ratio $r_s$. The search space for other coefficients is $\beta_1 \in \{0.0, -0.25\}$, $\beta_2 \in \{0.5, 1.0, 2.0\}$, and $\chi \in \{2.0, 4.0\}$. For AdamW blocks, we set $\beta_{1,2}=0.0$ for brevity, and $d_{\operatorname{smooth}} = 0.1mn$ (the final loss is insensitive to this parameter).  Following \cite{wang2025the}, we exclude the \texttt{output} block from \ours (\ie treating  the entire block as sharp directions), as accelerating it offers  minimal gains and may risk  instability. Finally, we selected $r_s = 0.1$ and the specific settings in \ours variants for LLaMA models are:
 \begin{itemize}\item  {\oursmuon-H:} $(\beta_1, \beta_2) = (-0.25, 2.0)$.
 \item  {\oursmuon-L:} $\chi = 4$ for \texttt{emb, norm} blocks and $\chi = 2$ for Muon blocks.
 \item  {\oursmuon  :} Adopts the $\chi$ settings from \oursmuon-L and sets $(\beta_1, \beta_2) = (-0.25, 1.0)$.
 \end{itemize}

For \oursmuon in the QwenMoE experiment, we set $r_s=0.1$ and $\beta_1=0.0$. We  searched $\chi \in \{4, 8\}$ for \texttt{emb} and \texttt{norm} blocks (while maintaining $\chi=1$ for others) and $\beta_2 \in \{1.0, 2.0\}$ for   blocks using Muon. The final configuration adopted was $\chi=8$ (for \texttt{emb}/\texttt{norm})  and $\beta_2=1.0$.

\paragraph{Hyper-parameter tuning for \ourssoap.}  Similar to \oursmuon. we identified the optimal hyper-parameters $\{\chi, \beta_1, \beta_2\}$ via grid search on the 250M model (Pile) and applied them to other scenarios. Specifically, fixing the base learning rate to SOAP's optimal value, we searched the sharp subspace dimension ration $r_s\in \{0.1, 0.2\}$, and set  $d_s =r_smn$ for all  blocks, with all of them  using the same ratio $r_s$. The search space for other coefficients is $\beta_1=0.0$, $\beta_2 \in \{0.25,0.5, 1.0\}$, and $\chi \in \{2.0, 4.0\}$. We set $d_{\operatorname{smooth}} = 0.1mn$ uniformly (the final loss is insensitive to this parameter). Following \cite{wang2025the}, we exclude the \texttt{output} block from \ours (\ie treating  the entire block as sharp directions), as accelerating it offers minimal gains and may risk  instability. Finally, we selected $r_s = 0.2$ and the specific settings in \ourssoap variants for LLaMA models are:
 \begin{itemize}\item  {\ourssoap-H:} $(\beta_1, \beta_2) = (0.0, 0.5)$.
 \item  {\ourssoap-L:} $\chi = 4$ for \texttt{emb, norm} blocks and $\chi = 2$ for Muon blocks.
 \item  {\ourssoap  :} Adopts the $\chi$ settings from \ourssoap-L and sets $(\beta_1, \beta_2) = (0.0, 0.25)$.
 \end{itemize}

\section{Additional Experimental Results}\label{app:add-experiment}

\subsection{Dense Models}
Here we present the supplementary results for the experiments in \cref{result:dense}.

We evaluate the 0-shot performance of the pre-trained LLaMA 1.3B models for 25k steps (\cref{fig:cosine-llama-muon})  in \cref{tab:downstream evaluation-1b}. \oursmuon demonstrates consistent improvements over the Muon baseline across the eight evaluated tasks. Notably, \oursmuon achieves a substantial gain   on BoolQ  and  exhibits strong generalization on reasoning-heavy benchmarks, showing significant margins on ARC-Challenge (+1.79) and MMLU (+1.39). These results indicate that MUON-LITE effectively enhances   model performance across diverse domains ranging from common sense reasoning to truthful QA.
\begin{table*}[!h]
    \centering
    \caption{Evaluation results on downstream tasks (0-shot with lm-evaluation-harness) of LLaMA models (1.3B) pre-trained on Pile using \oursmuon and Muon. The best scores in each column are bolded. Abbreviations:  \texttt{T\_mc2}=\texttt{TruthfulQA\_mc2}, \texttt{AVG}=Average score. 
    }
    \vspace{-.2cm}
    \begin{tabular}{c|c|c|c|c|c|c|c|c|c}
    \hline\hline
    Method & \texttt{ARC\_E} & \texttt{ARC\_C} & \texttt{PIQA} & \texttt{HellaSwag} & \texttt{BOOLQ} & \texttt{WinoGrande} & \texttt{MMLU}& \texttt{T\_mc2}& \texttt{AVG} \\\hline
       Muon  & \small 52.99 &  \small 22.10 &  \small 67.41 &  \small 35.30 &  \small 52.45 &  \small 53.35 &  \small 23.76&  \small 40.14 &  \small 43.44    \\\hline
       \oursmuon  &  \small {\bf 54.38} &  \small {\bf 23.89} &  \small {\bf 67.63} &  \small {\bf 36.72} &  \small {\bf 60.52} &  \small {\bf 54.78} &  \small {\bf 25.15} &  \small {\bf 40.83} &  \small {\bf 45.49} \\\hline\hline
    \end{tabular}
    \label{tab:downstream evaluation-1b}
    \vspace{-.05cm}
\end{table*}

\section{Discussions on   Momentum Formulations   and their Continuous-Time ODE Counterparts}\label{nes-discussion} 

\subsection{Continuous-Time ODE Formulations}

In this subsection, we introduce the continuous-time limits of momentum-based algorithms and demonstrate how to derive discrete optimizers from these continuous formulations (modeled as second-order or first-order systems). The derivation proceeds by reducing the second-order dynamics into a first-order system, followed by numerical discretization. This pipeline serves as a foundation for the Riemannian setting \eqref{R_acc:flow}, which is the primary focus of this work (see Appendix \ref{app:tran-R-2nd-1st}). We consider the inertial system  with Hessian damping  (ISHD) mentioned in \cref{sec:ode-momentum}
\begin{equation}\label{app:2nd-form-eu}
    \ddot{w}_t+\alpha_t \dot{w}_t + \beta_t   \nabla^2 f(w_t)\dot{w}_t + \gamma_t \nabla f(w_t) = 0,
\end{equation}
where $\alpha_t,\beta_t,\gamma_t\ge0$   denote the coefficients for momentum decay, Hessian damping, and the gradient driving force, respectively. By introducing an auxiliary momentum variable $m_t$  to track the velocity $\dot{w}_t$, we can reformulate this second-order equation \eqref{app:2nd-form-eu} into an equivalent first-order system: 
\begin{align}
\begin{dcases*}
     \dot{w}_t =- \gamma_t m_t-\beta_t\nabla f(w_t)  , \\
     \dot{m}_t=-\alpha_t m_t +(1- {\alpha_t\beta_t}/{\gamma_t}-\dot{\beta}_t)\nabla f(w_t). 
\end{dcases*}
\end{align}
Applying a discretization with step size $h>0$ yields the following update rule at the $k$-th iteration:
\begin{align} 
\begin{dcases*}
     m_{k}=(1-\alpha_t h) m_{k-1} +(1-{\alpha_k\beta_k}/{\gamma_k}-\dot{\beta}_k)h \nabla f(w_k),  \\
     w_{k+1} =w_k- \gamma_k h m_{k}-\beta_k h\nabla f(w_k).
\end{dcases*}
\end{align}
This formulation subsumes various momentum methods, recovering Heavy Ball momentum when $\beta_k=0$ and Nesterov-type momentum when $\beta_k>0$. Crucially,  the Hessian damping term $\beta_t\nabla^2f(w_t) \dot{w}_t$ and its discretized counterpart  $\beta_k\nabla f(w_t)$ serves to enhance standard Heavy Ball momentum by suppressing oscillations. We provide a detailed discussion in Appendix \ref{app:hb-nes-hd}.

 \subsection{Heavy Ball Momentum, Nesterov   Momentum,    and Hessian Damping}\label{app:hb-nes-hd}
In this section, we discuss equivalent formulations of Nesterov-type momentum and Heavy Ball momentum (also known as Polyak momentum). Furthermore, we analyze the acceleration dynamics of Nesterov momentum, highlighting the intrinsic connections between \textbf{Hessian damping and gradient correction.} 
For brevity, we assume the preconditioner is the identity matrix ($F=I$) throughout this discussion. 

\paragraph{Heavy Ball Momentum.} The Heavy Ball momentum update is defined as:
\begin{align}\label{hb-m}
\begin{dcases*}
m_k=(1-\alpha) m_{k-1} + \nabla f(w_k),\\
 w_{k+1} =w_k- \eta m_k .
\end{dcases*}
\end{align}
This recurrence can be rewritten in a single-variable form equivalent to:
\begin{align}\label{hb-m-1}
w_{k+1}=w_k+(1-\alpha)(w_k-w_{k-1})-\eta\nabla f\left(w_k\right). 
\end{align}
 This formulation aligns with the momentum schemes adopted in AdamW and many other adaptive optimizers. Note that here, the coefficient $\alpha$ of the gradient term (typical in EMA-style updates) is absorbed into the effective learning rate $\eta$.

\paragraph{Nesterov Momentum.} Unfortunately, Heavy Ball momentum lacks a theoretical guarantee of globally achieving a faster convergence rate than Gradient Descent, even in deterministic strongly convex settings. In contrast, Nesterov momentum provably achieves acceleration: the convergence rates for Gradient Descent and Nesterov momentum are approximately $(1-\frac{1}{\kappa})^k$ and $(1-\frac{1}{\sqrt{\kappa}})^k$ respectively, where $\kappa$ denotes the condition number.

Nesterov momentum with constant coefficients is typically formulated as:
\begin{align}\label{nes-m-0}
\begin{dcases*}
x_k=w_{k-1} - \eta\nabla f(w_{k-1}),\\
 w_{k} =x_k+(1-\alpha)(x_k-x_{k-1}).
\end{dcases*}
\end{align}
Eliminating the auxiliary variable $x$ or $w$ yields two equivalent interpretations of \eqref{nes-m-0}: 
\begin{equation}\label{nes-m-0-1}
    \begin{aligned}
     x_{k+1}=x_k+(1-\alpha)(x_k-x_{k-1})-\eta \underbrace{\nabla f\left(x_k+(1-\alpha)(x_k-x_{k-1})\right)}_{\text{gradient correction}},  
    \end{aligned}
\end{equation}
and 
\begin{equation}\label{nes-m-0-2}
    \begin{aligned}
     w_{k+1}=w_k+(1-\alpha)(w_k-w_{k-1})-\eta \nabla f\left(w_k\right)  -(1-\alpha) \eta\underbrace{(\nabla f(w_k)-\nabla f(w_{k-1}))}_{\text{gradient correction}}.
    \end{aligned}
\end{equation}

\paragraph{The Gradient Correction (Hessian Damping) Insight.} 
Comparing \eqref{nes-m-0-2} with the Heavy Ball form \eqref{hb-m-1}, it is evident that Nesterov momentum introduces an additional \textbf{gradient correction term}. Using a   Taylor expansion, we observe that:
\begin{equation}
    \begin{aligned}
        &\nabla f(x_k+(1-\alpha)(x_k-x_{k-1}))\approx (1-\alpha)\nabla^2f(x_k) (x_k-x_{k-1}),
        \\& \nabla f(w_k)-\nabla f(w_{k-1}) \approx  \nabla^2f(w_k) (w_k-w_{k-1}).
    \end{aligned}
\end{equation}
This reveals that the gradient correction implicitly incorporates \textbf{Hessian information}. This mechanism is the key to the  acceleration of Nesterov momentum: it exerts an inertial damping effect (often referred to as Hessian damping) which is proportional to the curvature. Consequently, this term applies stronger damping in directions corresponding to large Hessian eigenvalues, effectively suppressing oscillations.

The formulation in \eqref{nes-m-0-2} is also mathematically equivalent to: 
\begin{equation}\label{nes-m-0-2-beta}
    \begin{aligned}
     w_{k+1}=w_k+(1-\alpha)(w_k-w_{k-1})-\eta_1(1+\alpha\beta) \nabla f\left(w_k\right)  -\beta \eta_1(1-\alpha)\underbrace{(\nabla f(w_k)-\nabla f(w_{k-1}))}_{\text{gradient correction}}.
    \end{aligned}
\end{equation}
with $\eta_1=\eta/(1+\alpha\beta)$ and $\beta=1 /(1-\alpha)$. Note that \eqref{nes-m-0-2-beta} can be transformed as: 
\begin{align}\label{nes-m-2}
\begin{dcases*}
m_k=(1-\alpha) m_{k-1} + \nabla f(w_k),\\
w_{k+1} =w_k- \eta_1 (m_k+\beta \nabla f(w_k)),
\end{dcases*}
\end{align}
which corresponds to the discretized form discussed in Section \ref{sec:ode-momentum}.  

We can observe that the term $\beta\nabla f(w_k)$  in \eqref{nes-m-2} (\ie the gradient correction term in \eqref{nes-m-0-2-beta}) arises from discretizing the Hessian damping term $\beta\nabla^2f(w_t)\dot{w}_t$ in Eq. \eqref{agf-2-order}. 

These insights lead to two key conclusions: \begin{enumerate} \item \textbf{Implicit Hessian Awareness:} Compared with Heavy ball momentum, Nesterov momentum introduces an additional gradient correction term that implicitly encapsulates Hessian information (\ie Hessian damping), which is instrumental in achieving acceleration. \item \textbf{Continuous-Discrete Correspondence:} This gradient correction term naturally arises from the discretization of the Hessian damping term found in the continuous-time limiting flow. \end{enumerate}

\paragraph{Further Improving Nesterov momentum} 
Although Nesterov momentum leverages Hessian damping (via gradient correction) to achieve acceleration, the coefficient $\beta$ of this term is rigidly constrained by the momentum parameter (\ie $\beta=1/(1-\alpha)$). It is therefore natural to relax this constraint and search for an optimal $\beta$, a direction supported by recent theoretical results \cite{Attouch2022, xie2025ode}. In this work, we adopt this generalized perspective and incorporate richer second-order information, employing a larger $\beta$ in flat directions than in sharp directions to further enhance acceleration.

\subsection{Modeling AdEMAMix via a Third-Order ODE}\label{section-ademamix}
The AdEMAMix optimizer \cite{pagliardini2025the} builds upon the paradigm of AdamW by incorporating an additional ``slow'' momentum term. At the $k$-th iteration, the parameter update rule is given by:
\begin{equation}\label{ademamix-alg}
    \begin{aligned}
        m_k^{\mathrm{fast}}=&(1-\alpha_1)m_{k-1}^{\mathrm{fast}}+\alpha_1g_k,
        \\ m_k^{\mathrm{slow}}=&(1-\alpha_2)m_{k-1}^{\mathrm{slow}}+\alpha_2g_k,
        \\ v_k=&(1-\beta_{\mathrm{adam}})v_{k-1}+\beta_{\mathrm{adam}}g_k^{\odot2}
        \\ w_{k+1}=&w_k-\eta_k \frac{m_k^{\mathrm{fast}}+\kappa m_k^{\mathrm{slow}}}{v_k},
    \end{aligned}
\end{equation}
where $g_k$ denotes the stochastic gradient. For brevity, we omit the weight decay  and assume constant momentum hyper-parameters. Typically, $\alpha_1$ is set to approximately $0.1$, while $\alpha_2$ takes much smaller values, such as $10^{-4}$.  

To derive its continuous-time version, we first simplify the analysis by omitting the preconditioner $v_k$ and treating the learning rate $\eta_k$ as a constant $\eta$. This yields the following system of differential equations:
\begin{equation}
    \begin{aligned}
        \frac{d}{dt}m_t^{\mathrm{fast}}=&-\alpha_1m_{k-1}^{\mathrm{fast}}+\alpha_1 \nabla f(w_k),
        \\ \frac{d}{dt}m_t^{\mathrm{slow}}=&-\alpha_2m_{k-1}^{\mathrm{slow}}+\alpha_2\nabla f(w_k),
        \\ \frac{d}{dt}w_t=& -\eta  (m_t^{\mathrm{fast}}+\kappa m_t^{\mathrm{slow}}).
    \end{aligned}
\end{equation}
We note that applying a discretization with step size $h=1$ to the above system recovers the update rules in \eqref{ademamix-alg}.  Next, we compute the second and third-order time derivatives of the parameters, $\frac{d^2}{dt^2}w_t$ and $\frac{d^3}{dt^3}w_t$:
\begin{equation}\label{ademamxi-2-3-d}
    \begin{aligned}
   \frac{d^2}{dt^2}w_t=& -\eta  (-\alpha_1m_t^{\mathrm{fast}} -\alpha_2\kappa m_t^{\mathrm{slow}}+(\alpha_1+\kappa \alpha_2)\nabla f(w_t)), \\      \frac{d^3}{dt^3}w_t=& -\eta  ( \alpha_1^2m_t^{\mathrm{fast}}+\alpha_2^2\kappa m_t^{\mathrm{slow}}-(\alpha_1^2+\kappa \alpha_2^2)\nabla f(w_t)+(\alpha_1+\kappa \alpha_2)\nabla^2 f(w_t)\frac{d}{dt}w_t).
    \end{aligned}
\end{equation}
By combining \eqref{ademamxi-2-3-d} with the velocity equation $\frac{d}{dt}w_t = -\eta (m_t^{\mathrm{fast}} + \kappa m_t^{\mathrm{slow}})$ to eliminate the momentum variables $m_t^{\mathrm{fast}}$ and $m_t^{\mathrm{slow}}$, we arrive at the following third-order  ODE :

\begin{equation}\label{eq:third-order-ode-ademamix}
   \underbrace{\frac{d^3}{dt^3}w_t}_{\text{jerk}}+(\alpha_1+\alpha_2) \underbrace{\frac{d^2}{dt^2}w_t}_{\substack{\text{accelerated} \\ \text{velocity}}}+\alpha_1\alpha_2\underbrace{\frac{d }{dt }w_t}_{\text{velocity}}+\eta(\alpha_1+\kappa \alpha_2)\underbrace{\nabla^2 f(w_t)\frac{d}{dt}w_t}_{\text{Hessian damping}}+\eta (1+\alpha_1\alpha_2)\underbrace{\nabla f(w_t)}_{\text{gradient}} =0.
\end{equation}
Similar to \eqref{R_acc:flow}, the Riemannian counterpart of \eqref{eq:third-order-ode-ademamix} can be derived as
\begin{equation}
  \nabla_{\dot{w}_t}\nabla_{\dot{w}_t}\dot{w}_t+(\alpha_1+\alpha_2)\nabla_{\dot{w}_t}\dot{w}_t+\alpha_1\alpha_2\frac{d }{dt }\dot{w}_t+\eta(\alpha_1+\kappa \alpha_2)\frac{d}{dt}\operatorname{grad}f(w_t)+\eta (1+\alpha_1\alpha_2)\operatorname{grad} f(w_t) =0,  
\end{equation}
where we use $\operatorname{Hess}(w_t)\dot{w}_t=\frac{d}{dt}\operatorname{grad}f(w_t)$.

\paragraph{Intuitive  Interpretation.}
Standard momentum methods (like  Heavy Ball) typically correspond to second-order ODEs, describing a particle with mass and friction. In contrast, Eq. \eqref{eq:third-order-ode-ademamix} reveals that AdEMAMix is governed by a \textbf{third-order} dynamics. The term $\frac{d^3}{dt^3}w_t$ introduces the concept of ``jerk'' (rate of change of accelerated velocity $\ddot{w}_t$) into the optimization trajectory.  The third-order structure    with Hessian  damping  allows the optimizer to regulate not only velocity but also its rate of change across multiple time scales. As a result, transient gradient fluctuations induced by rapidly varying curvature are attenuated before influencing parameter updates, enabling more stable and robust navigation of highly ill-conditioned landscapes.

\section{Foundations of Riemannian Geometry}\label{app:riemannian-geometry}
In this section, we provide a detailed introduction to the  foundation for Riemannian Geometry of $(\mathbb{R}^p, F)$ considered in the main text. The parameter space is   the Euclidean space $\mathbb{R}^p$, while the metric is the Riemannian metric induced by the  preconditioner $F(w)$, which is formally denoted as the manifold $\mathcal{M} = (\mathbb{R}^p, F)$.

\subsection{Riemannian Metric}
The core structure of our framework is based on Riemannian metric, a smooth matrix-value map $F: \mathcal{M}=\mathbb{R}^p \to \mathbb{S}_{++}^p$ that assigns an inner product to the tangent space $T_w\mathcal{M} \cong \mathbb{R}^p$ at each point $w\in \mathcal{M}$. For any tangent vectors $u, v \in T_w\mathcal{M}$, the inner product is defined as $\langle u, v \rangle_{F(w)} \coloneqq u^\top F(w) v$. 
This metric induces a   norm $\|u\|_{F(w)} = \sqrt{\langle u, u \rangle_{F(w)}}$ in the tangent space $T_w\mathcal{M}$, which measures the  magnitude  of perturbations $u$ relative to the local curvature $F(w)$. 

While the tangent space $T_w\mathcal{M}$ characterizes  velocities or perturbations, its dual, the cotangent space $T_w^*\mathcal{M}$, consists of differentials that act linearly on these velocities, naturally representing gradients of functions. Formally, $T_w^*\mathcal{M}$ is the dual space of $T_w\mathcal{M}$, consisting of all linear functionals $\xi: T_w\mathcal{M} \to \mathbb{R}$. It also holds that $T_w\mathcal{M} \cong \mathbb{R}^p$. The cotangent space   is equipped with the dual norm $\| u\|_{F(w)^{-1}} = \sqrt{u^\top F(w)^{-1} u}$ for $u  \in T^*\mathcal{M}$. In particular, the Riemannian metric $F(w)$ induces a natural isomorphism between the tangent and cotangent spaces: 
\begin{equation}
   F(w): T_w\mathcal{M}\to T_w^*\mathcal{M},\quad F(w)^{-1}: T_w^*\mathcal{M}\to T_w\mathcal{M}. 
\end{equation}   Under this identification, an element $\xi\in T_w^*\mathcal{M}$
 acts as a linear functional on the tangent space via the Euclidean inner product $\xi(u)=\xi^\top u$ for $u\in T_w\mathcal{M}$. Inner-product structures of $T_w\mathcal{M},T_w^*\mathcal{M}$ and between them are $\langle\cdot,\cdot\rangle_{F(w)}$, $\langle\cdot,\cdot\rangle_{F(w)^{-1}}$, $\langle\cdot,\cdot\rangle $ respectively.

Based on the Riemannian metric, we can define the length of a smooth curve $\gamma: [0, 1] \to \mathcal{M}$. The Riemannian length $L(\gamma)$ is given by integrating the local norm of the velocity vector $\dot{\gamma}(t)$:\begin{equation}L(\gamma) = \int_0^1 \|\dot{\gamma}(t)\|_{F(\gamma(t))}   dt = \int_0^1 \sqrt{\dot{\gamma}(t)^\top F(\gamma(t)) \dot{\gamma}(t)}   dt.
\end{equation}
The Riemannian distance (or geodesic distance) between two points $x, y \in \mathcal{M}=\mathbb{R}^p$ is defined as the infimum of lengths over all piecewise smooth curves connecting them:
\begin{equation}
d_F(x, y) = \inf_{\gamma \in D} { L(\gamma) },\end{equation}
where $D$ is the set containing all smooth curves $\gamma$ connecting $(x,y)$ with $\gamma(0)=x,\gamma(1)=y$.

The distance reduces to the Euclidean metric when $F=I$. Furthermore, when $(x,y)$ are sufficiently close locally, we have $d_F(x,y)\approx \|x-y\|_{F(x)}$. The Riemannian gradient $\operatorname{grad} f$ is an example of vector field.

\paragraph{Smooth Vector Fields and the Tangent Bundle.} With the tangent space $T_w\mathcal{M}$ defined at each point, we consider the collection of all tangent vectors across the manifold, known as the tangent bundle, denoted by $T\mathcal{M} \coloneqq \bigcup_{w \in \mathcal{M}} (\{w\} \times T_w\mathcal{M})$. The tangent bundle is itself a smooth manifold of dimension $2p$. A vector field $V$ is a mapping (formally called a \textit{section}) that assigns to every point $w \in \mathcal{M}$ a specific tangent vector $V(w) \in T_w\mathcal{M}$. The space of smooth vector fields, denoted by $\mathcal{C}^{\infty}(\mathcal{M}, T\mathcal{M})$, consists of all such fields $V$ where the assignment varies smoothly with $w$. In the global coordinate system of $\mathcal{M} \cong \mathbb{R}^p$, any vector field $V \in \mathcal{C}^{\infty}(\mathcal{M}, T\mathcal{M})$ can be uniquely expressed as a linear combination of the coordinate basis vectors $\{\frac{\partial}{\partial w_1}, \dots, \frac{\partial}{\partial w_p}\}$ (which form the standard basis for each $T_w\mathcal{M}$):\begin{equation}V(w) = \sum_{i=1}^p v_i(w) \frac{\partial}{\partial w_i},\end{equation} where each component function $v_i: \mathcal{M} \to \mathbb{R}$ is a smooth   scalar function. For a function $f$, $(Vf)|_w:=V(w) f(w)=\sum_{i=1}^p v_i(w) \frac{\partial}{\partial w_i}f(w)$ and $VU f:=V  (Uf)$ for vector field $U$.

\subsection{The Levi-Civita Connection}
In Euclidean spaces, comparing vectors at different points is trivial because the tangent spaces are identical. On a curved manifold, tangent spaces $T_x\mathcal{M}$ and $T_{y}\mathcal{M}$ are distinct inner-product spaces. To differentiate vector fields, we require an affine connection $\nabla$, which provides a rule for  connecting  adjacent tangent spaces. \paragraph{Definition.} The Levi-Civita connection $\nabla_{(\cdot)}(\cdot):\mathcal{C}^{\infty}(\mathcal{M},T\mathcal{M})\times\mathcal{C}^{\infty}(\mathcal{M},T\mathcal{M})\to \mathcal{C}^{\infty}(\mathcal{M},T\mathcal{M})$ is the fundamental connection associated with the metric $F$. It is uniquely determined by two geometric conditions:

\begin{enumerate}\item Torsion-Free: For any vector fields $X, Y\in \mathcal{C}^{\infty}(\mathcal{M},T\mathcal{M})$, the connection is symmetric: $\nabla_X Y - \nabla_Y X = [X, Y]$, where $[X, Y]$ is the Lie bracket (in $\mathbb{R}^p$, defined by $[X,Y] = XY-YX $).

\item Metric Compatibility: The connection preserves the inner product structure. Formally, for vector fields $X, Y, Z$:
\begin{equation}X \langle Y, Z \rangle_F = \langle \nabla_X Y, Z \rangle_F + \langle Y, \nabla_X Z \rangle_F.\end{equation}\end{enumerate}

\paragraph{Coordinate Representation (Christoffel Symbols).}Since our manifold is globally $\mathbb{R}^p$, we can express the covariant derivative explicitly using the standard basis $\{e_1, \dots, e_p\}$. Let $\Gamma^k_{ij}(w)$ denote the Christoffel symbols of the second kind. The covariant derivative of a vector field $v$ along $u$ is given component-wise by:\begin{equation}(\nabla_u v)_k  = \sum_{i=1}^p u_i \frac{\partial v_k}{\partial w_i} + \sum_{i,j\le p} \Gamma^k_{ij}(w) u_i v_j.\end{equation}The first term captures the standard Euclidean directional derivative, while the second term corrects for the twisting of the coordinate system induced by the metric. The Christoffel symbols are derived from the metric $F(w)$:
\begin{equation}\Gamma^k_{ij} = \frac{1}{2} \sum_{l} (F^{-1})_{kl} \left( \frac{\partial F_{jl}}{\partial w_i} + \frac{\partial F_{il}}{\partial w_j} - \frac{\partial F_{ij}}{\partial w_l} \right).\end{equation}Here, $(F^{-1})_{kl}$ denotes the $(k,l)$-entry of the inverse metric matrix, and indices denote partial derivatives with respect to coordinates. The Levi-connection would reduce to the Euclidean directional derivative when $F=I$, as $\Gamma_{ij}^k=0$ in this case.

\subsection{Riemannian Gradient and Hessian}
With the metric and connection defined, we can rigorously define the following differential operators.

\paragraph{Riemannian Gradient.}The Riemannian gradient $\text{grad} f(w)$ is the unique tangent vector   representing the differential $df(w)$ via the metric:
\begin{equation}
    \langle \text{grad} f(w), u\rangle_{F(w)}=d f(w)[u]:=\lim_{\epsilon \to 0} \frac{f(w+\epsilon u)-f(w)}{\epsilon} \text{ for any } u\in \mathbb{R}^p.
\end{equation}
It has the closed form
\begin{equation}\text{grad} f(w) = F(w)^{-1} \nabla f(w),\end{equation}
where $\nabla f(w)$ is the standard Euclidean gradient. This confirms that the preconditioner $F^{-1}$ acts as the ``inverse metric" mapping cotangent vectors (gradients) to tangent vectors (directions).

\paragraph{Riemannian Hessian.}The Riemannian Hessian is a linear operator on the tangent space, defined as the covariant derivative of the gradient vector field:\begin{equation}\text{Hess} f(w)[u] = \nabla_u \text{grad} f(w), \quad \forall u \in T_w\mathcal{M}.\end{equation}Unlike the Euclidean Hessian $\nabla^2 f(w)$, the Riemannian Hessian accounts for the curvature of the manifold. Explicitly:
\begin{equation}
   \text{Hess} f(w) = F(w)^{-1} \left( \nabla^2 f(w) - \mathcal{K}(w) \right) ,
\end{equation} 
where 
$$[\mathcal{K}(w)]_{ij} = \sum_{k=1}^p \Gamma^k_{ij}(w) \frac{\partial f}{\partial w_k}(w).$$
When $F(w)$ changes slowly with $w$, we have $\Gamma_{ij}^k\approx0$ and   $\text{Hess} f(w)\approx F(w)^{-1} \nabla ^2 f(w)$.

\section{Transforming the Second Order RISHD (the Riemannian ODE Framework) into a First Order System}\label{app:tran-R-2nd-1st}

\begin{lemma}\label{levi-civita-expression}
 Let $\mathcal{M} = \mathbb{R}^p$ be endowed with a Riemannian metric tensor $F(w) \succ 0$. For any   vector fields $u,v\in \mathcal{C}^\infty(\mathcal{M},T\mathcal{M})$, we have the following expression for the Levi-Civita connection:
 \begin{equation}
     \nabla_u v = \underbrace{\partial_u v}_{\substack{\text{Euclidean} \\ \text{derivative}}} + \underbrace{\frac{1}{2}F^{-1}\left[(\partial_u F) v + (\partial_v F) u - \nabla_w (u^\top F v)\right]}_{\text{Geometric correction}}.
\end{equation}
\begin{proof} By Koszul formula (Corollary 5.11 in \cite{Lee2018IRM}), we have
\begin{equation}\label{kos-1}
\begin{aligned}
    2\left\langle \nabla_{u} v, h \right\rangle_F = \partial_u \left\langle v, h \right\rangle_F  + \partial_v \left\langle u, h \right\rangle_F  - \partial_h \left\langle u, v \right\rangle_F  + \left\langle [u, v], h \right\rangle_F - \left\langle [u, h], v \right\rangle_F - \left\langle [v, h], u \right\rangle_F.
\end{aligned}
\end{equation}
Substituting 
\begin{equation}
    \begin{aligned}
    \partial_u(v^\top F h) &= (\partial_u v)^\top F h + v^\top (\partial_u F) h + v^\top F (\partial_u h),
         \\ \partial_v(u^\top F h) &= (\partial_v u)^\top F h + u^\top (\partial_v F) h + u^\top F (\partial_v h),
         \\  -\partial_h(u^\top F v) &= -(\partial_h u)^\top F v - u^\top (\partial_h F) v - u^\top F (\partial_h v),
    \\ [ u , v ] & =\partial_u v-\partial_v u  ,
    \\ [u,h] & = \partial_u h - \partial_h u  , \\ [v,h] &= \partial_v h - \partial_h v  ,
    \end{aligned}
\end{equation}
into \eqref{kos-1} yields 
\begin{equation}
    2 h^\top F \nabla_u v = h^\top \left[ 2 F \partial_u v + (\partial_u F) v + (\partial_v F) u - \nabla_w (u^\top F v) \right] .
\end{equation}
We get the conclusion.

\end{proof}
    
\end{lemma}

\begin{proposition}\label{transform-2nd-to-1st-system}

    Let $\mathcal{M} = \mathbb{R}^d$ be endowed with a Riemannian metric tensor $F(w) \succ 0$. Then the RISHD 
\begin{align}  \label{prop-qrgf-form0}
     \nabla_{\dot{w}_t} \dot{w}_t + \alpha_t \dot{w}_t + \beta_t \nabla_{\dot{w}_t} \operatorname{grad} f(w_t) + \gamma_t \operatorname{grad} f(w_t) = 0.
    \end{align}
with $\alpha_t=\alpha-\dot{\eta}_t/\eta_t$, $\beta_t=\beta\eta_t$, $\gamma_t=\eta_t(\alpha\beta+1)$ is equivalent to the following first order system:
\begin{align}\label{2-ode0-appendix}
\begin{dcases*}
     \dot{w}_t =- \eta_t F(w_t)^{-1} (m_t+\beta\nabla f(w_t)) , \\
     \dot{m}_t=-\alpha m_t +\nabla f(w_t)+ R_t, 
\end{dcases*}
\end{align}
where we define $u_t=m_t+\beta\nabla f(w_t)$ and 
\begin{equation}\label{Rt-expression-append}
    \begin{aligned}
        R_t=&-\frac{\eta_t}{2}\nabla_w (u_t^\top F(w_t) u_t)+\frac{\beta\eta_t}{2}\left(\nabla F(w_t)[F^{-1}(w_t)u_t]F^{-1}(w_t) \nabla f(w_t)-  \nabla F(w_t) [F^{-1}(w_t)\nabla f(w_t)]F^{-1}(w_t)u_t\right)
     \\&+\frac{\beta\eta_t}{2}\nabla F(w_t) [F^{-1}(w_t)u_t] F^{-1}(w_t)\nabla f(w_t).
    \end{aligned}
\end{equation}
 
\begin{remark}
\eqref{Rt-expression-append} is equivalent to \eqref{R_acc:flow} as $\nabla_{\dot{w}_t} \operatorname{grad} f(w_t)=\operatorname{Hess}(w_t)\dot{w}_t$.    
\end{remark}
\begin{proof}
First, we let  $m_t=-F(w_t)\dot{w}_t/\eta_t-\beta\nabla f(w_t)        $ and get
\begin{equation}
\begin{aligned}
 \dot{w}_t &=- \eta_tF(w_t)^{-1} (m_t  +\beta  \nabla f(w_t) ) , 
 \\\ddot{w}_t&= - \dot{\eta}_tF(w_t)^{-1} (m_t  +\beta  \nabla f(w_t) )  - \eta_t\frac{d}{dt}F(w_t)^{-1} (m_t  +\beta  \nabla f(w_t) ) -  \eta _tF(w_t)^{-1} (\dot{m}_t  +\beta  \nabla^2 f(w_t)\dot{w}_t ). 
 \end{aligned}
\end{equation}
By Lemma \ref{levi-civita-expression}, we have
\begin{equation}
  \nabla_{\dot{w}_t} \dot{w}_t= \ddot{w}_t + F(w_t)^{-1}\dot{F}(w_t)\dot{w}_t   - \frac{1}{2} F^{-1}(w_t) \nabla_w (\dot{w}_t^\top F(w) \dot{w}_t)|_{w=w_t}, 
\end{equation}
and
\begin{equation}
\begin{aligned}
   \nabla_{\dot{w}_t} \operatorname{grad} f(w_t) =& \nabla_{\dot{w}_t} (F^{-1}(w_t) \nabla f(w_t)) 
  \\=& \frac{d}{dt}{(F^{-1}(w_t))} \nabla f(w_t) + F^{-1}(w_t) \nabla^2 f(w_t) \dot{w}_t + \frac{1}{2}F^{-1}(w_t)\frac{d}{dt} F(w_t) F^{-1}(w_t) \nabla f(w_t)
  \\&+\frac{1}{2}F^{-1}(w_t)\nabla F(w_t)[F^{-1}(w_t) \nabla f(w_t)]\dot{w}_t-\frac{1}{2}F^{-1}(w_t)\nabla_w(\dot{w}_t^\top F (w)F^{-1}(w_t) \nabla f(w_t)|_{w=w_t} \\=&   F^{-1}(w_t) \nabla^2 f(w_t) \dot{w}_t - \frac{1}{2}F^{-1}(w_t)\frac{d}{dt} F(w_t) F^{-1}(w_t) \nabla f(w_t)
  \\&+\frac{1}{2}F^{-1}(w_t)\nabla F(w_t)[F^{-1}(w_t) \nabla f(w_t)]\dot{w}_t-\frac{1}{2}F^{-1}(w_t)\nabla_w(\dot{w}_t^\top F (w)F^{-1}(w_t) \nabla f(w_t)|_{w=w_t} .      
\end{aligned}
\end{equation}

Substituting them into \eqref{prop-qrgf-form0}  yields
\begin{equation}
\begin{aligned}
     \dot{m}_t=&-\alpha m_t+\nabla f(w_t)-\frac{\eta_t}{2}\nabla_w (u_t^\top F(w_t) u_t)-\frac{\beta}{2}\nabla F(w_t)[\dot{w}_t]F^{-1}(w_t) \nabla f(w_t)+\frac{\beta}{2} \nabla F(w_t) [F^{-1}(w_t)\nabla f(w_t)]\dot{w}_t
    \\&-\frac{\beta}{2}\nabla F(w_t) [\dot{w}_t] F^{-1}(w_t)\nabla f(w_t).
    \end{aligned}
\end{equation}
We get the conclusion.
\end{proof}

\end{proposition}

\begin{remark}
In general, we assume that $F(w)$ varies slowly with respect to $w$ to ensure the stability of the preconditioner. This assumption is practically justified by the common use of Exponential Moving Average (EMA) with a decay coefficient close to $1$, which enforces a smooth evolution of the estimated preconditioner. Consequently, based on the expression of $R_t$ in \eqref{Rt-expression-append}, this term is typically negligible relative to $m_t$ and $\nabla f(w_t)$, as it scales with the small step size $\eta_t$ and the limited variation $\|  \nabla F(w)\|_{\mathrm{op}}$.
\end{remark}

\section{Theoretical Analysis on Quadratic Objectives: An   Illustrative Example}
\label{app:Illustrative_analysis}

To elucidate the acceleration mechanism of \ours, we analyze the discrete-time dynamics of the update rule \eqref{2-ode0-d}, which corresponds to non-accelerated adaptive optimizers, on a quadratic objective function:
\begin{equation}
  f(w)=\frac{1}{2}\sum_{i=1}^p \lambda_i w_i^2+b^\top w.
\end{equation}
We assume the eigenvalues are sorted as $\lambda_1 > \dots > \lambda_k > 0 \ge \dots \ge \lambda_p$, representing an anisotropic landscape where negative $\lambda_i$ correspond to local nonconvex directions. For simplicity, we set the step size $h=1$ and the preconditioner $F(w)=I_p$. Since the Hessian is diagonal, the dynamics decouple across coordinates.

Let $w_\star = -[b_1/\lambda_1, \dots, b_p/\lambda_p]^\top$ denote the stationary point. To characterize the trajectory of the error term $e_k = w_k - w_\star$, we eliminate the auxiliary momentum variable $m_k$.
From \eqref{2-ode0-d} with $F=I_p$, the $i$-th coordinate of $m_k$ can be expressed in terms of $e_k$ as:
\begin{equation}
    \eta (m_k)_i = (1 - \eta\beta\lambda_i)(e_k)_i - (e_{k+1})_i, \quad 1 \le i \le p. \label{eq:m_iso}
\end{equation}
Applying a time shift to \eqref{eq:m_iso}, we obtain:
\begin{equation}
    \eta (m_{k-1})_i = (1 - \eta\beta\lambda_i)(e_{k-1})_i - (e_{k})_i. \label{eq:m_iso_prev}
\end{equation}
Combining these relations leads to a homogeneous second-order linear recurrence:
\begin{equation}
    (e_{k+1})_i - 2T_i (e_k)_i + D_i (e_{k-1})_i = 0,
\end{equation}
where the coefficients are defined as:
\begin{align}
  T_i =T(\lambda_i):= 1 - \frac{\alpha + \eta\lambda_i(\beta+1)}{2}, \quad
  D_i = D(\lambda_i):= (1-\alpha)(1 - \eta\beta\lambda_i).
\end{align}
The characteristic equation is given by $r^2 - 2T_ir + D_i = 0$, with roots:
\begin{equation}
    r_{i,1,2} = T_i \pm \sqrt{T_i^2 - D_i}.
\end{equation}
Based on the discriminant $\Delta_i = T_i^2 - D_i$, the dynamics fall into three regimes, yielding the following closed-form solutions ($c_{i,1}, c_{i,2}$ are constants determined by initialization):
\begin{align}
    w_{k,i} = -\dfrac{b_i}{\lambda_i} +
    \begin{cases}
      c_{i,1} r_{i,1}^k + c_{i,2} r_{i,2}^k, & \Delta_i > 0\ \text{(\textbf{Overdamped})}, \\
     (c_{i,1} + c_{i,2} k) D_i^{k/2}, & \Delta_i = 0\ \text{(\textbf{Critically damped})}, \\
      D_i^{k/2} \left[ c_{i,1} \cos(k\theta_i) + c_{i,2} \sin(k\theta_i) \right], & \Delta_i < 0\ \text{(\textbf{Underdamped})},
    \end{cases}
\end{align}
where $\theta_i = \arccos(T_i/\sqrt{D_i})$.

\paragraph{Stability Analysis.}
Convergence in convex directions requires the magnitude of all characteristic roots to be at most $1$. According to the Jury Stability Criterion, the necessary and sufficient conditions are $|P_i(0)| \le 1$, $P_i(1) \ge 0$, and $P_i(-1) \ge 0$, where $P_i(z) = z^2 - 2T_i z + D_i$. Solving these inequalities yields the stability bound:
\begin{equation}\label{quatratic-stab}
    \eta \le \frac{2(2-\alpha)}{\lambda_1 \max\{1 + 2\beta - \alpha\beta, \, 2\beta(1-\alpha)\} }.
\end{equation}
Ideally, this condition confirms that the maximum allowable step size is governed by the sharpest direction $\lambda_1$.

\paragraph{Regime Classification.}
We next analyze how the curvature $\lambda_i$ determines the dynamic regime. The equation $T(\lambda)^2 = D(\lambda)$ with respect to $\lambda$:
\begin{equation}
  \left(1 - \frac{\alpha + \eta\lambda(\beta+1)}{2}\right)^2 = (1-\alpha)(1 - \eta\beta\lambda )
\end{equation}
possesses two real positive roots, denoted $\tilde{\lambda}_1 < \tilde{\lambda}_2$. By   the properties of   quadratic functions, we refer eigenvalues $\lambda$ satisfying $\lambda < \tilde{\lambda}_1$ to \emph{flat directions (parts)}, which fall into the \textbf{overdamped regime}. Conversely, eigenvalues $\lambda \ge \tilde{\lambda}_1$ correspond to \emph{sharp directions}, whose dynamic regimes depend on the specific magnitude of the curvature (typically transitioning into the underdamped regime).

\paragraph{Acceleration Mechanism of \ours.}
Here we suppose that the stability condition \eqref{quatratic-stab} holds. Since \ours explicitly   increases $\eta$ and $\beta$  in flat directions, we focus our analysis on the regime $\lambda_i < \tilde{\lambda}_1$. In this regime, the dynamics are overdamped, and dominated by the larger root $r_1 = T + \sqrt{T^2 - D}$. We demonstrate that our parameter adjustment modifies $r_1$ to facilitate escape from nonconvex regions and foster convergence in convex regions.

We analyze the sensitivity of $r_1$ to $\beta$ and $\eta$ case by case:

\begin{itemize}
\item \textbf{Case 1: Flat Nonconvex Directions ($\lambda < 0$).}
  The coefficients become $T = 1 - \frac{\alpha}{2} + \frac{\eta |\lambda|(\beta+1)}{2}$ and $D = (1-\alpha)(1 + \eta\beta |\lambda|)$.
Clearly, $T > 1-\alpha$, and both $T$ and $D$ are strictly increasing functions of $\beta$ and $\eta$.
Consider the partial derivatives:
\begin{equation}
    \frac{\partial r_1}{\partial \beta} = \frac{\partial T}{\partial \beta} + \frac{1}{2\sqrt{T^2 - D}} \left( 2T\frac{\partial T}{\partial \beta} - \frac{\partial D}{\partial \beta} \right).
\end{equation}
Since $T > 0$, and noting that:
\begin{equation}
    2T\frac{\partial T}{\partial \beta} - \frac{\partial D}{\partial \beta} = T \eta|\lambda| - (1-\alpha)\eta|\lambda| > 0,
\end{equation}
(and similarly for $\eta$), we conclude that increasing $\beta$ or $\eta$ yields a strictly larger $r_1$ (with $r_1 > 1$). This increases the exponential divergence rate along negative curvature directions, thereby facilitating rapid escape.

\item \textbf{Case 2: Flat Convex Directions ($0 < \lambda < \tilde{\lambda }_1$).}
In this regime, the convergence rate is governed by how quickly $r_1$ decays to zero. To accelerate convergence, we must \textbf{minimize} $r_1$.
The coefficients are $T = 1 - \frac{\alpha + \eta\lambda(\beta+1)}{2}$ and $D = (1-\alpha)(1 - \eta\beta\lambda)$.
Observing the monotonicity:
\begin{equation}
    \frac{\partial T}{\partial \beta} < 0, \quad \frac{\partial D}{\partial \beta} < 0 \quad (\text{and similarly for } \eta).
\end{equation}
Both $T$ and $D$ decrease monotonically as $\beta$ or $\eta$ increases. Since $r_1$ is strictly increasing with respect to $T$ and $D$, the dominant root $r_1$ decreases when $\beta$ (or $\eta$) increases. This reduction in the spectral radius directly fosters faster convergence.
\end{itemize}

\section{Some Common Adaptive Optimizers for LLM Pre-training}\label{app:adap-opt-example}

Here we list some common optimizers in LLM pre-training. They can be subsumed by discretizing the ODE framework \eqref{2-ode0-d}. We omit gradient clipping and AdamW-type bias correction coefficient $\frac{1-\theta^t}{\sqrt{1-\beta_v^t}}$ for notational brevity. 
\begin{algorithm}[H]
    \caption{AdamW}
    \label{alg:adamw}
    \begin{algorithmic} 
        \REQUIRE Hyper-parameters: $\theta$, $\beta_v$, $\epsilon$, $\{\eta_t\}$, $\lambda$ 
        \STATE Initialize  $m = 0$, $v = 0$, $t = 0$
        \FOR{$t = 1, 2, \dots$}
            \STATE Compute stochastic gradient $g_t$ at current parameters $w_t$.
              
            \STATE  $m_t = \theta   m_{t-1} + (1 - \theta) g_t$
            \STATE  $v_t = \beta_v   v_{t-1} + (1 - \beta_v)   g_t^{\odot 2}$
  
            \STATE   
           $w_{t+1} = w_t - \eta_t \left( \frac{ {m}_t}{\sqrt{ {v}_t} + \epsilon} + \lambda w_t \right)$
        \ENDFOR
    \end{algorithmic}
\end{algorithm}

\begin{algorithm}[H]
\caption{Lion}
\label{alg:lion}
\begin{algorithmic} 
\REQUIRE $\theta, \beta_v, \{\eta_t\}, \epsilon, \lambda $
\STATE Initialize  $m = 0$   
\FOR{$t = 1, 2, \dots$}
    \STATE Compute gradient $g_t$ at current parameters $w_t$
    
    \STATE   $u_t = \theta m_{t-1} + (1 - \theta) g_t$
    \STATE   $m_{t} = \beta_v m_{t-1} + (1 - \beta_v) g_t$
    \STATE  $w_{t+1} = w_t - \eta_t \operatorname{sign}( u_t) - \eta_t \lambda w_t$
\ENDFOR
\end{algorithmic}
\end{algorithm}

\begin{algorithm}[H]
\caption{MARS}
\label{alg:mars}
\begin{algorithmic}
\REQUIRE $\theta, \beta_v, \gamma, \epsilon, \{\eta_t\}, \lambda $
\STATE Initialize $m = 0$, $v = 0$, $g_{0} = 0$  
\FOR{$t = 1, 2, \dots$}
    \STATE Compute stochastic gradient $g_t$ at current parameters $w_t$
    \STATE  $\displaystyle c_t = g_t + \gamma \frac{\theta}{1 - \theta} (g_t - g_{t-1})$
    \STATE   $m_t = \theta m_{t-1} + (1 - \theta)  {c}_t$
    \STATE   $v_t = \beta_v v_{t-1} + (1 - \beta_v)  {c}_t^{\odot 2}$
    \STATE  $\displaystyle w_{t+1} = w_t - \eta_t \frac{ {m}_t}{\sqrt{ {v}_t} + \epsilon} - \eta_t \lambda w_t$
\ENDFOR
\end{algorithmic}
\end{algorithm}

\begin{remark}
 In MARS, the momentum term is updated as: \begin{equation}\label{mars-m-0}
        m_t=\theta m_{t-1}+(1-\theta)g_t+\gamma  \theta(g_t-g_{t-1}).
    \end{equation}
It can be transformer to the formulation in \eqref{2-ode0-d}  as:
\begin{equation}
    \begin{aligned}
      s_t=  \theta s_{t-1}+ g_t, \quad \tilde{m}_t=s_t+\frac{\gamma  }{(1-\gamma)(1-\theta)} g_t,
    \end{aligned}
\end{equation}
with $m_t=(1-\gamma)(1-\theta)\tilde{m}_t$. Note that $\gamma \neq 1$; otherwise,  \eqref{mars-m-0} would reduce to 
    $m_t = \theta m_{t-1} - \theta g_{t-1}$, thereby completely discarding the current gradient information $g_t$.
\end{remark}

\begin{algorithm}[H]
\caption{Muon}
\label{alg:muon}
\begin{algorithmic}
\REQUIRE $\theta_{\operatorname{muon}}, \eta, \epsilon, \theta_{\operatorname{adamw}}, \beta_v,   \{\eta_t\}, \lambda $
\STATE Initialize state $m$
\FOR{$t = 1, 2, \dots$}
    \STATE \textbf{For weights in Output, Embedding, and  Norm layers:}
    \STATE \quad Update using AdamW  with hyper-parameters $\theta_{\operatorname{adamw}}$, $\beta_v$, $\epsilon$, $\eta_t$, $\lambda$.
    
    \STATE \textbf{For weight  matrices in transformer layers:}
    \STATE \quad Compute stochastic gradient $g_t$ at current parameters $w_t$
     
    \STATE \quad $m_t = \theta_{\operatorname{muon}} m_{t-1} + g_t$
    \STATE \quad $u_t = \theta_{\operatorname{muon}} m_t + g_t$

    \STATE \quad  $w_{t+1} = w_t - \operatorname{scale}\cdot\eta_t \operatorname{NS}(u_t) - \eta_t \lambda w_t$, where $\operatorname{scale}=0.2\sqrt{\max\{m,n\}}$ for $w\in \mathbb{R}^{m\times n}$.
\ENDFOR
\end{algorithmic}
\end{algorithm}

\begin{algorithm}[H]
\caption{SOAP}
\label{alg:soap}
\begin{algorithmic} 
\REQUIRE $\theta, \beta_v, \beta_{\mathrm{shampoo}}, k, \epsilon ,\lambda$
\STATE Initialize state: $m = 0$, $v = 0$, $L = 0$, $R = 0$, $Q_l$, $Q_r$

\FOR{$t = 1, 2, \dots$}
\STATE \textbf{For weights in Output, Embedding, and  Norm layers:}
    \STATE \quad Update using AdamW  with hyper-parameters $\beta_1$, $\beta_2$, $\epsilon$, $\eta_t$, $\lambda$.
    
\FOR{\textbf{weight matrices in transformer layers:}}
    \STATE Compute stochastic gradient $G_t$ at current parameters $W_t$
 
    \STATE   $\displaystyle G_t^{\text{rot}} = Q_l^\top G_t Q_r$
    \STATE   $M_t = \theta M_{t-1} + (1 - \theta) G_t$
    \STATE   $V_t = \beta_v V_{t-1} + (1 - \beta_v) (G_t^{\text{rot}})^{\odot 2}$
     
    \STATE   $\displaystyle W_{t+1} = W_t - \eta_t Q_l  \left( \frac{Q_l^\top M_t Q_r}{\sqrt{V_t} + \epsilon} \right) Q_r^{\top}- \eta_t\lambda W_t$
    \STATE   $L_t =  \beta_{\mathrm{shampoo}}  L_{t-1} + (1 - \beta_{\mathrm{shampoo}}) G_t G_t^{\top} $
    \STATE   $R_t =  \beta_{\mathrm{shampoo}}  R_{t-1} + (1 - \beta_{\mathrm{shampoo}}) G_t^{\top} G_t $
    \IF{$t \bmod k = 0$}
        \STATE $Q_l \leftarrow \text{QR}(L_t Q_l)$ \COMMENT{QR decomposition}
        \STATE $Q_r \leftarrow \text{QR}(R_t Q_r)$
    \ENDIF
\ENDFOR
\ENDFOR
\end{algorithmic}
\end{algorithm}

\subsection{Exact Preconditioner (Riemannian Metrics) Forms  for Common Adaptive Optimizers} \label{app:r-exa-pre}
In this section, we analyze the implicit metric structure $F$ induced by the preconditioner estimation (update) scheme  for each parameter block, and compare the way in approximating $\hat{F}(w)=(\mathbb{E}[gg^\top])^{1/2}$ in \cref{sec:uni-dis-form}.  

Let $G \in \mathbb{R}^{m \times n}$ denote the stochastic gradient in matrix form, and let $g = \operatorname{vec}(G) \in \mathbb{R}^{mn}$ be its vectorized counterpart.  The Fisher-type metric $\hat{F}(w)=(\mathbb{E}[gg^\top])^{1/2}\in\mathbb{R}^{mn\times mn}$  is often  referred to the Whitening metric \cite{YANG2008232,frans2025stablewhiteningoptimizerefficient}, which has demonstrated efficient and stable performance in neural network training.  Below, we demonstrate how practical  inverse  preconditioners $F_k$   in common adaptive optimizers approximate their corresponding deterministic forms $F(w_k)$, which in turn serve as approximations of $\hat{F}(w_k)$. Throughout, we utilize the identity $\operatorname{vec}(ABC) = (C^\top \otimes A)\operatorname{vec}(B)$.

\paragraph{EMA-based Preconditioners.}
AdamW employs an Exponential Moving Average (EMA) to update its preconditioner $F_k^{-1}$ via $F_k=\operatorname{diag} v_k^{1/2}$ with $v_{k} = \beta_v v_{k-1} + (1-\beta_v) g_k \odot g_k$ .  This expands to $v_k = (1-\beta_v) \sum_{i=0}^k \beta_v^{k-i} g_i \odot g_i$, where $\beta_v$ is typically close to $1$.
Observing that $(1-\beta_v) \sum_{i=k-t}^k \beta_v^{k-i} \approx 1$ for a  large window $t$, and assuming that the stochastic gradients $g_i$ are approximately independent samples while $\hat{F}(w)$ changes slowly with respect to $w$, we can deduce that $v_{k}^{1/2}$ provides a stable and efficient approximation of $F^{\operatorname{adam}}(w) = \operatorname{diag} \hat{F}(w) $.
This approximation logic can directly extend  to other optimizers that accumulate preconditioners via EMA, such as SOAP. Specifically, for SOAP, from \cref{alg:soap} we have
\begin{equation}
    F_k=(Q_r \otimes Q_l) \left( \operatorname{diag} \operatorname{vec}(V_k) \right)^{\frac{1}{2}} (Q_r \otimes Q_l)^\top, \,  \text{with } V_k=\beta_v V_{k-1} +(1-\beta_v) (G_k^{\text{rot}})^{\odot2},
\end{equation}
where $Q_l,Q_r$ are updated in a lazy fashion, and $G_k^{\text{rot}}=Q_l^\top G_k Q_r$. This gives 
\begin{equation}
F^{\operatorname{soap}}(w) = (Q_r \otimes Q_l) \left( \operatorname{diag} \mathbb{E}[g_{\operatorname{rot}} g_{\operatorname{rot}}^\top] \right)^{\frac{1}{2}} (Q_r \otimes Q_l)^\top,
\end{equation}
where $Q_l$ and $Q_r$ are the eigenvectors of $\mathbb{E}[GG^\top]$ and $\mathbb{E}[G^\top G]$, respectively, and $g_{\operatorname{rot}} = \operatorname{vec}(Q_l^\top G Q_r)$. Thus, $F^{\operatorname{soap}}(w)$ approximates $\hat{F}(w)$ via a structured, two-sided Kronecker factorization.

\paragraph{Momentum-based Preconditioners.}
We take Muon as a representative example. At the $k$-th iteration, the momentum is updated as $M_k = \theta_{\operatorname{muon}} M_{k-1} + G_k$, where $G_k \in \mathbb{R}^{m \times n}$ (assuming $m \ge n$). We have $F_k=(M_k^\top M_k)^{1/2}\otimes I_m$ (or replacing $M_k$ with its Nesterov-accelerated  version).  Expanding this recurrence yields $M_k = \sum_{i=0}^k \theta_{\operatorname{muon}}^{k-i} G_i$.
Analyzing the second moment, we have:
\begin{equation}
\mathbb{E}[M_k^\top M_k] = \sum_{i,j \le k} \theta_{\operatorname{muon}}^{2k - (i+j)} \mathbb{E}[G_i^\top G_j] \approx \sum_{i=0}^k \theta_{\operatorname{muon}}^{2(k-i)} \mathbb{E}[G_i^\top G_i].
\end{equation}
Here, we assume that in the middle to late stages of training, the deterministic gradient component is negligible compared to the stochastic noise (or second moment magnitude), and that gradient noise is approximately independent across different steps $i, j$ ($i \neq j$).
Applying the same EMA perspective as used for AdamW, $M_k^\top M_k$ serves as a stable estimator of $\frac{1}{1-\theta_{\operatorname{muon}}^2} \mathbb{E}[G^\top G]$. Similarly, for the Nesterov-corrected momentum $\widetilde{M}_k = M_k + G_k / (1-\theta_{\operatorname{muon}})$, we can similarly get that $\widetilde{M}_k^\top \widetilde{M}_k$ approximates $c \mathbb{E}[G^\top G]$ for some constant scaling factor $c$.
Consequently, the implicit metric for Muon can be characterized as $F^{\operatorname{muon}}(w) \propto (\mathbb{E}[G^\top G])^{\frac{1}{2}} \otimes I_m$, representing a single-sided Kronecker-factored approximation of $\hat{F}(w)$.
This derivation extends naturally to element-wise operations on momentum; for instance, Lion's update rule implies a metric $F^{\operatorname{lion}}(w) \propto \operatorname{diag}(\hat{F}(w))$.

\section{Proof}

\paragraph{Notations}
A map is said to be smooth if it is infinitely differentiable. For notational brevity, we define $F_s(w)=P_s(w)F(w)P_s(w)$, $F_f(w)=P_f(w)F(w)P_f(w)$  and the corresponding pseudo inverse $F_s^{\dagger}(w)=P_s(w)F^{-1}(w)P_s(w)$, $F_f^{\dagger}(w)=P_f(w)F^{-1}(w)P_f(w)$. Denote $\kappa_{F_s}:=\rho/\lambda_{F_s}$, $\kappa_{F}:=\rho/\lambda_{F}$.    The full version of Assumption \ref{ass:rsc}.\ref{ass:spectral} concerning the operator norm bounds on  $\nabla P_s$ and $\nabla F$ is as follows.    For any $u,v,w\in \mathbb{R}^p$, we have 
\begin{equation}\label{nabla-Ps}
\begin{aligned}
 \|\nabla P_s(w)[u]v\|_{F(w)}&\le \frac{\delta}{G} \|u\|_{F(w)} \|v\|_{F(w)}, 
 \\ \|\nabla F(w)[u]v\|_{F^{-1}(w)}&\le \frac{\delta_F}{G} \|u\|_{F(w)} \|v\|_{F(w)}.  
\end{aligned}  
\end{equation}

\subsection{Landscape Analysis}
In this subsection we mainly focus on properties of $\Phi$ and $\mathcal{R}$.
\begin{lemma}\label{F_s-op-bound}
Suppose that Assumptions \ref{ass:rsc} holds. Define $\delta_{F_s}=2\delta+\delta_F$. Then we have 
\begin{equation}
   \max\{ \|\nabla F_s^{{\dagger}}(w)[u]v\|_{F(w)},\|\nabla F_f^{{\dagger}}(w)[u]v\|_{F(w)}\}\le \frac{\delta_{F_s}}{G} \|u\|_{F(w)} \|v\|_{F^{-1}(w)},
\end{equation}
\begin{equation}
        \max\{\|\nabla F_s(w)[u]v\|_{F(w)},\|\nabla F_f(w)[u]v\|_{F(w)}\}\le \frac{\delta_{F_f}}{G} \|u\|_{F(w)} \|v\|_{F(w)}.
\end{equation}  

    \begin{proof}
By \eqref{nabla-Ps}, we have       \begin{equation}\label{p-deltafinvs-u-v}
    \begin{aligned}
        v^\top \nabla(P_sF^{-1}P_s)(w)[u]v=&2  v^\top \nabla P_s(w)[u]F^{-1}(w)P_s(w)v+v^\top P_s(w)\nabla(F^{-1})(w)[u]P_s(w)v
        \\\le&2 \|v\|_{F(w)^{-1}}\|\nabla P_s(w)[u] F(w)^{-1}P_s(w)v\|_{F(w)}\\&+\|P_s(w)F(w)^{-1}v\|_{F(w)}\|\nabla F(w)[u] F(w)^{-1}P_s(w)v\|_{F(w)^{-1
        }}
    \\\le&\frac{\delta_{F_s}}{\|\nabla f(w)\|_{F(w)^{-1}}}\|v\|_{F(w)^{-1}}\|v\|_{F_s^\dagger(w)}\|u\|_{F(w)},
        \end{aligned}
\end{equation}
where the first inequality uses $\nabla(F^{-1})(w)[u]=-F^{-1}(w)\nabla F(w)[u]F^{-1}(w)$. 

Similarly, for any $p$, 
  \begin{equation}\label{p-deltafs-u-v}
    \begin{aligned}
        &p^\top \nabla(P_sFP_s)(w)[u]v
        \\=&  p^\top \nabla P_s(w)[u]F(w)P_s(w)v+  p^\top  P_s(w)F(w) \nabla P_s(w)[u]v+p^\top P_s(w)\nabla F(w)[u]P_s(w)v
        \\\le& \|\nabla P_s(w)[u]v\|_{F(w)}\| F(w)P_s(w)p\|_{F(w)^{-1}}+ \|\nabla P_s(w)[u]p\|_{F(w)}\| F(w)P_s(w)v\|_{F(w)^{-1}}
\\&+\|P_s(w)p\|_{F(w)}\|\nabla F(w)[u] P_s(w)v\|_{F(w)^{-1}}
    \\\le&\frac{\delta(\|v\|_{F_s(w)}\|p\|_{F(w)}+\|v\|_{F(w)}\|p\|_{F_s(w)})+\delta_F\|v\|_{F_s(w)}\|p\|_{F_s(w)}}{\|\nabla f(w)\|_{F(w)^{-1}}}\|u\|_{F(w)}.
        \end{aligned}
\end{equation}
    \end{proof}
\end{lemma}

Using $\nabla P_f=-\nabla P_s$, we can similarly get
\begin{equation}\label{p-deltafinvf-u-v}
    \begin{aligned}
        v^\top \nabla(P_fF^{-1}P_f)(w)[u]v\le&\frac{\delta_{F_s}}{\|\nabla f(w)\|_{F(w)^{-1}}}\|v\|_{F(w)^{-1}}\|v\|_{F_f^\dagger(w)}\|u\|_{F(w)},
        \end{aligned}
\end{equation}
and
\begin{equation}\label{p-deltaff-u-v}
    \begin{aligned}
        v^\top \nabla(P_fFP_f)(w)[u]v\le&\frac{\delta_{F_s}}{\|\nabla f(w)\|_{F(w)^{-1}}}\|v\|_{F(w)}\|v\|_{F_f(w)}\|u\|_{F(w)}.
        \end{aligned}
\end{equation}

\begin{lemma}
Suppose Assumptions \ref{ass:river} and \ref{ass:rsc} hold. For any $w\in U$, we have
\begin{equation}\label{nabla-phi-Fs-nabla-0}
        \nabla \Phi(w) P_s(w) F^{-1}(w) \nabla f(w) = 0,
    \end{equation} 
    and $\operatorname{Range } \nabla \Phi(w) \subset T_{\Phi(w)}\mathcal{R}$. Especially, if we further assume that $ \max\{\delta_{F},\delta\}<\frac{1}{3}\lambda_{H_{F,s}}$, then for any $z \in \mathcal{R}$, $\nabla \Phi(z)$  is the oblique projection onto $T_z\mathcal{R}$ in the direct sum decomposition $\mathbb{R}^p = T_z\mathcal{R} \oplus \operatorname{Range } P_s(z)$.

\begin{proof}
First, note that $\Phi(w)=\lim_{n\to \infty} \phi_1^{(n)}(w)$, where $\phi_1(w):=\psi_t(w)|_{t=1}$ denotes the time-1 map of the flow. Since $F
$ and $f$ are smooth, $\phi_1$ is smooth. Then the smoothness of $\Phi$ follows immediately from  Theorem 5.1 in \cite{https://doi.org/10.1112/jlms/s2-27.2.356}. For any $w\in U$, noting that \begin{equation}
  -\nabla \Phi(w) P_s(w) F^{-1}(w) \nabla f(w)=\frac{d}{dt } \Phi(\psi_t(w))|_{t=0}=\frac{d}{dt } \Phi(w)|_{t=0}=0,      
 \end{equation}
 we get \eqref{nabla-phi-Fs-nabla-0}. Besides, $\operatorname{Range } \nabla \Phi(w) \subset T_{\Phi(w)}\mathcal{R}$ is due to $\Phi(w)\in \mathcal{R}$. 

It follows that for any $ z\in \mathcal{R}$ and any $v \in \mathbb{R}^{p}$, we have
\begin{equation}\label{0-eq-nablaphiz}
    \begin{aligned}
     0 =& \nabla\left( \nabla\Phi F_{s}^\dagger \nabla f\right)(z)[v]  
     \\=&\nabla^2\Phi(z)[v]\underbrace{F_{s}^\dagger(z) \nabla f(z)}_{=0}+\nabla\Phi(z)\nabla F_{s}^\dagger(z)[v] \nabla f(z)+\nabla \Phi(z) F_{s}^\dagger(z)  \nabla^2 f(z)[v]
     \\=&\nabla\Phi(z)\left(\nabla F_{s}^\dagger(z)[v] \nabla f(z)+  F_{s}^\dagger(z)  \nabla^2 f(z)[v]\right).
    \end{aligned}
\end{equation}

Define the linear map $T_z$ as $T_z(v)=\nabla F_{s}^\dagger (z)[v] \nabla f(z)+  F_{s}^\dagger (z)  \nabla^2 f(z)v$. Noting that 
\begin{equation}
    \begin{aligned}
     \nabla F_s^\dagger(z)[v]\nabla f(z)=&\nabla P_s(z)[v]F^{-1}(z)P_s(z)\nabla f(z)+ P_s(z)\nabla F^{-1}(z)[v]P_s(z)\nabla f(z)+ P_s(z)F^{-1}(z)\nabla P_s(z)[v]\nabla f(z)
     \\=&P_s(z)F^{-1}(z)\nabla P_s(z)[v]\nabla f(z),  
    \end{aligned}
\end{equation} 
the range  of $T_z$ is a subspace of $\operatorname{Range} P_s(z)$. By Lemma \ref{F_s-op-bound} and Assumption \ref{ass:rsc}, for $v$ satisfying $P_s(z)v=v$, 
\begin{equation}
   \|T_z(v) \|_{F(z)^{-1}}\ge \lambda_{H_{F,s}}\|v\|_{F(z)}-\frac{\delta_{F_s}}{G}\|\nabla f(z)\|_{F^{-1}(z)}\|v\|_{F(z)}\ge (\lambda_{H_{F,s}}-\delta_{F_s})\|v\|_{F(z)}.
\end{equation}
This implies that $T_z|_{\operatorname{Range} P_s(z)}:\operatorname{Range} P_s(z)\to \operatorname{Range} P_s(z)$ is a linear isomorphism. Therefore, by \eqref{0-eq-nablaphiz}, we get that $\operatorname{Range} P_s(z)\subset \operatorname{Null}\nabla\Phi(z) $, and $\dim (\operatorname{Null}\nabla\Phi(z))\ge \dim(\operatorname{Range} P_s(z))$. 

 On the other hand, for any smoothed curve $\tilde{z}_t$ on $\mathcal{R}$, $\nabla \Phi(\tilde{z}_t)\dot{\tilde{z}}_t=\frac{d}{dt}\Phi(\tilde{z}_t)=\dot{\tilde{z}}_t$. Thus $\nabla \Phi(z)|_{T_z\mathcal{R}}=id$ for any $z\in \mathcal{R}$. It yields $p=\dim (\operatorname{Range} \nabla \Phi(z))+\dim (\operatorname{Null} \nabla \Phi(z))\ge \dim (T_z\mathcal{R})+\dim(\operatorname{Range} P_s(z))\ge p$, implying that $\operatorname{Range} P_s(z)= \operatorname{Null}\nabla\Phi(z) $. We get the conclusion.
\end{proof}

\end{lemma}

By definition,  $f$ exhibits strong convexity  along the sharp directions, and consequently inherits certain properties of strongly convex functions. We  first present this characteristic (as well as $L$-smoothness) in Lemmas \ref{exp-decay-proj-to-river} and \ref{pl-condition-lemma}.

\begin{lemma}\label{exp-decay-proj-to-river}
Suppose Assumptions \ref{ass:river} and \ref{ass:rsc} hold. 
Denote $\gamma=\lambda_{H_F,s}-\frac{\delta_{F_s}}{2}$, $\ell=L_{H_F,s}+\frac{\delta_{F_s}}{2}$.  Then for any $w\in U$ the following inequality hold
\begin{equation}\label{exp-decay-ps-nablaf}
    \begin{aligned}
      \|P_s(\psi_t(w))F(\psi_t(w))^{-1}\nabla f(\psi_t(w))\|_{F(\psi_t(w))} \le\exp(- \gamma t)\|P_s(w) F(w)^{-1}\nabla f(w)\|_{F(w)} ,
    \end{aligned}
\end{equation}
and
\begin{equation}\label{exp-l-decay-ps-nablaf}
    \begin{aligned}
      \|P_s(\psi_t(w))F(\psi_t(w))^{-1}\nabla f(\psi_t(w))\|_{F(\psi_t(w))} \ge\exp(- \ell t)\|P_s(w) F(w)^{-1}\nabla f(w)\|_{F(w)} .
    \end{aligned}
\end{equation}

    \begin{proof}
     \begin{equation}\label{dt-exp-decay-proj-flow}
         \begin{aligned}
            \frac{d}{dt}\|P_sF^{-1}\nabla f\|_{F}^2=&2\partial_t\psi_t(w)^\top\nabla^2f P_s F^{-1} \nabla f+\nabla f^\top \partial_t(P_sF^{-1}P_s)\nabla f
            \\=&-2\nabla f^\top F^{-1}P_s\nabla^2f P_s F^{-1} \nabla f +\nabla f^\top \nabla (F_s^\dagger)[-F_s^\dagger \nabla f] \nabla f
             \\ \overset{\eqref{p-deltafinvs-u-v}}{\le} &-2\lambda_{H_F,s}       \|P_sF^{-1}\nabla f\|_{F}^2+\delta_{F_s}\|P_sF^{-1}\nabla f\|_{F}^2.
         \end{aligned}
     \end{equation}

Using Gronwall's inequality (Lemma \ref{f-Gronwall}) gives
\begin{equation}
    \begin{aligned}
      \|P_s(\psi_t(w))F(\psi_t(w))^{-1}\nabla f(\psi_t(w))\|_{F(\psi_t(w))}^2\le\exp(- 2\gamma t)\|P_s(w) F(w)^{-1}\nabla f(w)\|_{F(w)}^2.
    \end{aligned}
\end{equation}

On the other hand, we have
\begin{equation}\label{dt-L-exp-decay-proj-flow}
         \begin{aligned}
            \frac{d}{dt}\|P_sF^{-1}\nabla f\|_{F}^2
            =&-2\nabla f^\top F^{-1}P_s\nabla^2f P_s F^{-1} \nabla f +\nabla f^\top \nabla (F_s^\dagger)[-F_s^\dagger \nabla f] \nabla f
             \\ \overset{\eqref{p-deltafinvs-u-v}}{\ge} &-2L_{H_F,s}       \|P_sF^{-1}\nabla f\|_{F}^2-\delta_{F_s}\|P_sF^{-1}\nabla f\|_{F}^2.
         \end{aligned}
     \end{equation}

Similarly, we get
\begin{equation}
    \begin{aligned}
      \|P_s(\psi_t(w))F(\psi_t(w))^{-1}\nabla f(\psi_t(w))\|_{F(\psi_t(w))}^2\ge\exp(- 2\ell t)\|P_s(w) F(w)^{-1}\nabla f(w)\|_{F(w)}^2.
    \end{aligned}
\end{equation}

\end{proof}
\end{lemma}

\begin{lemma} \label{pl-condition-lemma}
Suppose Assumptions \ref{ass:river} and \ref{ass:rsc} hold. Then for any $w\in U$ we have 
\begin{equation}\label{w-Phiw-f-inf-f}
  \frac{\gamma\lambda_{F_s}}{2}\|w-\Phi(w)\|_2^2\le f(w)-f(\Phi(w)),  
\end{equation}
and
\begin{equation}\label{f-inf-f-grad-norm}
     \frac{1}{2\ell}\|P_s(w) F(w)^{-1}\nabla f(w)\|_{F(w)}^2\le f(w)-f(\Phi(w))\le\frac{1}{2\gamma}\|P_s(w) F(w)^{-1}\nabla f(w)\|_{F(w)}^2. 
\end{equation}

    \begin{proof}
Given $w$, define $a_t=\|P_s(\psi_t(w))F(\psi_t(w))^{-1}\nabla f(\psi_t(w))\|_{F(\psi_t(w))}$. Then Lemma \ref{exp-decay-proj-to-river} implies that for any $t,s>0$, $a_{t+s}\le e^{-\gamma s} a_t$. 
Note that $f(w)-f(\Phi(w))=\int_0^\infty \nabla f(\psi_t(w))^\top P_s(\psi_t(w))F(\psi_t(w))^{-1}\nabla f(\psi_t(w))dt=\int_0^\infty a_t^2 dt$, $\|w-\Phi(w)\|_2=\|\int_0^\infty P_s(\psi_t(w))F(\psi_t(w))^{-1}\nabla f(\psi_t(w))dt\|_2\le \lambda_{F_s}^{-1/2}\int_0^\infty a_t dt$. We have

\begin{equation}
\begin{aligned}
    \lambda_{F_s}\|w-\Phi(w)\|_2^2\le&  \left(\int_0^\infty a_t dt\right)^2=\int_0^\infty  \int_0^\infty a_sa_t dsdt   =2 \int_0^\infty dt \int_t^\infty a_s a_t  ds
    \\ \le &2\int_0^\infty a_t^2 \left(\int_t^\infty e^{-\gamma (s-t)}ds\right) dt=\frac{2}{\gamma}\int_0^\infty a_t^2 dt.
\end{aligned}
\end{equation}

On the other hand, by Lemma \ref{exp-decay-proj-to-river}, it holds that  
\begin{equation}
\begin{aligned}
    f(w)-f(\Phi(w))=&\int_0^\infty\|P_s(\psi_t(w))F(\psi_t(w))^{-1}\nabla f(\psi_t(w))\|_{F(\psi_t(w))}^2  dt
    \\\le& \int_0^\infty \exp(- 2\gamma t)\|P_s(w) F(w)^{-1}\nabla f(w)\|_{F(w)}^2 dt
    \\=& \frac{1}{2\gamma}\|P_s(w) F(w)^{-1}\nabla f(w)\|_{F(w)}^2,
    \end{aligned}
\end{equation}
and
\begin{equation}
\begin{aligned}
    f(w)-f(\Phi(w))=&\int_0^\infty\|P_s(\psi_t(w))F(\psi_t(w))^{-1}\nabla f(\psi_t(w))\|_{F(\psi_t(w))}^2  dt
    \\\ge& \int_0^\infty \exp(- 2\ell t)\|P_s(w) F(w)^{-1}\nabla f(w)\|_{F(w)}^2 dt
    \\=& \frac{1}{2\ell}\|P_s(w) F(w)^{-1}\nabla f(w)\|_{F(w)}^2.
    \end{aligned}
\end{equation}

\end{proof}
\end{lemma}

The following lemma elucidates the property of $\nabla\Phi$: it approximates the projection to the flat subspace.
\begin{lemma}
Suppose Assumptions \ref{ass:river} and \ref{ass:rsc} hold, and 
\begin{equation}
\mu_1:=\lambda_{H_F,s}-\frac{3}{2}\delta_{F_s}\ge0,\quad 2\delta\lambda_{F_s}^{-1}\frac{1}{\gamma}\left(1+\frac{\delta_{F_s}}{2\mu_1}\right)\le\frac{1}{2\rho}.
\end{equation} Define
\begin{equation}
 \epsilon_\Phi=   \frac{4\delta\kappa_{F_s} }{\gamma+\mu_1 }   +\frac{4 \kappa_{F_s}^2\delta }{\gamma  }\left(1+\frac{\delta_{F_s}}{2\mu_1}\right) \left(1+ \frac{2\delta}{\gamma+\mu_1 }    \right)+\frac{2\delta_{F_s}\kappa_{F_s}}{\mu_1}\left(1+ \frac{2\delta}{\gamma+\mu_1 }    \right).  
\end{equation}   Then for any $w\in U$ and any vector $v\in \mathbb{R}^{p}$,  we have
\begin{equation}
\|P_s(\Phi(w))\nabla\Phi(w)v\|_{F(\Phi(w))}\le  \frac{\epsilon_\Phi}{2}   \|v\|_{F(\Phi(w))},
\end{equation}
\begin{equation}
    \|P_f(\Phi(w))\nabla\Phi(w)v-P_f(w)v\|_{F(\Phi(w))}\le \frac{\epsilon_\Phi}{2}\|v\|_{F(\Phi(w))}.
\end{equation}

Thus 
\begin{equation}\label{nabla-Phi-P-f-diff}
    \|\nabla\Phi(w)v-P_f(w)v\|_{F(\Phi(w))}\le   \epsilon_\Phi  \|v\|_{F(\Phi(w))}.
\end{equation}

    \begin{proof}
Define
\begin{equation}
h_t=P_s(\psi_t(w))\partial_w\psi_t(w)[v],\quad r_t=P_f(\psi_t(w))\partial_w\psi_t(w)[v].
\end{equation}
It implies that $h_t+r_t=\partial_w\psi_t(w)[v]$, $h_0+r_0=v$ and  $\lim_{t\to \infty}  h_t+\lim_{t\to \infty} r_t =\nabla \Phi(w)[v]$.

We compute the time derivative of $\|h_t\|_F^2$:
\begin{equation}\label{dht-compute}
\begin{aligned}
    \frac{d}{dt}\|h_t\|^2_{F(\psi_t(w) )} =&   \frac{d}{dt} \left(v^\top \partial_w\psi_t(w)^\top F_s(\psi_t(w))\partial_w\psi_t(w)v\right)
    \\=&\underbrace{v^\top \partial_w\psi_t(w)^\top \nabla F_s(\psi_t(w))[\partial_t\psi_t(w)]\partial_w\psi_t(w)v}_A+2\underbrace{v^\top \partial_w\psi_t(w)^\top F_s(\psi_t(w))\partial_t\partial_w\psi_t(w)v}_B.
 \end{aligned}
\end{equation}

\begin{equation}\label{A-dht}
\begin{aligned}
  A =& - v^\top \partial_w\psi_t(w)^\top \nabla F_s(\psi_t(w))[F_s^{{\dagger}}(\psi_t(w))\nabla f(\psi_t(w))]\partial_w\psi_t(w)v
    \\ \overset{\eqref{p-deltafs-u-v} }{\le}&\delta_{F_s}\|\partial_w\psi_t(w)v\|_{F_s}\|\partial_w\psi_t(w)v\|_{F}.
 \end{aligned}
\end{equation}

\begin{equation}\label{B-dht}
\begin{aligned}
    B =& - h_t^\top  F_s(\psi_t(w)) \partial_w(F_s^{{\dagger}}(\psi_t(w))\nabla f(\psi_t(w)))v
    \\=&- h_t^\top  F_s(\psi_t(w)) \partial_w(F_s^{{\dagger}}(\psi_t(w)))[v]\nabla f(\psi_t(w))
    -   h_t^\top   \nabla^2 f(\psi_t(w))h_t
    \\=&-h_t^\top F_s  \nabla F_s^\dagger [\partial_w\psi_t(w)[v]]  \nabla f-   h_t^\top  P_s  \nabla^2 f P_s h_t
    \\\overset{\eqref{p-deltafinvs-u-v} }{\le}& \delta_{F_s}\|F_s h_t\|_{F^{-1}}\|\partial_w\psi_t(w)[v]\|_{F}-\lambda_{H_F,s}\|h_t\|^2_F,
 \end{aligned}
\end{equation}

Substituting \eqref{A-dht} and \eqref{B-dht} into \eqref{dht-compute} yields
\begin{equation}\label{dot-ht-l2}
\begin{aligned}
    \frac{d}{dt}\|h_t\|
_{F(\psi_t(w))}^2&\le3\delta_{F_s}\|  h_t\|_{F(\psi_t(w))}\|h_t+r_t\|_{F(\psi_t(w))}-2\lambda_{H_F,s}\|h_t\|^2_{F(\psi_t(w))} 
\\&\le-2\underbrace{\left(\lambda_{H_F,s}-\frac{3}{2}\delta_{F_s}\right)}_{:=\mu_1}\|h_t\|^2_{F(\psi_t(w))}+\delta_{F_s}\|h_t\| _{F(\psi_t(w))}\|r_t\| _{F(\psi_t(w))}.
\end{aligned}
\end{equation}

Next we consider the time derivative for $r_t$.  Directly computing $\frac{d}{dt}r_t$ gives:

\begin{equation}\label{Direct-dot-rt}
    \begin{aligned}
        \frac{d}{dt}r_t 
    =& \underbrace{ \nabla P_f(\psi_t(w))[\partial_t\psi_t(w)]\partial_w\psi_t(w)v}_c+\underbrace{P_f(\psi_t(w))\partial_t\partial_w\psi_t(w)v}_d.
    \end{aligned}
\end{equation}
For the term $c$:
\begin{equation}\label{C-drt}
    \begin{aligned}
       \|c\|_{F(\psi_t(w))} =& \| \nabla P_f(\psi_t(w))[F_s^{{\dagger}}(\psi_t(w))\nabla f(\psi_t(w))]\partial_w\psi_t(w)v\|_{F(\psi_t(w))}
        \\ \overset{\eqref{nabla-Ps} }{\le}&\frac{\delta }{\|F_s^{{\dagger}}(\psi_0(w))\nabla f(\psi_0(w))\|_{F(\psi_0(w))}} \|\partial_w\psi_t(w)v\|_{F(\psi_t(w))}\|F_s^{{\dagger}}(\psi_t(w))\nabla f(\psi_t(w))\|_{F(\psi_t(w))}
    \\ \overset{\eqref{exp-decay-proj-to-river} }{\le}&\delta e^{-\gamma t}\|\partial_w\psi_t(w)v\|_{F(\psi_t(w))}.  
    \end{aligned}
\end{equation}
where the first inequality also uses $\nabla P_s+\nabla P_f=\nabla I=0$.

Note that for any vector $p$,

\begin{equation}\label{D-drt-Ff-nabla-Finvs}
    \begin{aligned}
     P_f(w) \nabla (F_s^\dagger) (w)[p]&=P_f(w) \left[\nabla P_s(w)[p]F^{-1}P_s+ P_s\nabla(F^{-1})(w)[p]P_s+P_sF^{-1}\nabla P_s[p]\right]
     \\&=P_f(w) \nabla (P_s)(w)[p]F^{-1}(w)P_s(w).
    \end{aligned}
\end{equation}

Thus 
\begin{equation}\label{D-drt}
\begin{aligned}
    \|d\|_{F(\psi_t(w))} =& \|  P_f(\psi_t(w)) \partial_w(F_s^{{\dagger}}(\psi_t(w))\nabla f(\psi_t(w)))v\|_{F(\psi_t(w))}
    \\\le&\|P_f(\psi_t(w)) \partial_w(F_s^{{\dagger}}(\psi_t(w)))[v]\nabla f(\psi_t(w))\|_{F(\psi_t(w))}
    \\&+\|P_f(\psi_t(w)) F_s^{{\dagger}}(\psi_t(w))\nabla^2 f(\psi_t(w))\partial_w \psi_t(w)v\|_{F(\psi_t(w))}
    \\\overset{\eqref{D-drt-Ff-nabla-Finvs} }{
    =}&\| P_f(\psi_t(w))  \nabla P_s(\psi_t(w))  [\partial_w\psi_t(w)v]  F_s^\dagger(\psi_t(w)) \nabla f(\psi_t(w)) \|_{F(\psi_t(w))}
    \\\overset{\eqref{nabla-Ps}}{\le}& \frac{\delta}{\|F_s^\dagger(\psi_0(w)) \nabla f(\psi_0(w))\|_{F(\psi_0(w))}}   \|\partial_w\psi_t(w)[v]\|_{F(\psi_t(w))}   \|F_s^\dagger(\psi_t(w)) \nabla f(\psi_t(w))\|_{F(\psi_t(w))}
    \\\overset{ \eqref{exp-decay-ps-nablaf} }{\le}& \delta e^{-\gamma t} \|\partial_w\psi_t(w)[v]\|_{F(\psi_t(w))}.
 \end{aligned}
\end{equation}

 Substituting \eqref{C-drt} and \eqref{D-drt} into \eqref{Direct-dot-rt} yields

 \begin{equation}\label{dot-t-l2}
     \left\|\frac{d}{dt }r_t\right\|_{F(\psi_t(w))}\le 2\delta e^{-\gamma t} \|h_t+r_t\|_{F(\psi_t(w))}.
 \end{equation}

Now we combine \eqref{dot-ht-l2} and \eqref{dot-t-l2} to get respective bounds. Denote $M_t=\sup_{0\le s\le t}\|r_s\|_{F(\psi_s(w))}$. Using Lemma \ref{tight-bound-Gronwall} to \eqref{dot-ht-l2} gives

\begin{align}
\|h_t\|_{F(\psi_t(w))}&\le e^{-\mu_1 t}\|h_0\|_{F(\psi_0(w))} + \frac{1}{2}  \delta_{F_s}e^{-\mu_1 t}\int_{0}^t e^{ \mu_1 s}\|r_s\|_{F(\psi_s(w))} ds \label{ht-bound-1}
\\&\le e^{- \mu_1 t}\|h_0\|_{F(\psi_0(w))}+\frac{\delta_{F_s}}{2\mu_1} M_t   \label{ht-bound-2}.
    \end{align}



Thus for any $t\ge0$,

\begin{align}
\|r_t-r_0\|_2\le& \int_0^t \left\|\frac{d}{dr} r_s\right\|_2 ds\le \lambda_{F_s}^{-1}\int_0^t \left\|\frac{d}{dr} r_s\right\|_{F(\psi_s(w))} ds
\\\overset{\eqref{dot-t-l2}}{\le} &\lambda_{F_s}^{-1}\int_0^t 2\delta e^{-\gamma s} (\|h_s\|_{F(\psi_s(w))}+\|r_s\|_{F(\psi_s(w))} )ds \label{rt-r0-bound1}
\\\overset{\eqref{ht-bound-2}}{\le} &2\delta\lambda_{F_s}^{-1}\int_0^t  e^{-\gamma s} \left(e^{-\mu_1s}\|h_0\|_{F(w)}+\left(1+\frac{\delta_{F_s}}{2\mu_1}\right) M_t\right)ds  
\\ {\le}& 2\delta\lambda_{F_s}^{-1} \left(\frac{\|h_0\|_{F(w)}}{\gamma+\mu_1 }   +\frac{1}{\gamma}\left(1+\frac{\delta_{F_s}}{2\mu_1}\right) M_t\right) \label{rt-r0-bound2}
.
    \end{align}

Using $M_t\le \rho(\|r_0\|_2+ \sup_{0\le s\le t}\|r_s-r_0\|_2)$ and $2\delta\lambda_{F_s}^{-1}\frac{1}{\gamma}\left(1+\frac{\delta_{F_s}}{2\mu_1}\right)\le\frac{1}{2\rho}$, we get for any $t\ge 0$, 
\begin{equation}
    M_t\le 2\kappa_{F_s}\left(\|r_0\|_{F(w)}+2\delta \frac{\|h_0\|_{F(w)}}{\gamma+\mu_1 }    \right).
\end{equation}

 Substituting it into \eqref{ht-bound-2}  and \eqref{rt-r0-bound2} respectively, we get

\begin{equation}
\begin{aligned}
    \|h_t\|_{F(\psi_t(w))} \le& e^{-\mu_1 t}\|h_0\|_{F(\psi_0(w))} +\frac{\delta_{F_s}}{\mu_1}\kappa_{F_s}\left(\|r_0\|_{F(w)}+2\delta \frac{\|h_0\|_{F(w)}}{\gamma+\mu_1 }    \right)
    \\ \le &e^{-\mu_1 t}\|h_0\|_{F(\psi_0(w))} +\frac{1}{2}\epsilon_\Phi  \|v\|_{F(w)} ,
    \end{aligned}
\end{equation}

and 

\begin{equation}
\begin{aligned}
    \|r_t-r_0\|_2\le& 2\delta\lambda_{F_s}^{-1}  \frac{\|h_0\|_{F(w)}}{\gamma+\mu_1 }   +\frac{4 \kappa_{F_s}^2\delta }{\gamma \rho}\left(1+\frac{\delta_{F_s}}{2\mu_1}\right) \left(\|r_0\|_{F(w)}+2\delta \frac{\|h_0\|_{F(w)}}{\gamma+\mu_1 }    \right)
    \\\le&\frac{1}{2}\epsilon_\Phi\frac{  \|v\|_{F(w)}}{\rho}.
    \end{aligned}
\end{equation}

Noting that $\lim_{t\to \infty}h_t= P_s(\Phi(w))\nabla\Phi(w)v $ and $\lim_{t\to \infty}(r_t-r_0)= P_f(\Phi(w))\nabla\Phi(w)v-P_f(w) v$, we get the conclusion.

\end{proof}
\end{lemma}

 \subsection{Dynamics Analysis}   
\begin{lemma}\label{app:thm1-0}
Suppose Assumptions \ref{ass:river} and \ref{ass:rsc} hold, and  
\begin{equation}\label{conditions-exp-decay-sharp}
    \epsilon_\Phi\le \frac{1}{10}\min\{1,\alpha\beta_1\},\, \delta_{F_s}\le \frac{1}{10}\lambda_{H_{F,s}}, \,\eta_0\delta_{F_s}\chi \left(   \frac{1}{\alpha \kappa_{F}^{1/2}}+\beta_2\right)\le \frac{1}{5}\alpha.
\end{equation} Let $\iota_t=\min\left\{\frac{1}{2} \alpha ,\lambda_{H_F,s}\beta_1 \eta_t\right\}$.  We have
\begin{equation}\label{exp-sharp-grad-decay}
 \begin{aligned}
     \|w_t-\Phi(w_t)\|_2^2+\eta_t\|m_t\|_{F_s^\dagger(w_t)}^2\lesssim &\exp\left(-{ \int_0^t\iota_s ds}\right)
     \\+ \varepsilon& \int_0^t \exp\left({ -\int_s^t\iota_\tau d\tau}\right)   (\|\nabla f(w_s)\|_{F_f^{\dagger}(w_s)}^2+ \|m_s\|_{F_f^{\dagger}(w_s)}^2) ds 
      \end{aligned}
 \end{equation}

Especially,   taking $\alpha= 2\eta_0\lambda_{H_{F,s}}^{\frac{1}{2}}$ and $\beta_1=\lambda_{H_{F,s}}^{-\frac{1}{2}}$ gives

 \begin{equation}
 \begin{aligned}
\|w_t-\Phi(w_t)\|_2^2+\eta_t \|m_t\|_{F_s^{\dagger}(w_t)}^2\lesssim & \exp\left({-\lambda_{H_{F,s}}^{\frac{1}{2}}\int_0^t\eta_s ds}\right)
\\&+ \varepsilon \int_0^t \exp\left({-\lambda_{H_{F,s}}^{\frac{1}{2}}\int_s^t\eta_\tau d\tau}\right)   (\|\nabla f(w_s)\|_{F_f^{\dagger}(w_s)}^2+ \|m_s\|_{F_f^{\dagger}(w_s)}^2) ds.
      \end{aligned}
 \end{equation}

\begin{remark}
    Such choices of $\alpha$ and $\beta_1$ yield  fast convergence in the (strongly convex) sharp directions of the ill-conditioned landscape. The rate $\exp\left({-\lambda_{H_{F,s}}^{\frac{1}{2}}\int_0^t\eta_s ds}\right)$ matches the results in \cite{Attouch2022}.
\end{remark}

    \begin{proof}
    We first consider general hyper-parameters $\alpha$, $\beta_1$ and $\eta_t$. 
Recall that $\dot{w}_t=-\eta_t F_s(w_t)^{-1}(m_t+\beta_1 \nabla f(w_t))- \eta_t \chi F_f(w_t)^{-1}(m_t+\beta_2\nabla f(w_t))$. 
We begin by bounding  $\|\dot{w}_t\|_{F(w_t)}$. Using $\dot{m}_t=-\alpha m_t+\nabla f(w_t)$, we have $m_t=e^{-\alpha t}\int_0^t e^{\alpha s} \nabla f(w_s)ds$. It yields 
\begin{equation}\label{dot-wt-bound}
\begin{aligned}
     \|\dot{w}_t\|_{F(w_t)}\le& \eta_t\left( \chi\|m_t\|_{F^{-1}(w_t)}+\beta_2\|\nabla f(w_t)\|_{F^{-1}(w_t)} \right)   
     \\\le&\eta_t\chi \left( \lambda_{F}^{-\frac{1}{2}}\int _0^t e^{\alpha -(t-s)}\|\nabla f(w_s)\|_{2} ds+\beta_2\|\nabla f(w_t)\|_{F^{-1}(w_t)} \right)  
     \\\le &\eta_t\underbrace{\chi \left(   \frac{1}{\alpha \kappa_{F}^{1/2}}+\beta_2\right)}_{:=C_w} G.
\end{aligned}
\end{equation}

Now consider  the Lyapunov function:
\begin{equation}
    \begin{aligned}
        V_t=& \left(f(w_t)-f(\Phi(w_t))\right)+\frac{\eta_t}{2}\|m_t\|_{F_s^{\dagger}(w_t)}^2.
    \end{aligned}
\end{equation}

For notational brevity, we define $\widetilde{m}_t=m_t+\beta_1 \nabla f(w_t)$, and 
\begin{equation}
    \begin{aligned}
        \Delta_1=&\left\langle  \nabla f(w_t), (P_f(w_t)-\nabla \Phi(w_t) )\dot{w}_t\right\rangle, \quad \Delta_2=\frac{\dot{\eta}_t}{2}\|m_t\|_{F_s^{\dagger}(w_t)}^2,
        \\\Delta_{3}=& m_t^\top \nabla F_s^\dagger (w_t)[\dot{w}_t]   m_t.
    \end{aligned}
\end{equation}

For $\Delta_1$ We have
\begin{equation}\label{delta1_bound}
    \begin{aligned}
        \Delta_1
        =&\left\langle  \nabla f(w_t), (P_f(w_t)-\nabla \Phi(w_t) )P_f(w_t)\dot{w}_t\right\rangle+\left\langle  \nabla f(w_t), (P_f(w_t)-\nabla \Phi(w_t) )P_s(w_t)\dot{w}_t\right\rangle
        \\\overset{\eqref{nabla-Phi-P-f-diff}}{\le}& \epsilon_\Phi \eta_t\|\nabla f(w_t)\|_{F^{-1}(w_t)}\left(\|m_t+\beta_1 \nabla f(w_t)\|_{F_s^{\dagger}(w_t)}+\chi\|m_t+ \beta_2 \nabla f(w_t)\|_{F_f^{\dagger}(w_t)}\right)
        \\ \le & \epsilon_\Phi \eta_t (\|\nabla f(w_t)\|_{F_s^{\dagger}(w_t)}+\|\nabla f(w_t)\|_{F_f^{\dagger}(w_t)})\left(\|m_t\|_{F_s^{\dagger}(w_t)}+\beta_1 \|\nabla f(w_t)\|_{F_s^{\dagger}(w_t)}\right)
       \\&+ \epsilon_\Phi \eta_t (\|\nabla f(w_t)\|_{F_s^{\dagger}(w_t)}+\|\nabla f(w_t)\|_{F_f^{\dagger}(w_t)})\left(\chi\|m_t\|_{F_f^{\dagger}(w_t)}+ \chi\beta_2 \|\nabla f(w_t)\|_{F_f^{\dagger}(w_t)}\right)\\\le&2\epsilon_\Phi(\frac{1}{\alpha} +\beta_1)\eta_t\|\nabla f(w_t)\|_{F_s^{\dagger}(w_t)}^2+\epsilon_\Phi(\frac{1}{\alpha}+\frac{\beta_1}{2}+\chi \beta_2+\frac{\chi^2\beta_2^2\alpha}{2})\eta_t\|\nabla f(w_t)\|_{F_f^{\dagger}(w_t)}^2+\frac{\epsilon_\Phi \alpha}{2}\eta_t\|m_t\|_{F_s^{\dagger}(w_t)}^2
        \\&+\frac{\epsilon_\Phi \alpha}{2} \eta_t\chi^2\|m_t\|_{F_f^{\dagger}(w_t)}^2 .
    \end{aligned}
\end{equation}
where the last inequality uses AM-GM inequality.

For  $\Delta_2$, 
\begin{equation}\label{delta3_bound}
    \begin{aligned}
        |\Delta_3|
        \overset{\eqref{p-deltafinvs-u-v}}{\le}&  \eta_t\delta_{F_s} C_w\| {m}_t\|_{F_s^{\dagger}(w_t)}\|{m}_t\|_{F^{-1}(w_t)}
        \\\le&\eta_t\delta_{F_s} C_w\| {m}_t\|_{F_s^{\dagger}(w_t)}^2+\eta_t\delta_{F_s} C_w\| {m}_t\|_{F_s^{\dagger}(w_t)}\|{m}_t\|_{F_f^{\dagger}(w_t)}
\\\le& \frac{3}{2}\eta_t\delta_{F_s} C_w \| {m}_t\|_{F_s^{\dagger}(w_t)}^2+ \frac{1}{2}\eta_t\delta_{F_s} C_w  \| {m}_t\|_{F_f^{\dagger}(w_t)}^2,
    \end{aligned}
\end{equation}
where the second inequality uses $\| {m}_t\|_{F^{-1}(w_t)}\le \| {m}_t\|_{F_s^{\dagger}(w_t)}+\| {m}_t\|_{F_f^{\dagger}(w_t)}$ and the
last inequality uses Cauchy inequality. 

Directly computing  the time derivative yields
\begin{equation}
    \begin{aligned}
        \frac{d}{dt}V_t+ \iota_t V_t=&   \Delta_1+\Delta_2 +\frac{\eta_t}{2}\Delta_3
        -\alpha \eta_t\|m_t\|_{F_s^{\dagger}(w_t)}^2-\beta_1\eta_t \| \nabla f(w_t)\|_{F_s^{\dagger}(w_t)}^2
         + \iota_t\left(f(w_t)-f(\Phi(w_t))\right)+\frac{\eta_t \iota_t}{2}\|m_t\|_{F_s^{\dagger}(w_t)}^2
        \\\le& \left(-\beta_1 \eta_t + \frac{\iota_t}{2\gamma } +2\epsilon_\Phi(\frac{1}{\alpha} +\beta_1)\eta_t\right)\| \nabla f(w_t)\|_{F_s^{\dagger}(w_t)}^2
+\left(-\alpha\eta_t + \frac{\eta_t\iota_t}{2} +  \frac{1}{2}\epsilon_\Phi\alpha\eta_t  +\eta_t^2\delta_{F_s}C_w\right)\|m_t \|_{F_s^{\dagger}(w_t)}^2
\\&+ \epsilon_\Phi(\frac{1}{\alpha}+\frac{\beta_1}{2}+\chi \beta_2+\frac{\chi^2\beta_2^2\alpha}{2})\eta_t\|\nabla f(w_t)\|_{F_f^{\dagger}(w_t)}^2+(\frac{1}{4}  \eta_t^2\delta_{F_s}  C_w +\frac{\epsilon_\Phi \alpha}{2} \eta_t\chi^2) \| {m}_t\|_{F_f^{\dagger}(w_t)}^2 ,
    \end{aligned}
\end{equation}

where the inequality uses $\dot{\eta}_t\le 0$, \eqref{f-inf-f-grad-norm}, \eqref{delta1_bound}  and \eqref{delta3_bound}. 

Now set $\epsilon_u=\epsilon_\Phi(\frac{1}{\alpha}+\frac{\beta_1}{2}+\chi \beta_2+\frac{\chi^2\beta_2^2\alpha}{2})\eta_t +(\frac{1}{4}  \eta_t^2\delta_{F_s}  C_w +\frac{\epsilon_\Phi \alpha}{2} \eta_t\chi^2)  $ and $U_t= \|\nabla f(w_t)\|_{F_f^{\dagger}(w_t)}^2+ \|m_t\|_{F_f^{\dagger}(w_t)}^2$.   

Then by conditions in \eqref{conditions-exp-decay-sharp}, 
we have 
\begin{equation}\label{dv-u-1}
    \frac{d}{dt}V_t+\iota_t V_t\le  \epsilon_u U_t.
\end{equation}

Integrating $\frac{d}{dt}(e^{\int_0^t\iota_sds}V_t)$ yields
\begin{equation}
e^{\int_0^t\iota_sds}V_t\le  V_0+ \epsilon_u\int_0^t e^{\int_0^s\iota_\tau d\tau}  U_s ds.
\end{equation}

By \eqref{w-Phiw-f-inf-f} in Lemma \ref{pl-condition-lemma}, we obtain

 \begin{equation}
 \begin{aligned}
     \|w_t-\Phi(w_t)\|_2^2+ \eta_t\|m_t\|_{F_s^\dagger(w_t)}^2\lesssim &\exp\left(-{ \int_0^t\iota_s ds}\right)
     \\+ \varepsilon& \int_0^t \exp\left({ -\int_s^t\iota_\tau d\tau}\right)   (\|\nabla f(w_s)\|_{F_f^{\dagger}(w_s)}^2+ \|m_s\|_{F_f^{\dagger}(w_s)}^2) ds .
      \end{aligned}
 \end{equation}

If we further take $\alpha=2\eta_0\lambda_{H_{F,s}}^{\frac{1}{2}}$ and $\beta_1=\lambda_{H_{F,s}}^{-\frac{1}{2}}$, then $\iota_t=\eta_t\lambda_{H_{F,s}}^{\frac{1}{2}}$, and we finally get

 \begin{equation}
 \begin{aligned}
     \|w_t-\Phi(w_t)\|_2^2+\eta_t\|m_t\|_{F_s^\dagger(w_t)}^2\lesssim &\exp\left({-\lambda_{H_{F,s}}^{\frac{1}{2}}\int_0^t\eta_s ds}\right)
     \\+ \varepsilon& \int_0^t \exp\left({-\lambda_{H_{F,s}}^{\frac{1}{2}}\int_s^t\eta_\tau d\tau}\right)   (\|\nabla f(w_s)\|_{F_f^{\dagger}(w_s)}^2+ \|m_s\|_{F_f^{\dagger}(w_s)}^2) ds .
      \end{aligned}
 \end{equation}

\end{proof}
\end{lemma}

\begin{lemma}\label{Pm-Pf-diff}
Suppose Assumptions \ref{ass:river} and \ref{ass:rsc} hold. For any $z\in \mathcal{R}$, we have
\begin{equation} 
    \|P_\mathcal{R}(z) v-P_f(z) v\|_{F(z)}\le \frac{\delta_{F_s}}{\lambda_{H_F,s}} \|v\|_{F(z)} .
\end{equation}
    \begin{proof}
        We first compute the closed form of $P_\mathcal{R}$. Let $\mathbf{0}^p$ denote the $p$ dimensional all zero vector. Note that the $p-k$ dimensional manifold $\mathcal{R}$ is defined by $\mathcal{R}=\{w:P_s(w)  \nabla f(w)=\mathbf{0}^p\}=\{w:F_s^\dagger(w)  \nabla f(w)=\mathbf{0}^p\}$. Define  $N(w)=F_s^\dagger(w) \nabla f(w)$. Thus the tangent space is $T_z \mathcal{R}=\{v:\nabla N(z)v=\mathbf{0}^p\}$.  
The projection to $T_zM$ is given by 
        \begin{equation}
            P_\mathcal{R}(z) v=\operatorname{argmin}_{u\in \mathbb{R}^p} \|v-u\|^2_{F(z)},\quad \text{s.t. } \nabla N(z)u=\mathbf{0}^p.
        \end{equation}
Directly computing $\nabla N$ gives $\nabla N(z)u=\nabla F_s^\dagger(w)[u] \nabla f(w) +F_s^\dagger(w) \nabla f^2(w)u$.

Noting that $\|v-u\|^2_{F(z)}=\|F(z)^{ \frac{1}{2}}v-F(z)^{ \frac{1}{2}}u\|^2_{2}$ and $\nabla N(z)u=\nabla N(z)F(z)^{-\frac{1}{2}}F(z)^{\frac{1}{2}}u$,  We have
\begin{equation}\label{express-PM}
  F(z)^{\frac{1}{2}}  P_\mathcal{R}(z) v= (\underbrace{I-F(z)^{-\frac{1}{2}} \nabla N(z)^\top (\nabla N(z)F(z)^{-1} \nabla N(z)^\top)^{\dagger}\nabla N(z)F(z)^{-\frac{1}{2}}  }_{K(z)})F(z)^{\frac{1}{2}} v.
\end{equation}

For notational brevity, we define $J =F(z)^{-\frac{1}{2}}P_s(z) \nabla^2 f(z) P_s(z)F(z)^{-\frac{1}{2}}$, and $\Delta J\in \mathbb{R}^{k\times p}$ satisfying $\Delta J q=F(z)^{\frac{1}{2}}\nabla F_s^\dagger(w)[F(z)^{-\frac{1}{2}}q]^\top \nabla f(z)$ for any $q\in \mathbb{R}^{p}$.  

Note that $K(z)$ is the Euclidean projection to the kernel subspace:
\begin{equation}
\begin{aligned}
\text{ker} \left(\nabla N(z)F(z)^{-\frac{1}{2}}\right)=\text{ker} \left(F(z)^{\frac{1}{2}}\nabla N(z)F(z)^{-\frac{1}{2}}\right)
    =\text{ker} \left( J  + \Delta J \right),
    \end{aligned}
\end{equation} 
and $P_f(z)$ is the Euclidean projection to   $\text{ker} J$. Using Wedin  $\sin(\Theta)$ Theorem (Lemma \ref{Wedin}) to $J$ and $J+ \Delta J  $, we get 
\begin{equation}
    \|K(z)-P_f(z)\|_2\le \frac{ \|\Delta J  \|_2}{\lambda_{H_F,s}}.
\end{equation}
    It follows that
\begin{equation}\label{F-PM-v-F-Pf-v}
\begin{aligned}
     \|F(z)^{\frac{1}{2}} P_\mathcal{R}(z)v-F(z)^{\frac{1}{2}}P_f(z)v\|_2=&\|K(z)F(z)^{\frac{1}{2}}v-P_f(z)F(z)^{\frac{1}{2}}v\|_2 
     \\\le& \frac{ \|\Delta J  \|_2}{\lambda_{H_F,s}}\|F(z)^{\frac{1}{2}}v\|_{2}.
    \end{aligned}
\end{equation}

Note that
\begin{equation}
    \begin{aligned}
        \|\Delta J q\|_2=\| \nabla F_s^\dagger(w)[F(z)^{-\frac{1}{2}}q]^\top \nabla f(z)\|_{F(z)}
        \overset{\eqref{p-deltafinvs-u-v}}{\le} \delta_{F_s}\|q\|_2.
    \end{aligned}
\end{equation}

Substituting it into \eqref{F-PM-v-F-Pf-v} yields
\begin{equation}
    \| P_\mathcal{R}(z)v- P_f(z)v\|_{F(z)}=\|F(z)^{\frac{1}{2}} P_\mathcal{R}(z)v-F(z)^{\frac{1}{2}}P_f(z)v\|_2 \le \frac{\delta_{F_s}}{\lambda_{H_F,s}} \|v\|_{F(z)}.
\end{equation}

    \end{proof}
\end{lemma}

\begin{lemma}
Suppose Assumptions \ref{ass:river} and \ref{ass:rsc} hold. For any $x,y\in U$ and $v\in\mathbb{R}^p$, we have 
\begin{equation}\label{d-Fs-2-norm}
    \|F_s^\dagger(x)v-F_s^\dagger(y)v\|_2\le \frac{\delta_{F_s}}{G} \kappa_F^{\frac{1}{2}} \lambda_{F}^{-\frac{1}{2}}\|x-y\|_2 \|v\|_2,
\end{equation}

\begin{equation}\label{d-Ff-2-norm}
    \|F_f^\dagger(x)v-F_f^\dagger(y)v\|_2\le \frac{\delta_{F_s}}{G} \kappa_F^{\frac{1}{2}} \lambda_{F}^{-\frac{1}{2}}\|x-y\|_2 \|v\|_2,
\end{equation}
\begin{equation}\label{Pf-lip}
    \begin{aligned}
 \|P_f(x)v-P_f(y)v\|_2\le  \frac{\delta}{G} \kappa_F^{\frac{1}{2}} \lambda_{F}^{-\frac{1}{2}} \|x-y\|_2\|v\|_2   .    
    \end{aligned}
\end{equation}

    \begin{proof}
    By Assumption \ref{ass:rsc},  
\begin{equation}
    \begin{aligned}
 \|F_s^\dagger(x)v-F_s^\dagger(y)v\|_2=\|\int_0^1 \nabla F_s^\dagger(x+(y-x)t)[x-y]vdt \|_2\overset{\eqref{p-deltafinvs-u-v}}{\le} \frac{\delta_{F_s}}{G} \kappa_F^{\frac{1}{2}} \lambda_{F}^{-\frac{1}{2}} \|x-y\|_2\|v\|_2       
    \end{aligned}
\end{equation}
We can similarly prove \eqref{d-Ff-2-norm} by \eqref{p-deltaff-u-v}. On the other hand, using $\nabla P_f+\nabla P_s=0$, we get \begin{equation}
    \begin{aligned}
 \|P_f(x)v-P_f(y)v\|_2=\|\int_0^1 \nabla P_s(x+(y-x)t)[x-y]vdt \|_2 \le  \frac{\delta}{G} \kappa_F^{\frac{1}{2}} \lambda_{F}^{-\frac{1}{2}} \|x-y\|_2\|v\|_2   .    
    \end{aligned}
\end{equation}

    \end{proof}
\end{lemma}

\begin{lemma}
Suppose Assumptions \ref{ass:river} and \ref{ass:rsc} hold. 
The dynamics of $z_t=\Phi(w_t)$ satisfies
\begin{align}\label{dyn-z-lemma}
\begin{cases}
    &\dot{z}_t =- \eta_t \chi P_\mathcal{R}(z_t)F(z_t)^{-1} (s_t+\beta_2\nabla f(z_t))+\eta_t \epsilon_{z,t} , \\
    &\dot{s}_t=-\alpha s_t +\nabla f(z_t), 
\end{cases}
\end{align}
where $s_0=0$ and 
\begin{equation}\label{loose-bound-eps-z-t}
    \|\epsilon_{z,t}\|_2\lesssim\int_0^t e^{-\alpha(t-s)} \|w_s-\Phi(w_s)\|_2ds+\|w_t-\Phi(w_t)\|_2+\varepsilon \|m_t\|_{F_s^\dagger(w_t)}+\varepsilon \| F_f^\dagger(z_t)(s_t+\beta_2\nabla f(z_t))\|_2 
\end{equation}

    \begin{proof}
By \eqref{nabla-phi-Fs-nabla-0}, the dynamics of $z_t=\Phi(w_t)$ satisfies

\begin{equation}
\begin{aligned}\label{dot-z-1}
        \dot{z}_t=-\eta_t\nabla \Phi(w_t) \left(F_s^\dagger(w_t) m_t +\chi F_f^\dagger(w_t)(m_t+\beta_2\nabla f(w_t))\right).
    \end{aligned}
\end{equation}

Define $s_t$ satisfying $\dot{s}_t=-\alpha s_t+\nabla f(z_t)$ and $s_0=0$. Then 
\begin{equation}\label{st-mt-bound}
    \|s_t-m_t\|_2=\left\|\int_0^t e^{-\alpha(t-s)} (\nabla f(z_s)-\nabla f(w_s))ds\right\|_2\le L\int_0^t e^{-\alpha(t-s)} \|w_s-\Phi(w_s)\|_2ds
\end{equation}
and
\begin{equation}\label{s-m-2-bound}
    \max\{\|s_t\|_2, \|m_t\|_2\}\le\int_0^t e^{-\alpha(t-s)}\max\{\|\nabla f(w_s)\|_2,\|\nabla f(z_s)\|_2\} ds\le\int_0^t e^{-\alpha(t-s)}\lambda_F^{-\frac{1}{2}} G ds\le \frac{\lambda_F^{-\frac{1}{2}}}{\alpha} G.
\end{equation}
On the other hand,
\begin{equation}\label{s-Fm-Fs}
    \begin{aligned}
\|F_s^\dagger(w_t) m_t-F_s^\dagger(z_t) s_t\|_2\le& \|F_s^\dagger(w_t) (m_t-  s_t)\|_2+\|F_s^\dagger(w_t) s_t-F_s^\dagger(z_t) s_t\|_2
        \\\overset{\eqref{d-Fs-2-norm}}{\le}& \lambda_{F_s}^{-1}\|m_t-s_t\|_2+\frac{\delta_{F_s}}{G} \kappa_F^{\frac{1}{2}} \lambda_{F}^{-\frac{1}{2}}\|s_t\|_2\|w_t-\Phi(w_t)\|_2
        \\\overset{\eqref{s-m-2-bound}}{\le}& \lambda_{F_s}^{-1}\|m_t-s_t\|_2+\frac{ \delta_{ F_s}}{\alpha} \kappa_F^{\frac{1}{2}} \lambda_{F}^{-1}\|w_t-\Phi(w_t)\|_2
    \end{aligned}
\end{equation}
Similarly
\begin{equation}\label{f-Fm-Fs}
    \begin{aligned}
        \|F_f^\dagger(w_t) m_t-F_f^\dagger(z_t) s_t\|_2\le& \|F_f^\dagger(w_t) (m_t-  s_t)\|_2+\|F_f^\dagger(w_t) s_t-F_f^\dagger(z_t) s_t\|_2
        \\\overset{\eqref{d-Ff-2-norm}}{\le}& \lambda_{F}^{-1}\|m_t-s_t\|_2+\frac{\delta_{F_s}}{G} \kappa_F^{\frac{1}{2}} \lambda_{F}^{-\frac{1}{2}}\|s_t\|_2\|w_t-\Phi(w_t)\|_2
        \\\overset{\eqref{s-m-2-bound}}{\le}& \lambda_{F_s}^{-1}\|m_t-s_t\|_2+\frac{ \delta_{ F_s}}{\alpha} \kappa_F^{\frac{1}{2}} \lambda_{F}^{-1}\|w_t-\Phi(w_t)\|_2,
    \end{aligned}
\end{equation}
and
\begin{equation}\label{f-Fm-Fs-g}
    \begin{aligned}
        \|F_f^\dagger(w_t) \nabla
 f(w_t)-F_f^\dagger(z_t) \nabla
 f(z_t)\|_2\le& \|F_f^\dagger(w_t) (\nabla
 f(w_t)-  \nabla
 f(z_t))\|_2+\|F_f^\dagger(w_t) \nabla
 f(z_t)-F_f^\dagger(z_t) \nabla
 f(z_t)\|_2
        \\\overset{\eqref{d-Ff-2-norm}}{\le}& \lambda_{F}^{-1}L\|w_t-\Phi(w_t)\|_2+ \delta_{F_s}   \kappa_F^{\frac{1}{2}} \lambda_{F}^{-1} \|w_t-\Phi(w_t)\|_2.
    \end{aligned}
\end{equation}

  Now we rewrite  \eqref{dot-z-1} as

 \begin{equation}
     \begin{aligned}
\dot{z}_t=&-\eta_tP_\mathcal{R}(z_t) \left(F_s^\dagger(z_t)s_t+\chi F_f^\dagger(z_t)(s_t+\beta_2\nabla f(z_t))\right)+\eta_t\epsilon_{z,t},
     \end{aligned}
 \end{equation}
where 
\begin{equation}
    \begin{aligned}
        \epsilon_{z,t}=&- \underbrace{P_\mathcal{R}(z_t) \left(F_s^\dagger(w_t) m_t-F_s^\dagger(z_t) s_t  \right)}_a
\\&- \underbrace{(\nabla \Phi(w_t) -P_\mathcal{R}(z_t))\left(F_s^\dagger(w_t)m_t+\chi F_f^\dagger(z_t)(s_t+\beta_2\nabla f(z_t))\right)}_b
\\&-   \underbrace{\chi\nabla \Phi(w_t)  \left(  F_f^\dagger(w_t)(m_t+\beta_2\nabla f(w_t))-F_f^\dagger(z_t)(s_t+\beta_2\nabla f(z_t))\right)}_c.
    \end{aligned}
\end{equation}

By \eqref{express-PM}, we have 
\begin{equation}\label{a-bound-flat-dyn}
\begin{aligned}
    \|a\|_2\le& \lambda_F^{-\frac{1}{2}}\|a\|_{F(z_t)}\le \lambda_F^{-\frac{1}{2}}\|F_s^\dagger(w_t) m_t-F_s^\dagger(z_t) s_t\|_{F(z_t)}\le \kappa_F^{\frac{1}{2}}\|F_s^\dagger(w_t) m_t-F_s^\dagger(z_t) s_t\|_2
    \\\overset{\eqref{s-Fm-Fs}}{\le}&\kappa_F^{\frac{1}{2}}\lambda_{F_s}^{-1}\|m_t-s_t\|_2+\kappa_F^{\frac{1}{2}}\frac{\delta_{F_s}}{\alpha} \kappa_F^{\frac{1}{2}} \lambda_{F}^{-1}\|w_t-\Phi(w_t)\|_2.
    \end{aligned}
\end{equation}

For any $v$, 
\begin{equation}
\begin{aligned}
    &\|\nabla \Phi(w_t) v-P_\mathcal{R}(z_t) v\|_2 
    \\\le&  \|\nabla \Phi(w_t) v-P_f(w_t) v\|_2+\| P_f(w_t) v-P_f(z_t) v\|_2+\| P_f(z_t) v-P_\mathcal{R}(z_t) v\|_2
   \\ \le & \lambda_F^{-\frac{1}{2}}\|\nabla \Phi(w_t) v-P_f(w_t) v\|_{F(w_t)}+\| P_f(w_t) v-P_f(z_t) v\|_2+\lambda_F^{-\frac{1}{2}}\| P_f(z_t) v-P_\mathcal{R}(z_t) v\|_{F(w_t)}
   \\ \le & \rho^{\frac{1}{2}}\lambda_F^{-\frac{1}{2}}(\epsilon_\Phi+\frac{\delta_{F_s}}{\lambda_{H_{F,s}}})\|v\|_2+\frac{\delta}{G} \kappa_F^{\frac{1}{2}} \lambda_{F}^{-\frac{1}{2}} \|w_t-\Phi(w_t)\|_2\|v\|_2,
   \end{aligned}
\end{equation}
where the last inequality uses \eqref{Pf-lip}, Lemma \eqref{Pm-Pf-diff} and \eqref{nabla-Phi-P-f-diff}. It follows that

\begin{equation}\label{b-bound-flat-dyn}
    \begin{aligned}
        \|b\|_2\le&\kappa_F^{\frac{1}{2}} (\epsilon_\Phi+\frac{\delta_{F_s}}{\lambda_{H_{F,s}}})\|F_s^\dagger(w_t)m_t+\chi F_f^\dagger(z_t)(s_t+\beta_2\nabla f(z_t))\|_2
        \\&+\frac{\delta}{G} \kappa_F^{\frac{1}{2}} \lambda_{F}^{-\frac{1}{2}} \|w_t-\Phi(w_t)\|_2\|F_s^\dagger(w_t)m_t+\chi F_f^\dagger(z_t)(s_t+\beta_2\nabla f(z_t))\|_2
        \\\overset{\eqref{s-m-2-bound}}{\le}&\kappa_F^{\frac{1}{2}} (\epsilon_\Phi+\frac{\delta_{F_s}}{\lambda_{H_{F,s}}})\|F_s^\dagger(w_t)m_t\|_2+\chi\kappa_F^{\frac{1}{2}} (\epsilon_\Phi+\frac{\delta_{F_s}}{\lambda_{H_{F,s}}})\| F_f^\dagger(z_t)(s_t+\beta_2\nabla f(z_t))\|_2
        \\&+ 2\chi \delta  \kappa_F^{\frac{1}{2}} \lambda_{F}^{-1}(\frac{1}{\alpha}+\beta_2) \|w_t-\Phi(w_t)\|_2.
    \end{aligned}
\end{equation}
By \eqref{nabla-Phi-P-f-diff} we have $\|\nabla \Phi(w_t)\|_2\le \|\nabla \Phi(w_t)-P_f(w_t)\|_2+\| P_f(w_t)\|_2\le 1+\epsilon_\Phi\kappa_F$. Thus

\begin{equation}\label{c-bound-flat-dyn}
    \begin{aligned}
  \|c\|_2\le& \chi (1+\epsilon_\Phi\kappa_F)\|F_f^\dagger(w_t)(m_t+\beta_2\nabla f(w_t))-F_f^\dagger(z_t)(s_t+\beta_2\nabla f(z_t))\|_2 
\\\overset{\eqref{f-Fm-Fs},\eqref{f-Fm-Fs-g}}{\le} & \chi (1+\epsilon_\Phi\kappa_F)\lambda_{F_s}^{-1}\|m_t-s_t\|_2+\chi (1+\epsilon_\Phi\kappa_F)(\frac{ \delta_{ F_s}}{\alpha} \kappa_F^{\frac{1}{2}} \lambda_{F}^{-1}+\beta_2\lambda_{F}^{-1}L+\beta_2 \delta_{F_s}   \kappa_F^{\frac{1}{2}} \lambda_{F}^{-1}) \|w_t-\Phi(w_t)\|_2.    \end{aligned}
\end{equation}

Combining \eqref{a-bound-flat-dyn}, \eqref{b-bound-flat-dyn} and \eqref{c-bound-flat-dyn}, we get 
\begin{equation}
    \|a+b+c\|_2=\mathcal{O}(\|m_t-s_t\|_2+\|w_t-\Phi(w_t)\|_2+\varepsilon \|m_t\|_{F_s^\dagger(w_t)}+\varepsilon \| F_f^\dagger(z_t)(s_t+\beta_2\nabla f(z_t))\|_2)
\end{equation}
Then using \eqref{st-mt-bound} gives the conclusion.
    \end{proof}
\end{lemma}

\begin{lemma}\label{app:thm1-1}
Suppose Assumptions \ref{ass:river} and \ref{ass:rsc} hold. 
 Then there exist a constant $\varepsilon_0>0$ depending on $f$ and $\alpha,\beta,
\{\eta_t\}_t,\chi,\beta_2$ such that for any  $\varepsilon<\varepsilon_0$, we have
\begin{equation}\label{sharp-bound-eps-z-t}
    \|\epsilon_{z,t}\|_2^2\lesssim e^{- \Pi_t}  
     + \varepsilon  \int_0^t \exp\left({  \Pi_\tau- \Pi_t}\right)\left(\| \nabla f(z_\tau)\|_{F_f^\dagger(z_t)}^2+ \| s_\tau\|_{F_f^\dagger(z_t)}^2\right) d\tau  +\varepsilon^2 \| s_t+\beta_2\nabla f(z_t)\|_{F_f^\dagger(z_t)}^2.
\end{equation} 
    \begin{proof}
  Define $\Pi_s=\int_0^s\iota_\tau d\tau$, $D_t=\|w_t-\Phi(w_t)\|_2+\eta_t^{1/2}\|m_t\|_{F_s^\dagger(w_t)}$ and $E_t=\|\nabla f(z_t)\|_{F_f^{\dagger}(z_t)}^2+ \|s_t\|_{F_f^{\dagger}(z_t)}^2$  for notational brevity.  By \eqref{exp-sharp-grad-decay}  we have 

\begin{equation} 
 \begin{aligned}
     D_t^2 \lesssim   &\exp\left(-\Pi_t\right)
      + \varepsilon  \int_0^t \exp\left(\Pi_s-\Pi_t\right)   (\|\nabla f(w_s)-\nabla f(z_s)\|_{F_f^{\dagger}(z_t)}^2+ \|m_s-z_s\|_{F_f^{\dagger}(z_t)}^2+E_s) ds 
      \\ \overset{\eqref{st-mt-bound}}{\lesssim  } &  \exp\left(-\Pi_t\right)
      + \varepsilon   \int_0^t \exp\left(\Pi_s-\Pi_t\right)   (D_s^2+ \int _0^s e^{-\alpha(s-\tau)} D_\tau^2 d\tau+E_s) ds
      \\   \le    &   \exp\left(-\Pi_t\right)
      + \varepsilon   \int_0^t \exp\left(\Pi_s-\Pi_t\right)   (D_s^2+E_s) ds,
      \end{aligned}
 \end{equation}

where we also use $(\int _0^s e^{-\alpha(s-\tau)} D_\tau  d\tau )^2\le\int _0^s e^{-\alpha(s-\tau)} D_\tau^2 d\tau\int _0^s e^{-\alpha(s-\tau)}  d\tau\lesssim \int _0^s e^{-\alpha(s-\tau)} D_\tau^2 d\tau$ in the second inequality, and 
\begin{equation}
    \begin{aligned}
\int_0^t \exp\left(\Pi_s-\Pi_t\right)  \int _0^s e^{-\alpha(s-\tau)} D_\tau^2 d\tau ds=&    \int_0^t \exp \left(\Pi_\tau-\Pi_t\right) D_\tau^2 d\tau \int _\tau^t e^{\alpha \tau -\alpha s}    \exp\underbrace{\left(\Pi_s-\Pi_\tau\right)}_{\le (s-\tau)\iota_\tau}ds 
\\\le & \frac{1}{\alpha-\eta_t}\int_0^t  \exp \left(\Pi_\tau-\Pi_t\right) D_\tau^2 d\tau  .     
    \end{aligned}
\end{equation} in the third inequality. Then by using Gronwall's inequality in Lemma \ref{f-Gronwall} to $e^{\Pi_t}D_t^2$, we get

\begin{equation}\label{Dt-bound} 
 \begin{aligned}
     \exp\left(\Pi_t \right) D_t^2   \lesssim   &   \exp(\mathcal{O}(\varepsilon t))\left(1
      + \varepsilon   \int_0^t   \exp\left(\Pi_s \right)   E_s ds\right).
      \end{aligned}
 \end{equation}

Substituting it into \eqref{st-mt-bound} yields

     \begin{equation}\label{squ-e-alpha-d}
         \begin{aligned}
             &\left(\int_0^t e^{-\alpha(t-s)} \|w_s-\Phi(w_s)\|_2ds\right)^2\\\le &\int_0^t e^{-\alpha(t-s)} ds \int_0^t e^{-\alpha(t-s)} \|w_s-\Phi(w_s)\|_2^2ds
             \\\lesssim&  \int_0^t e^{-\alpha(t-s)}\exp\left(-\Pi_s\right)ds
     + \varepsilon \int_0^t e^{-\alpha(t-s)} ds\int_0^s \exp\left({ \Pi_\tau-\Pi_s}\right)    E_\tau   d\tau.  
         \end{aligned}
     \end{equation}
To bound the right hand side, we first note that for any fixed $0\le\tau<t$, $\frac{\alpha}{2}\ge\frac{\Pi_s-\Pi_\tau}{s-\tau}\ge\frac{\Pi_t-\Pi_\tau}{t-\tau}:=\bar{\iota}$ since  $  \iota_t  $ is non-decreasing. This gives
\begin{equation}\label{t-s-pi-tau-pi-s}
  \int_\tau^t e^{-\alpha(t-s)}\exp\left({  \Pi_\tau- \Pi_s}\right) ds \le  \int_\tau^t e^{-\alpha(t-s)}e^{-\bar{\iota}(s-\tau)} ds\le \frac{1}{\alpha-\bar{\iota}} e^{-\bar{\iota}(t-\tau)}\le\frac{2}{\alpha } e^{\Pi_\tau-\Pi_t}.
\end{equation}
    
Thus for the first term in \eqref{squ-e-alpha-d}, we have 
\begin{equation}\label{e-eps-w-Phiw-1}
    \int_0^t e^{-\alpha(t-s)}\exp\left(- \Pi_s\right)ds\le  \frac{2}{\alpha }e^{- \Pi_t}.
\end{equation}
For the second term in \eqref{squ-e-alpha-d},  
\begin{equation}\label{e-eps-w-Phiw-2}
    \begin{aligned}
         \int_0^t e^{-\alpha(t-s)} ds\int_0^s \exp\left({  \Pi_\tau- \Pi_s}\right)    E_\tau  d\tau
        =& \int_0^t E_\tau d\tau \int_\tau^t e^{-\alpha(t-s)}\exp\left({  \Pi_\tau- \Pi_s}\right)      ds
        \\\overset{\eqref{t-s-pi-tau-pi-s}}{\le}&\frac{2}{\alpha}\int_0^t \exp\left({  \Pi_\tau- \Pi_t}\right)E_\tau d\tau  .
    \end{aligned}
\end{equation}
 
Substituting  \eqref{e-eps-w-Phiw-1} and  \eqref{e-eps-w-Phiw-2} into \eqref{squ-e-alpha-d},  we get
\begin{equation}
         \begin{aligned}
            \left(\int_0^t e^{-\alpha(t-s)} \|w_s-\Phi(w_s)\|_2ds\right)^2\lesssim  e^{- \Pi_t}  
     + \varepsilon  \int_0^t \exp\left({  \Pi_\tau- \Pi_t}\right)E_\tau d\tau  .  
         \end{aligned}
     \end{equation}

Substituting it and \eqref{Dt-bound}   into
\eqref{loose-bound-eps-z-t} gives 
 \begin{equation} 
    \|\epsilon_{z,t}\|_2^2\lesssim e^{- \Pi_t}  
     + \varepsilon  \int_0^t \exp\left({  \Pi_\tau- \Pi_t}\right)\left(\| \nabla f(z_\tau)\|_{F_f^\dagger(z_t)}^2+ \| s_\tau\|_{F_f^\dagger(z_t)}^2\right) d\tau  +\varepsilon^2 \| s_t+\beta_2\nabla f(z_t)\|_{F_f^\dagger(z_t)}^2.
\end{equation}

Note that a sufficiently small $\varepsilon$ can satisfy the constraints on $\delta,\delta_F$ in all preceding lemmas. Then, by applying  Lemma \ref{Pm-Pf-diff}, we get the conclusion.

    \end{proof}
\end{lemma}

\begin{lemma}\label{app:thm2-0}
Suppose Assumptions \ref{ass:river} and \ref{ass:rsc} hold, and  

\begin{equation}\label{condition-deltafs-river-flow}
  \delta_{F_s}\le \min\left\{\frac{\alpha\beta_2\lambda_{H_{F,s}}}{2},\frac{\lambda_{H_{F,s}}}{4},\frac{\alpha}{2\chi\eta_{\text{min}}} \left(   \frac{1}{\alpha \kappa_{F}^{1/2}}+\beta_2\right)^{-1} \right\}.
\end{equation} 
Then the trajectory $z_t$ in 
\begin{align}
\begin{cases}
    &\dot{z}_t =- \eta_t \chi P_\mathcal{R}(z_t)F(z_t)^{-1} (s_t+\beta_2\nabla f(z_t))  , \\
    &\dot{s}_t=-\alpha s_t +\nabla f(z_t), 
\end{cases}
\end{align}
starting from $z_t|_{t=0}=z_0$ and $s_0=0$ satisfies 

\begin{equation}
    \int_0^t\eta_t \|P_\mathcal{R}(z_t) F(z_t)^{-1}\nabla f(z_t)\|_{F(z_t)}^2 \le \frac{2(f(z_0)-\inf f)}{\chi\beta_2},\quad \int_0^t\eta_t \|s_t\|_{F(z_t)}^2 \le \frac{2(f(z_0)-\inf f)}{\chi\alpha}.
\end{equation}
    \begin{proof}
        Consider the Lyapunov function $W_t=f(z_t)-\inf f+\frac{1}{2}\eta_t\chi\|s_t\|_{F^{-1}(z_t)}^2  $. Then taking the time derivative   gives
\begin{equation}\label{dWt-1}
\begin{aligned}
\frac{d}{dt}W_t=&\underbrace{-\eta_t\chi \left\langle   (P_\mathcal{R}(z_t)-I)F(z_t)^{-1}\nabla f(z_t),   s_t\right\rangle}_A-\eta_t\chi \beta_2\|P_\mathcal{R}(z_t) F(z_t)^{-1}\nabla f(z_t)\|_{F(z_t)}^2
\\&-\eta_t\chi\alpha \|s_t\|_{F^{-1}(z_t)}^2+\frac{1}{2}\eta_t\chi s_t^\top \nabla F^{-1}(z_t)[\dot{z}_t]s_t+\frac{1}{2}\dot{\eta}_t\chi\|s_t\|_{F^{-1}(z_t)}^2.
\end{aligned} 
\end{equation}

By using $F(z_t)^{-1}\nabla f(z_t)=P_f(z_t)F(z_t)^{-1}\nabla f(z_t)$ and Lemma \ref{Pm-Pf-diff}, we have

\begin{equation}
    \begin{aligned}
 \| (P_\mathcal{R}(z_t)-I)F(z_t)^{-1}\nabla f(z_t) \|_{F(z_t)}^2   
 =&  \|(P_\mathcal{R}(z_t)-P_f(z_t))F(z_t)^{-1}\nabla f(z_t) \|_{F(z_t)} ^2\le  \frac{\delta_{F_s}^2}{\lambda_{H_{F,s}}^2} \| F(z_t)^{-1}\nabla f(z_t) \|_{F(z_t)} ^2.
    \end{aligned}
\end{equation}
Since $\| (P_\mathcal{R}(z_t)-I)F(z_t)^{-1}\nabla f(z_t) \|_{F(z_t)}^2+\|  P_\mathcal{R}  F(z_t)^{-1}\nabla f(z_t) \|_{F(z_t)}^2=\|  F(z_t)^{-1}\nabla f(z_t) \|_{F(z_t)}^2$, we have

\begin{equation}
    \begin{aligned}
 \| (P_\mathcal{R} (z_t)-I)F(z_t)^{-1}\nabla f(z_t) \|_{F(z_t)}^2\le \frac{\frac{\delta_{F_s}^2}{\lambda_{H_{F,s}}^2}}{1-\frac{\delta_{F_s}^2}{\lambda_{H_{F,s}}^2}}\|P_\mathcal{R}(z_t)F(z_t)^{-1}\nabla f(z_t) \|_{F(z_t)} ^2.  
    \end{aligned}
\end{equation}
Thus
\begin{equation}\label{A-bound-Pm-I}
    \begin{aligned}
        A\overset{\eqref{condition-deltafs-river-flow}}{\le}& 2\frac{\delta_{F_s}}{\lambda_{H_{F,s}}}\eta_t\chi  \|P_\mathcal{R}(z_t)F(z_t)^{-1}\nabla f(z_t) \|_{F(z_t)}\|s_t\|_{F^{-1}(z_t)}
        \\\le&  \frac{\delta_{F_s}}{\alpha\lambda_{H_{F,s}}}\eta_t\chi  \|P_\mathcal{R}(z_t)F(z_t)^{-1}\nabla f(z_t) \|_{F(z_t)}^2+\frac{\delta_{F_s}\alpha}{\lambda_{H_{F,s}}}\eta_t\chi\|s_t\|_{F^{-1}(z_t)}^2.
    \end{aligned}
\end{equation}

On the other hand, similar to \eqref{dot-wt-bound}, we have
\begin{equation}\label{dot-zt-bound}
\begin{aligned}
     \|\dot{z}_t\|_{F(z_t)}\le& \eta_t\chi\left( \|s_t\|_{F^{-1}(w_t)}+\beta_2\|\nabla f(z_t)\|_{F^{-1}(z_t)} \right)  \\\le&\eta_t\chi \left( \lambda_{F}^{-\frac{1}{2}}\int _0^t e^{\alpha -(t-s)}\|\nabla f(z_s)\|_{2} ds+\beta_2\|\nabla f(z_t)\|_{F^{-1}(z_t)} \right)  
     \\\le &\eta_t\underbrace{\chi \left(   \frac{1}{\alpha \kappa_{F}^{1/2}}+\beta_2\right)}_{:=C_w} G.
\end{aligned}
\end{equation}
This gives 
\begin{equation}\label{s-t-nabla-F-inv-st}
   \eta_t\chi s_t^\top \nabla F^{-1}(z_t)[\dot{z}_t]s_t\le \eta_t^2\chi\delta_{F_s}C_w \|s_t\|_{F^{-1}(z_t)}^2\overset{\eqref{condition-deltafs-river-flow}}{\le} \frac{1}{2}\eta_t\chi\alpha\|s_t\|_{F^{-1}(z_t)}^2. 
\end{equation}
Substituting \eqref{A-bound-Pm-I} and   \eqref{s-t-nabla-F-inv-st} into 
\eqref{dWt-1} yields

\begin{equation}\label{dwt-2}
\begin{aligned}
\frac{d}{dt}W_t=&-\frac{1}{2}\eta_t\chi \beta_2\|P_\mathcal{R}(z_t) F(z_t)^{-1}\nabla f(z_t)\|_{F(z_t)}^2
 -\frac{1}{2}\eta_t\chi\alpha \|s_t\|_{F^{-1}(z_t)}^2 .
\end{aligned} 
\end{equation}
where we also uses $\dot{\eta}_t\le0$. Finally, integrating both sides of \eqref{dwt-2} from $0$ to $t$ yields the conclusion.

    \end{proof}
\end{lemma}

\subsection{Useful Lemmas}

\begin{lemma}[Cauchy's  Inequality]
For any vector $u,v$ and positive semi-definite matrices $A$ and $F$, by using $u^\top A v\le  \|A^{\frac{1}{2}} u\|_2 \|A^{\frac{1}{2}} v\|_2 \le \|A\|_2 \|u\|_2\|v\|_2$, we have
\begin{equation}
       u^\top A v=u^\top F^{\frac{1}{2}}F^{-\frac{1}{2}}AF^{-\frac{1}{2}} F^{\frac{1}{2}} v\le \|F^{-\frac{1}{2}}AF^{-\frac{1}{2}}\|_2 \|u\|_2\|v\|_2 .
    \end{equation}
    
\end{lemma}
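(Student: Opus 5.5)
The plan is to prove the displayed chain in two moves: first the base inequality for an arbitrary positive semi-definite matrix, then an application of it to a whitened copy of the problem.

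\emph{Base inequality.} For $B\succeq 0$ I write $u^\top B v=(B^{1/2}u)^\top(B^{1/2}v)$. Euclidean Cauchy--Schwarz gives the first bound $\|B^{1/2}u\|_2\|B^{1/2}v\|_2$. Then $\|B^{1/2}x\|_2^2=x^\top Bx\le\|B\|_2\|x\|_2^2$ gives the second bound $\|B\|_2\|u\|_2\|v\|_2$.

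\emph{Whitening.} I insert the identity twice as $I=F^{1/2}F^{-1/2}$, assuming $F\succ 0$ so that $F^{-1/2}$ exists, which is the standing situation where $F$ is a metric. This gives
$$u^\top A v=(F^{1/2}u)^\top\bigl(F^{-1/2}AF^{-1/2}\bigr)(F^{1/2}v).$$
The matrix $B:=F^{-1/2}AF^{-1/2}$ is congruent to $A$, so it is again positive semi-definite. Applying the base inequality to $B$ with the vectors $F^{1/2}u$ and $F^{1/2}v$ yields
$$u^\top Av\le\|F^{-1/2}AF^{-1/2}\|_2\,\|F^{1/2}u\|_2\,\|F^{1/2}v\|_2 .$$

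\emph{Matching the stated norms (main obstacle).} The last step is to pass from $\|F^{1/2}u\|_2\|F^{1/2}v\|_2$ to the Euclidean product $\|u\|_2\|v\|_2$ written in the statement. I would use $\|F^{1/2}x\|_2\le\|F\|_2^{1/2}\|x\|_2$, which gives
$$u^\top Av\le\|F\|_2\,\|F^{-1/2}AF^{-1/2}\|_2\,\|u\|_2\|v\|_2 .$$
This coincides with the displayed bound exactly when $\|F\|_2\le 1$.

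Without such a normalization the literal Euclidean form can fail. For example, take $F=cI$ with $c>1$ and $A=I$: the left side $u^\top u$ exceeds the right side $c^{-1}\|u\|_2^2$.

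So I expect this last step to require making the convention explicit. Either the preconditioner is scaled so that $\|F\|_2\le1$, in which case the statement holds verbatim. Or the norms on the right are understood as those of the whitened vectors, $\|F^{1/2}u\|_2=\|u\|_F$, which is how the inequality is actually used in the earlier lemmas. Under either reading, the first two moves above complete the proof.
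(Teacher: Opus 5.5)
Your argument is correct and matches the paper's one-line justification: write $u^\top Av=(F^{1/2}u)^\top(F^{-1/2}AF^{-1/2})(F^{1/2}v)$, which needs $F\succ0$ as you note, and then apply the base Cauchy--Schwarz bound to the PSD matrix $F^{-1/2}AF^{-1/2}$. You are also right that this yields $\|F^{1/2}u\|_2\|F^{1/2}v\|_2=\|u\|_F\|v\|_F$, so the printed $\|u\|_2\|v\|_2$ is a typo; your example $F=cI$, $A=I$, $c>1$ refutes the literal version, and the $F$-weighted form is what the argument actually establishes.
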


\begin{lemma}\label{tight-bound-Gronwall}
Let $a_t, b_t \ge 0$ be real-valued functions defined for $t \ge 0$. Suppose $a_t$ is continuously differentiable  and $b_t$ is continuous with respect to $t$. If there exists a constant $\lambda > 0$ such that
\begin{equation}\label{aux-lemma-reg}
\frac{d}{dt} a_t \le -\lambda a_t + \sqrt{a_t}\sqrt{b_t} ,
\end{equation}
then for all $t \ge 0$,
\begin{equation}
\sqrt{a_t} \le e^{-\frac{\lambda}{2}t}\sqrt{a_0} + \frac{1}{2} \int_0^t e^{-\frac{\lambda}{2}(t-s)} \sqrt{b_s}   ds.
\end{equation}

\begin{proof}  Let $\delta > 0$ be an arbitrary small constant. Define a perturbed variable $y_t$ as  $y_t = \sqrt{a_t + \delta}$. Since $a_t \ge 0$, we have $y_t \ge \sqrt{\delta} > 0$ for all $t$. Thus, $y_t$ is continuously differentiable. Substituting $\frac{d}{dt}y_t$ into \eqref{aux-lemma-reg} gives
 
\begin{equation}
    \begin{aligned}
        \frac{d}{dt}{y}_t \le& -\frac{\lambda}{2} y_t + \frac{\lambda \delta}{2y_t} + \frac{\sqrt{b_t}}{2} \frac{\sqrt{y_t^2 - \delta}}{y_t}
          \le -\frac{\lambda}{2} y_t + \frac{\sqrt{b_t}}{2} + \frac{\lambda \sqrt{\delta}}{2}.
    \end{aligned}
\end{equation}
    It follows that 
    
    \begin{equation}
        \frac{d}{dt} \left( e^{\frac{\lambda}{2}t} y_t \right) \le e^{\frac{\lambda}{2}t} \left( \frac{\sqrt{b_t}}{2} + \frac{\lambda \sqrt{\delta}}{2} \right).
    \end{equation} 
    Integrating from $0$ to $t$ yields:
\begin{equation}
     y_t \le e^{-\frac{\lambda}{2}t} y_0 + \frac{1}{2} \int_0^t e^{-\frac{\lambda}{2}(t-s)} \sqrt{b_s }\, ds + \frac{\lambda \sqrt{\delta}}{2} \int_0^t e^{-\frac{\lambda}{2}(t-s)} ds.   
    \end{equation}
    Take the limit as $\delta \to 0$, we get the conclusion.
    
    \end{proof}

{  \remark 
It seems natural to divide the inequality \eqref{aux-lemma-reg} by $\sqrt{a_t}$ and consider $\frac{d}{dt}\sqrt{a_t}$.  However, direct differentiation of $\sqrt{a_t}$ is not well-defined when $a_t=0$. To handle this singularity strictly, we introduce a regularization term $\delta$.
}

\end{lemma}

\begin{lemma}[Gronwall's Inequality]\label{f-Gronwall}
Let $u_t,\alpha_t$ and $\beta_t\ge0$ be continuous   functions for $t\ge 0$. If 
\begin{equation}
  u_t\le \alpha_t+\int_0^t\beta_su_s ds,  
\end{equation}  then
\begin{equation}
 u_t\le  \alpha_t+ \int_0^t \alpha_s\beta_s \exp\left(\int_s^t\beta_\tau d\tau\right) ds. 
\end{equation}
Additionally, of $\alpha_t$ is non-decreasing, then
\begin{equation}
 u_t\le  \alpha_t \exp\left( \int_0^t\beta_\tau d\tau \right). 
\end{equation}
\end{lemma}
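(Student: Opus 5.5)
The plan is the classical integrating-factor argument applied to the integral term rather than to $u_t$ itself, since no differentiability of $u_t$ or $\alpha_t$ is assumed. Define the auxiliary function
\begin{equation*}
v_t:=\int_0^t \beta_s u_s\,ds ,
\end{equation*}
which is continuously differentiable because $\beta_t u_t$ is continuous, with $v_0=0$ and $\dot v_t=\beta_t u_t$. Since $\beta_t\ge 0$, multiplying the hypothesis $u_t\le \alpha_t+v_t$ by $\beta_t$ preserves the inequality. This yields the linear differential inequality
\begin{equation*}
\dot v_t\le \alpha_t\beta_t+\beta_t v_t .
\end{equation*}

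Next, introduce the integrating factor $\exp\bigl(-\int_0^t\beta_\tau d\tau\bigr)$. Then
\begin{equation*}
\frac{d}{dt}\Bigl(e^{-\int_0^t\beta_\tau d\tau}v_t\Bigr)=e^{-\int_0^t\beta_\tau d\tau}(\dot v_t-\beta_t v_t)\le e^{-\int_0^t\beta_\tau d\tau}\alpha_t\beta_t .
\end{equation*}
Integrate from $0$ to $t$, use $v_0=0$, and multiply back by $e^{\int_0^t\beta_\tau d\tau}$. This gives
\begin{equation*}
v_t\le\int_0^t\alpha_s\beta_s\exp\Bigl(\int_s^t\beta_\tau d\tau\Bigr)ds .
\end{equation*}
Substituting into $u_t\le\alpha_t+v_t$ gives the first claimed bound. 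The only care needed is that the sign condition $\beta\ge0$ is used exactly once, when multiplying the hypothesis by $\beta_t$. No sign condition on $\alpha_t$ is needed.

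For the second claim, assume $\alpha_t$ is non-decreasing. For $s\le t$ we have $\alpha_s\le\alpha_t$, and the kernel $\beta_s\exp(\int_s^t\beta_\tau d\tau)$ is nonnegative, so $\alpha_s$ can be replaced by $\alpha_t$ inside the integral and pulled out. The remaining integral is exact:
\begin{equation*}
\int_0^t\beta_s e^{\int_s^t\beta_\tau d\tau}ds=\Bigl[-e^{\int_s^t\beta_\tau d\tau}\Bigr]_{s=0}^{s=t}=e^{\int_0^t\beta_\tau d\tau}-1 .
\end{equation*}
Hence $u_t\le\alpha_t+\alpha_t\bigl(e^{\int_0^t\beta_\tau d\tau}-1\bigr)=\alpha_t e^{\int_0^t\beta_\tau d\tau}$. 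One point needs care: replacing $\alpha_s$ by $\alpha_t$ uses only monotonicity together with the nonnegativity of the kernel.

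None of the steps is a real obstacle; the only subtlety is working with $v_t$ rather than $u_t$ to avoid any differentiability assumption on $u_t$.
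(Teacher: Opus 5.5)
The paper quotes this lemma as a standard tool and gives no proof of its own; your argument is the classical integrating-factor proof applied to $v_t=\int_0^t\beta_s u_s\,ds$, and it is correct and complete. In particular, you use $\beta_t\ge 0$ only when multiplying the hypothesis by $\beta_t$, and in the monotone case only the nonnegativity of the kernel $\beta_s\exp(\int_s^t\beta_\tau d\tau)$. So, as you note, neither conclusion needs a sign condition on $\alpha_t$.
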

\begin{lemma}[Wedin  $\sin(\Theta)$ Theorem]\label{Wedin}
    Let $M$ and $N$ be two matrices with singular values $\sigma_1, \dots, \sigma_n$ and $\tilde{\sigma}_1, \dots, \tilde{\sigma}_n$, respectively. 
    Denote by $U_M, V_M$ the first $k$ left and right singular vectors of $M$, and by $U_N, V_N$ those of $N$. Suppose there exists $\alpha > 0$ such that
    \[
         \alpha\leq \min_{ 1 \le i\le k} \sigma_i - \max_{ k+1 \le r\le n} \tilde{\sigma}_r.
    \]
    Then
    \[
        \max \bigl\{ \|U_M- U_N\|_2,\; \|V_M- V_N\|_2 \bigr\}
        \leq \frac{\bigl\| M-N \bigr\|_2}{\alpha}.
    \]
    
\end{lemma}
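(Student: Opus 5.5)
We plan to prove the lemma in the standard sine-angle form. Read literally, $\|U_M-U_N\|_2$ is not invariant under a change of basis within the singular subspaces, so the quantity we bound is the $\sin\Theta$ distance
\[
\|U_MU_M^\top-U_NU_N^\top\|_2=\|U_{N,\perp}^\top U_M\|_2 ,
\]
and likewise for $V$. We read the left-hand side of the lemma as this quantity, equivalently $U_N$ aligned to $U_M$ by the optimal orthogonal rotation. This is also exactly what the lemma is used for in Lemma \ref{Pm-Pf-diff}: there one compares the Euclidean projections onto $\ker J$ and $\ker(J+\Delta J)$, which are complements of top singular subspaces.

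Let $E=N-M$. Write full SVDs $M=U_M\Sigma_1V_M^\top+U_{M,\perp}\Sigma_2V_{M,\perp}^\top$ and $N=U_N\tilde\Sigma_1V_N^\top+U_{N,\perp}\tilde\Sigma_2V_{N,\perp}^\top$. Here $\Sigma_1$ collects $\sigma_1,\dots,\sigma_k$ and $\tilde\Sigma_2$ collects $\tilde\sigma_{k+1},\dots$. The first step uses the identities $MV_M=U_M\Sigma_1$ and $U_{N,\perp}^\top N=\tilde\Sigma_2V_{N,\perp}^\top$. Multiplying the first by $U_{N,\perp}^\top$ and substituting $M=N-E$ gives
\[
U_{N,\perp}^\top U_M\Sigma_1=\tilde\Sigma_2V_{N,\perp}^\top V_M-U_{N,\perp}^\top EV_M .
\]
The transposed identities $M^\top U_M=V_M\Sigma_1$ and $V_{N,\perp}^\top N^\top=\tilde\Sigma_2^\top U_{N,\perp}^\top$ give the symmetric relation
\[
V_{N,\perp}^\top V_M\Sigma_1=\tilde\Sigma_2^\top U_{N,\perp}^\top U_M-V_{N,\perp}^\top E^\top U_M .
\]

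Next set $x=\|U_{N,\perp}^\top U_M\|_2$ and $y=\|V_{N,\perp}^\top V_M\|_2$. Taking operator norms, using $\|A\Sigma_1\|_2\ge\sigma_k\|A\|_2$, $\|\tilde\Sigma_2\|_2=\tilde\sigma_{k+1}$, and the fact that the outer factors have orthonormal columns, we obtain
\[
\sigma_k x\le\tilde\sigma_{k+1}y+\|E\|_2,\qquad \sigma_k y\le\tilde\sigma_{k+1}x+\|E\|_2 .
\]
Let $\mu=\max\{x,y\}$ and apply whichever inequality attains the maximum. This gives $(\sigma_k-\tilde\sigma_{k+1})\mu\le\|E\|_2$. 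By hypothesis $\sigma_k-\tilde\sigma_{k+1}\ge\alpha>0$, so $\mu\le\|M-N\|_2/\alpha$. Finally, $\mu$ equals the $\sin\Theta$ distance for the left and right subspaces via the projector identity above, which is the claim.

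The main obstacle is not the algebra but the interpretation of $\|U_M-U_N\|_2$. For non-aligned bases this quantity can be as large as $2$ even when $E=0$, for example by flipping the sign of a singular vector. We therefore need to state explicitly that the bound is on projector differences, or on $\min_{O}\|U_M-U_NO\|_2$, which is at most $\sqrt2\sin\Theta$ at the cost of a constant. The projector version is all that Lemma \ref{Pm-Pf-diff} requires.
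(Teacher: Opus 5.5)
The paper gives no proof of this lemma: it is quoted as the classical Wedin $\sin\Theta$ theorem in the ``Useful Lemmas'' list and used once, in Lemma \ref{Pm-Pf-diff}. Your argument is a correct, self-contained proof of the standard operator-norm version. The two residual identities hold. The inequalities $\sigma_k x\le\tilde\sigma_{k+1}y+\|E\|_2$ and $\sigma_k y\le\tilde\sigma_{k+1}x+\|E\|_2$ follow, with $\Sigma_1$ invertible because $\sigma_k\ge\alpha+\tilde\sigma_{k+1}>0$. Taking the larger of $x,y$ then yields exactly the cross gap $\sigma_k-\tilde\sigma_{k+1}\ge\alpha$ that the hypothesis supplies. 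When you write it up, state that $U_{N,\perp},V_{N,\perp}$ come from a full SVD, so that they span the whole orthogonal complements. That is what makes $\|\tilde\Sigma_2\|_2=\tilde\sigma_{k+1}$ and $\|U_MU_M^\top-U_NU_N^\top\|_2=\|U_{N,\perp}^\top U_M\|_2$ valid, and the latter also uses that both subspaces have dimension $k$. What your route buys over the bare citation is a corrected statement. Read literally, the lemma is false, and not only because of sign flips: even after optimal alignment the constant $1$ fails. Take $M=\operatorname{diag}(a,0)$ and let $N$ be its conjugate by a rotation through $\theta\in(0,\pi/2]$. Then $\|M-N\|_2/\alpha=\sin\theta$ with $\alpha=a$, whereas $\min_{O=\pm1}\|U_M-U_NO\|_2=2\sin(\theta/2)>\sin\theta$. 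So the projector ($\sin\Theta$) reading is necessary rather than a convenience. It is also exactly the form Lemma \ref{Pm-Pf-diff} needs: there $\|K(z)-P_f(z)\|_2$ compares orthogonal projectors onto $\ker(J+\Delta J)$ and $\ker J$, and $\|(I-P)-(I-Q)\|_2=\|P-Q\|_2$ turns this into your right-singular-subspace bound.
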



\end{document}